\Crefname{section}{Sec.~}{Sec.~}
\Crefname{table}{Tab.~}{Tabs.~}
\theoremstyle{definition}
\newtheorem{definition}{Definition}
\newtheorem{proposition}{Proposition}
\newtheorem*{proposition*}{Proposition} 
\newtheorem{lemma}{Lemma}
\newtheorem{corollary}{Corollary}
\DeclareMathOperator*{\argmax}{argmax}
\DeclareMathOperator*{\argmin}{argmin}
\newcommand{\squishlisttwo}{
 \begin{list}{$\bullet$}
  { \setlength{\itemsep}{1pt}
     \setlength{\parsep}{0pt}
    \setlength{\topsep}{0pt}
    \setlength{\partopsep}{0pt}
    \setlength{\leftmargin}{1em}
    \setlength{\labelwidth}{1.5em}
    \setlength{\labelsep}{0.5em} } 
}
\newcommand{\squishend}{
  \end{list}  }
\title{Probably Approximate Shapley Fairness with Applications in Machine Learning}
\author{
    Zijian Zhou\textsuperscript{\rm 1}\equalcontrib
    ~Xinyi Xu\textsuperscript{\rm 12}\equalcontrib
    ~Rachael Hwee Ling Sim\textsuperscript{\rm 1}
    ~Chuan Sheng Foo\textsuperscript{\rm 2}
    ~Kian Hsiang Low\textsuperscript{\rm 1}
}
\begin{document}

\maketitle

\begin{abstract}
The \emph{Shapley value} (SV) is adopted in various scenarios in machine learning (ML), including data valuation, agent valuation, and feature attribution, as it satisfies their fairness requirements. 
However, as exact SVs are infeasible to compute in practice, SV estimates are approximated instead. This approximation step raises an important question: \emph{do the SV estimates preserve the fairness guarantees of exact SVs?}
We observe that the fairness guarantees of exact SVs are too restrictive for SV estimates. Thus, we generalise Shapley fairness to \emph{probably approximate Shapley fairness} and propose fidelity score, a metric to measure the variation of SV estimates, that determines how probable the fairness guarantees hold.
Our last theoretical contribution is a novel \emph{greedy active estimation} (GAE) algorithm that will maximise the lowest fidelity score and achieve a better fairness guarantee than the \emph{de facto} Monte-Carlo estimation. We empirically verify GAE outperforms several existing methods in guaranteeing fairness while remaining competitive in estimation accuracy in various ML scenarios using real-world datasets.
\end{abstract}

\section{Introduction} \label{sec:intro}
The \emph{Shapley value} (SV) is widely used in machine learning (ML), to value and price data \citep{Agarwal2019_datamaketplace,ohrimenko2019collaborative,pmlr-v97-ghorbani19c,Ghorbani2020ADF,xu2021vol,Kwon2022,sim2022-survey,zhao2022}, value data contributors in collaborative machine learning (CML) \citep{Hwee2020,Tay_Xu_Foo_Low_2022,Lucas2022,phong2022} and federated learning (FL) \citep{Song2019-profitfl,Wang2020,xu2021gradient} to decide fair rewards, and value features' effects on model predictions for interpretability \citep{Covert2021_improvingkernelSHAP,lundberg2017_kernelSHAP}.
We consider data valuation as our main example. Given a set $N$ of $n$ training examples and a utility function $v$ that maps a set $P$ of training examples to a real-valued utility (e.g., test accuracy of an ML model trained on $P$), the SV $\phi_i$ of the $i$-th example is
\begin{equation}
    \label{eq:shapley}
    \begin{aligned}
       \phi_i = \phi_i(N,v) &\coloneqq 1/(n!) \textstyle \sum_{\pi \in \Pi} \sigma_i(\pi) \\
       \sigma_i(\pi) &\coloneqq v(P_i^{\pi} \cup \{i\}) - v(P_i^{\pi}) 
    \end{aligned}
\end{equation}
where $\pi$ is a permutation of the $N$ training examples and $\Pi$ denotes the set of all possible permutations. The SV for the $i$-th example is its average marginal contribution, $\sigma_i(\pi)$, across all permutations. The marginal contribution $\sigma_i(\pi)$ measures the improvement in utility (e.g., test accuracy) when the $i$-th example is added to the predecessor set $P_i^{\pi}$ containing all training examples $j$ preceding $i$ in $\pi$.

The wide adoption of SV is often justified through its fairness axiomatic properties (recalled in Sec.~\ref{sec:preliminaries}). 
In data valuation, SV is desirable as it ensures that any two training example that improves the ML model performance equally when added to any data subset (i.e., all marginal contributions are equal) are assigned the same value (known as \emph{symmetry}).
However, a key downside of SV is that the exact calculation of $\phi_i$ in Equ.~\eqref{eq:shapley} has exponential time complexity and is intractable when valuing more than hundreds of training examples \citep{pmlr-v97-ghorbani19c,Jia2019nn}.
Existing works address this downside by viewing the SV definition in Equ.~\eqref{eq:shapley} as an expectation over the uniform distribution $U$ over $\Pi$, $\phi_i = \mathbb{E}_{\pi \sim U} [\sigma_i(\pi)]$, and applying Monte Carlo (MC) approximation \citep{Castro2009} with $m_i$ randomly sampled permutations, thus $\phi_i \approx \varphi_i \coloneqq 1/m_i \sum_{t = 1}^{m_i} \sigma_i(\pi^t), \pi^t \sim U$ \citep{pmlr-v97-ghorbani19c,jia2019towards,Song2019-profitfl}.

However, this approximation creates an important issue --- \emph{(i) do the fairness axioms that justify the use of Shaplay value still hold after approximation \cite{Sundararajan2020_axioms_do_not_hold,Rozemberczki2022}?} The answer is unfortunately no as we empirically demonstrate that the symmetry axiom does not hold after approximations in \cref{fig:k_value_dataset}. For two \emph{identical} training examples, their (approximated) SVs used in data pricing are not guaranteed to be equal.
We address this unresolved important issue by proposing the notion of \emph{probably approximate Shapley fairness} for SV estimates $\varphi_i$, for every $i \in N$. As the original fairness axioms are too restrictive, in Sec.~\ref{sec:fairness}, we relax the fairness axioms to \emph{approximate} fairness and consider how they can be satisfied with high \emph{probability}. We introduce a \emph{fidelity score} (FS) to measure the approximation quality of $\varphi_i$ w.r.t.~$\phi_i$ for each example $i$ and provide a fairness guarantee dependent on the worst/lowest fidelity score across all training examples.

In data valuation, computing the marginal contribution of an $i \in N$ in any sampled permutation is expensive as it involves training model(s).
\emph{(ii) How do we achieve probably approximate Shapley fairness with the lowest budget (number of samples) of marginal contribution evaluations?}  While it is difficult to achieve the highest approximate fairness possible, we show that we can instead achieve a high fairness \emph{guarantee} (i.e., a lower bound to probably approximate Shapley fairness) via the insight that the budget need not be equally spent on all training examples. For example, if the marginal contribution of example $i$, $\sigma_i(\pi^t)$ in many sampled permutations are constant, we should instead evaluate that of example $j$ with widely varying $\sigma_j(\pi^t)$ sampled so far.
Our method may use a different number of marginal contribution samples, $m_i$, for each example $i$ and greedily improve the current worst fidelity score across all training examples.

Lastly, to improve the fidelity score, we novelly \emph{use previous samples, i.e., evaluated marginal contribution results, to influence and guide the current sampling of permutations}.
In existing MC methods \citep{Owen1972,Okhrati2020,Mitchell2021} the sampling distribution that generates $\pi$ is pre-determined and fixed across iterations.
In our work, we use importance sampling to generate $\pi$ for $\varphi_i$ as it supports using an alternative proposal sampling distribution, $q_i(\pi)$. For any example $i$, we constrain permutations $\pi$ with predecessor set of equal size to have the same probability $q_i(\pi)$. The parameters of the sampling distribution $q_i$ are actively updated across iterations and learnt from past results (i.e., tuples of predecessor set size and marginal contribution) via maximum likelihood estimation or a Bayesian approach. By doing so, we reduce the variance of the estimator $\varphi_i$ as compared to standard MC sampling, thus improving the fidelity scores efficiently and the overall fairness (guarantee).

Our specific contributions are summarized as follows:
\squishlisttwo
    \item We propose a \emph{probably approximate Shapley fairness} for SV estimates and exploit an error-aware \emph{fidelity score} to provide a fairness guarantee via a polynomial budget complexity.
    \item We design greedy selection, which by iteratively prioritising $\varphi_i$ with lowest FS, can obtain the optimal minimum FS given a fixed total budget $m$ and improve the fairness guarantee (Proposition~\ref{proposition:chebyshev_fidelity}).
    \item We derive the optimal categorical distribution (intractable) for selecting permutations, and obtain an approximation for active permutation selection. 
    We integrate both greedy and active selection into a novel \emph{greedy active estimation} (GAE) with provably better fairness than MC.
    \item We empirically verify that GAE outperforms existing methods in guaranteeing fairness while remaining competitive in estimation accuracy in training example and dataset valuations, agent valuation (in CML/FL) and feature attribution with real-world datasets.
\squishend

\section{Preliminaries}
\label{sec:preliminaries}

\paragraph{Fairness of SV.} 
SV (Equ.~\eqref{eq:shapley}) is often adopted \citep{Agarwal2019_datamaketplace,Hwee2020,Song2019-profitfl,xu2021gradient,sim2022-survey} for guaranteeing fairness by satisfying several axioms \cite{cgt2011}:

\begin{enumerate}[label*=F\arabic*.,ref=F\arabic*,leftmargin=0.75cm, topsep=1pt]
    \item \label{original_axioms:nullity} Nullity: $(\forall \pi \in \Pi, \sigma_i(\pi) = 0) \implies \phi_i = 0$. 
    \item \label{original_axioms:symmetry} Symmetry: $(\forall C \subseteq N \setminus \{i,j\}, v(C \cup \{i\}) = v(C \cup \{j\})) \implies \phi_i = \phi_j$. 
    \item \label{original_axioms:desirability} Strict desirability~\cite{Bahir1966}: $\forall i\neq j \in N, (\exists B \subseteq N \setminus \{i,j\}, v(B \cup \{i\}) > v(B \cup \{j\})) \wedge (\forall C \subseteq N \setminus \{i, j\}, v(C \cup \{i\}) \geq v(C \cup \{j\})) \implies \phi_i > \phi_j$.
\end{enumerate}
Nullity means if a training example does not result in any performance improvement (marginal contribution is $0$ to any permutation), then its value is $0$. It ensures offering useless data does not give any reward \citep{Hwee2020}).
Symmetry ensures identical values for identical training examples.
Strict desirability implies if $i$ gives a larger performance improvement than $j$ in all possible permutations, then $i$ is strictly more valuable than $j$.
We exclude the efficiency axiom as it does not suit ML use-cases \citep{pmlr-v97-ghorbani19c,bian2021energy,Kwon2022},\footnote{Efficiency requires $\sum_i \phi_i = v(N)$ which is difficult to verify in practice for ML \citep{Kwon2022}; it does not make sense to ``distribute'' the voting power in feature attribution (interpretation of efficiency) to each $i$ \citep{bian2021energy}.} and exclude the linearity \citep{Jia2019nn} and monotonicity \citep{Hwee2020} axioms for fairness analysis as we restrict our consideration to one utility function $v$ \citep{bian2021energy}.
Note that the fairness of exact SV is \emph{binary}: either satisfying all these axioms or not.

\paragraph{Sampling-based estimations.}
These methods typically extend the MC formulation of $\phi_i \approx \varphi_i \coloneqq  \mathbb{E}_{\pi_t \sim U} \left[\sigma_i(\pi_t) \right]$ by changing the sampling distribution of $\pi_t$. 
Importantly, in all these methods, for each sampled permutation $\pi_t$, a \emph{single} marginal contribution $\sigma_i(\pi_t)$ is evaluated and used in $\varphi_i$. Thereafter, we consistently refer to this single evaluation as expending one budget (i.e., one permutation) and the corresponding marginal contribution $\sigma_i(\pi_t)$ as one sample. 

Formally, a sampling-based estimation method estimates $\phi_i$ via the expectation of the random variable $\sigma_i(\pi)$ (which depends on the permutations randomly sampled according to some distribution $q$): $\phi_i \approx \mathbb{E}_q[\sigma_i(\pi)]$. Hence, such methods differ from each other in the distribution $q$ as well as the selection of entry $i \in N$ to evaluate in each iteration. The estimates $\varphi_i$'s can be independent of each other such as MC~\citep{Castro2009}, and stratified sampling~\citep{maleki2013,Castro2017}, or dependent such as antithetic Owen method~\citep{Owen1972,Okhrati2020} and Sobol method~\citep{Mitchell2021}. We provide theoretical results for both scenarios when the estimates $\varphi_i$'s are independent and dependent.

\section{Generalised Fairness for Shapley Value Estimates}
\label{sec:fairness}

\subsection{Fairness Axioms for Shapley Value Estimates}
We propose the following re-axiomatisation of fairness for SV estimates (based on axioms \ref{original_axioms:nullity}-\ref{original_axioms:desirability}) using conditional events, by analysing multiplicative and absolute errors.

\begin{enumerate}[label*=A\arabic*., ref=A\arabic*,leftmargin=0.75cm, topsep=1pt]
    \item \label{axiom:nullity} Nullity: 
    let $E_{A_1}$ be the (conditional) event that for any $i \in N$, conditioned on $\phi_i = 0$, $|\varphi_i| \leq \epsilon_2$. 
    \item \label{axiom:symmetry} Symmetry: let $E_{A_2}$ be the (conditional) event that for all $i \neq j \in N$, conditioned on $\phi_i = \phi_j$, then $|\varphi_i - \varphi_j| \leq (\epsilon_1|\phi_i| + \epsilon_2) + (\epsilon_1|\phi_j| + \epsilon_2)$.
    \item \label{axiom:desirability} Approximate desirability: let $E_{A_3}$ be the (conditional) event that for all $i \neq j \in N$, conditioned on
    $(\exists B \subseteq N \setminus \{i,j\}, v(B \cup \{i\}) > v(B \cup \{j\})) \wedge (\forall C \subseteq N \setminus \{i,j\}, v(C\cup\{i\}) >  v(C\cup\{j\}))$, then $\varphi_i - \varphi_j > -(\epsilon_1|\phi_i| + \epsilon_2) - (\epsilon_1|\phi_j| + \epsilon_2)$.
\end{enumerate}
Thereafter, satisfying \ref{axiom:nullity}-\ref{axiom:desirability} refers to the events $E_{A_1}$-$E_{A_3}$ occurring, respectively. 
To see \ref{axiom:nullity}-\ref{axiom:desirability} generalise the original axioms \ref{original_axioms:nullity}-\ref{original_axioms:desirability}:\footnote{An equivalent formulation for \ref{original_axioms:nullity}-\ref{original_axioms:desirability} using conditional events are in \cref{app:additional-analysis}.}
\ref{axiom:nullity} requires the SV estimate to be small for a true SV with value $0$ (as in \ref{original_axioms:nullity});
\ref{axiom:symmetry} requires the SV estimates for equal SVs be close (generalised from being equal in \ref{original_axioms:symmetry});
\ref{axiom:desirability} requires the ordering of $\varphi_i,\varphi_i$ for some $i,j$ from \ref{original_axioms:desirability} to be preserved up to some error, specifically a multiplicative error $\epsilon_1$ (to account for different $|\phi_i|$) and an absolute error $\epsilon_2$ (to avoid degeneracy from extremely small $|\phi_i|$) where \ref{original_axioms:desirability} has no such error term.
Intuitively, smaller errors $\epsilon_1,\epsilon_2$ mean $\boldsymbol{\varphi}$ are ``closer'' (in fairness) to $\boldsymbol{\phi}$.
In addition, the ratio $\xi \equiv \epsilon_2/\epsilon_1$ denotes the tolerance (to be set by user) of relative (multiplicative) vs.~absolute errors where a larger $\xi$ implies a higher tolerance for absolute error (i.e., favours \ref{axiom:nullity} over \ref{axiom:symmetry} \& \ref{axiom:desirability}) and \textit{vice versa}. 
In contrast to existing works that only consider the concentration results of $\varphi_i$ w.r.t.~$\phi_i$ for each $i$ \citep{Castro2009,Castro2017,maleki2013}, we additionally consider the interaction between $i,j$ to define the following:
\begin{definition}[\bf Probably Approximate Shapley Fairness] \label{definition:fairness-guarantee}
For fixed $\epsilon_1, \epsilon_2$, and some $\delta \in (0,1)$ s.t. $\boldsymbol{\varphi}$ satisfy \ref{axiom:nullity}-\ref{axiom:desirability} \emph{jointly} w.p.~$\geq 1-\delta$, then we call $\boldsymbol{\varphi}$ satisfy $(\epsilon_1,\epsilon_2,\delta)$-Shapley fairness.
\end{definition}

In \cref{definition:fairness-guarantee}, $\boldsymbol{\varphi}$ are probably (w.p.~$\geq 1-\delta$) approximately (w.r.t.~errors $\epsilon_1,\epsilon_2$) Shapley fair.
Hence, given the error requirements $\epsilon_1, \epsilon_2$, a smaller $\delta$ means a better fairness guarantee.
In particular, $\boldsymbol{\phi}$ satisfy the optimal $(0,0,0)$-Shapley fairness.
Despite the appeal, analysing existing estimators w.r.t.~\cref{definition:fairness-guarantee} is difficult because most do not come with a direct variance result (elaborated in \cref{app:additional-analysis}). The only expect is the MC method \citep{Castro2009,maleki2013} which we analyse in \cref{sec:active_valuation}.

\subsection{Fairness Guarantee via the Fidelity Score of Shapley Value Estimates}
Inspired by the concept of \emph{signal-to-noise ratio} (SNR) widely adopted in optics~\cite{snr_optic} and imaging~\cite{rose2012vision}, we design a metric for the variation of $\varphi_i$, named fidelity score, expressed in \cref{def:afs}.
\begin{definition}[\bf Fidelity Score] 
The fidelity score (FS) of an (unbiased) SV estimate $\varphi_i$ for $\phi_i$ is defined as
$ f_i \equiv \text{FS}(\varphi_i, \epsilon_1, \epsilon_2) \coloneqq (|\phi_i| + \epsilon_2/\epsilon_1 )^2 /\mathbb{V}[\varphi_i] $
where $\mathbb{V}[\varphi_i]$ is the variance of $\varphi_i$.\footnote{Our implementation estimates $|\phi_i|$ and $\mathbb{V}[\varphi_i]$ to obtain $f_i$.}
\label{def:afs}
\end{definition}
The FS exactly matches the SNR in $\varphi_i$ when $\epsilon_2=0$.\footnote{For a fixed FS, an example $i$ with a larger SV (signal) can contain more noise.} A higher $f_i$ implies a more accurate $\varphi_i$. $f_i$ is higher when the variance $\mathbb{V}[\varphi_i]$ is small. This occurs when the marginal contributions, $\sigma_i(\pi)$, are close for all permutations $\pi \in \Pi$ or when the number of samples $m_i$ used to compute $\varphi_i$ is large. As $m_i \to \infty$, $\mathbb{V}[\varphi_i] \to 0$ and $f_i\to \infty$. 
Additionally, we introduce an error-aware term $\xi \coloneqq \epsilon_2/\epsilon_1$ in the FS numerator to better capture estimation errors and allow examples with SV of $0$ to have different FSs.

Moreover, we empirically verify that $f_i$ is a good reflection of the approximation quality and analyze the impact of $\xi$.
For the former, we compared $f_i$ vs.~absolute percentage error (APE) of $\varphi_i \coloneqq |(\varphi_i - \phi_i) / \phi_i|$.\footnote{
We fit a learner on $50$ randomly selected training examples from breast cancer~\citep{breast_cancer} and MNIST~\citep{cnn_mnist} (diabetes~\citep{Efron_2004}) datasets and set test accuracy (negative mean squared error) as $v$ using data Shapley \citep{pmlr-v97-ghorbani19c} with $m_i =50$.
$f_i$ is approximated using sample evaluations.
Additional details and results are in \cref{app:experiments}. 
}
\cref{fig:ri_mape} shows the negative correlation between FS ($f_i$) and estimation error ($\text{APE}$) (note that we plot $\text{APE}^{-0.5}$).
This will justify our proposal to prioritize improving the estimate $\varphi_i$ with the lowest $f_i$ in Sec~\ref{sec:greedy_selection}.
For the latter,
we compare the correlation between FS, $f_i$ and the inverse estimation error, $\text{APE}^{-0.5}$, for different values of $\xi$ in Tab.~\ref{tab:xi_ri_ape} and find that $\xi=1\text{e-}3$ leads to a strong positive correlation and is a sweet spot (second best in both settings).
\begin{figure}[!ht]
    \centering
    \includegraphics[width=0.49\linewidth]{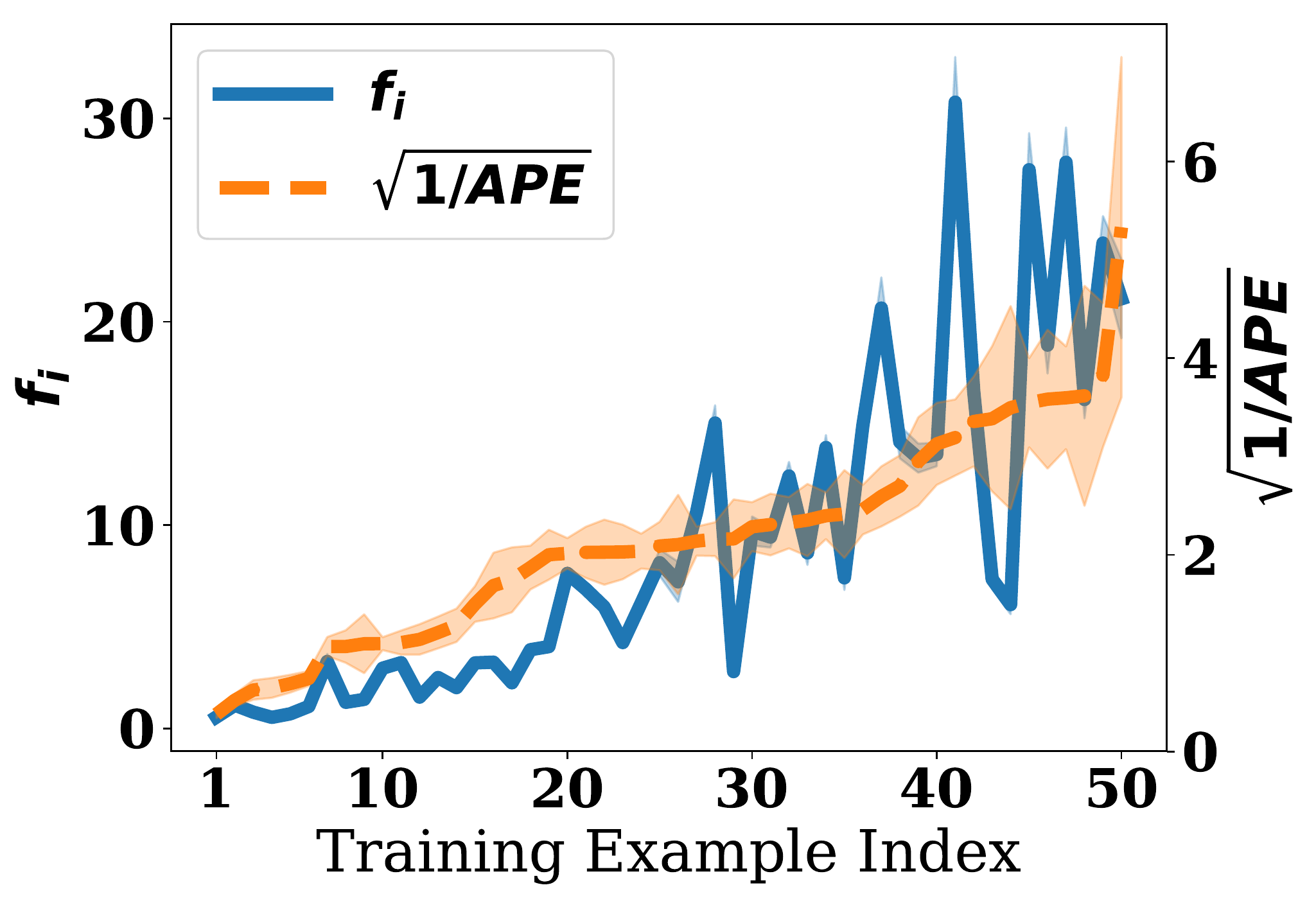}
    \includegraphics[width=0.49\linewidth]{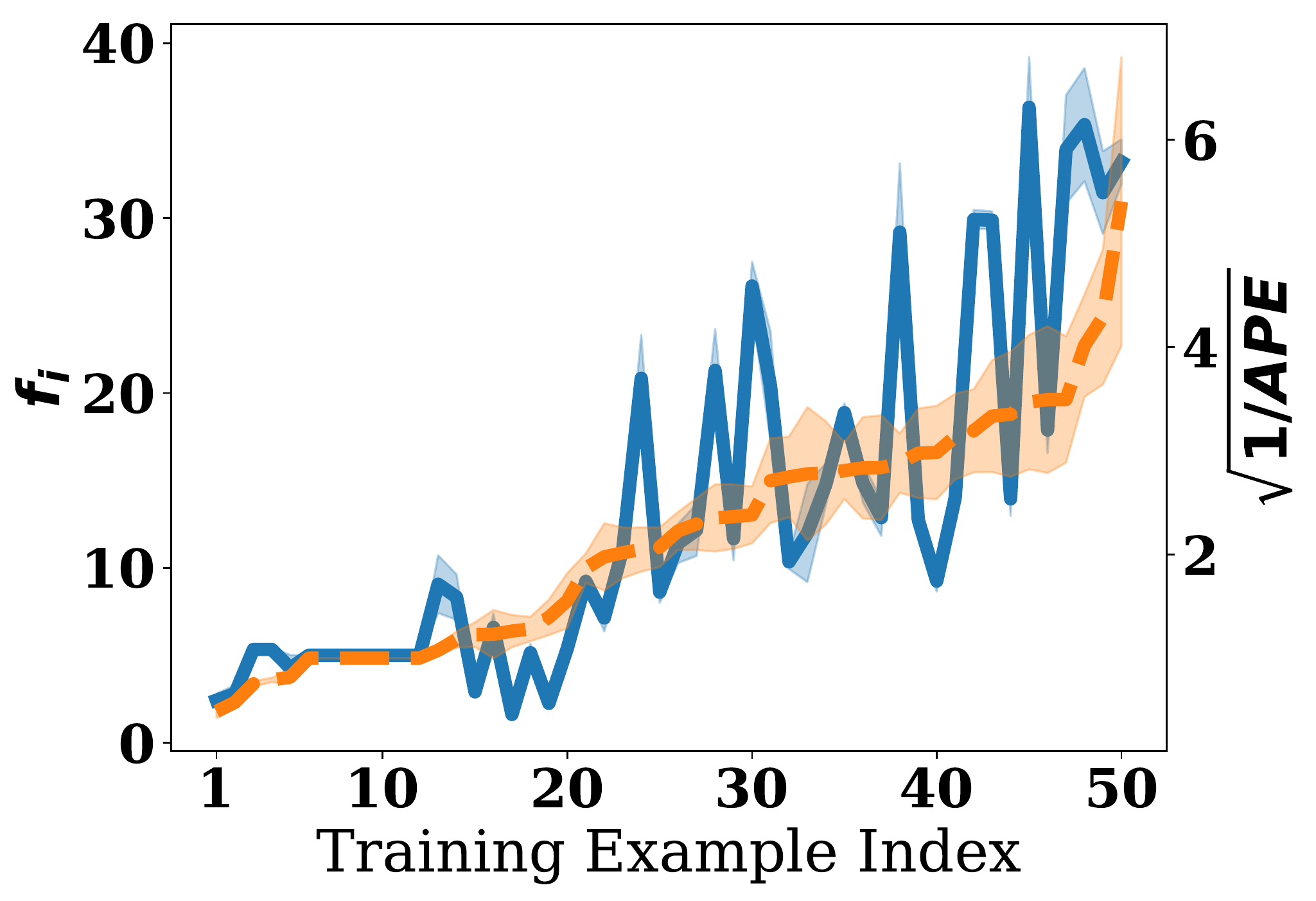}
    \caption{Average (standard error) of $f_i,\text{APE}^{-0.5}$ over $20$ trials (sorted in increasing order of $\text{APE}^{-0.5}$ of $50$ training examples) with $\xi=1\text{e-}3$.
    Left (right) is logistic regression ($k$-nearest neighbors) using breast cancer (MNIST) dataset.}
    \label{fig:ri_mape}
\end{figure}
\vspace{-0.5em}
\begin{table}[!ht]
	\centering
    \setlength{\tabcolsep}{2pt}
    \caption{ Spearman coefficient between $f_i$ and $\text{APE}^{-0.5}$. Average (standard error) over $20$ independent trials. Higher is better.
    }
    \resizebox{0.8\linewidth}{!}{
    \footnotesize
    \begin{tabular}{lllrrrr}
    \toprule
    $\xi$ & breast cancer (logistic) &  diabetes (ridge) \\
    \midrule
    $1$e-$5$ &  \textbf{6.89e-01} (1.41e-02) &  5.49e-01 (2.24e-02) \\
    $1$e-$4$ &  \textbf{6.89e-01} (1.41e-02) &  5.48e-01 (2.24e-02) \\
    $1$e-$3$ &  6.78e-01 (1.50e-02) & 5.53e-01 (2.20e-02) \\
    $1$e-$2$ &  -1.94e-01 (2.59e-02) & \textbf{5.67e-01} (2.32e-02)  \\
    $1$e-$1$ &  -1.47e-01 (2.60e-03) &   5.00e-01 (2.55e-02) \\
    \bottomrule
    \end{tabular}
    }
    \label{tab:xi_ri_ape}
\end{table}

\cref{def:afs} allows us to leverage the Chebyshev's inequality to derive a fairness guarantee, through the minimum FS $\underline{f} \coloneqq \min_{i\in N} f_i$.
\begin{proposition}
\label{proposition:chebyshev_fidelity}
$\boldsymbol{\varphi}$ satisfy $(\epsilon_1, \epsilon_1\xi, \delta)$-Shapley fairness where $\delta = 1- (1-1/(\epsilon_1^2\underline{f}))^n$ if all $\varphi_i$'s are independent; otherwise ~$\delta =  n/(\epsilon_1^2\underline{f})$.
\end{proposition}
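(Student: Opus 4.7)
The plan is to reduce the joint fairness event to per-example Chebyshev tail bounds on each $\varphi_i$, and then combine these bounds either via independence or the union bound depending on the scenario.

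First I would apply Chebyshev's inequality to each estimator $\varphi_i$ with the threshold $t_i = \epsilon_1|\phi_i| + \epsilon_2 = \epsilon_1(|\phi_i|+\xi)$. Unbiasedness of $\varphi_i$ gives
\[
\Pr\!\left[|\varphi_i - \phi_i| > \epsilon_1|\phi_i| + \epsilon_2\right] \;\le\; \frac{\mathbb{V}[\varphi_i]}{\epsilon_1^2(|\phi_i|+\xi)^2} \;=\; \frac{1}{\epsilon_1^2 f_i} \;\le\; \frac{1}{\epsilon_1^2 \underline{f}},
\]
where the middle equality is just \cref{def:afs}. Denote by $E_i$ the ``good'' event $\{|\varphi_i - \phi_i| \le \epsilon_1|\phi_i| + \epsilon_2\}$.

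Next I would show deterministically that the intersection $\bigcap_{i\in N} E_i$ implies each of $E_{A_1},E_{A_2},E_{A_3}$. For \ref{axiom:nullity}, conditioning on $\phi_i = 0$ reduces $E_i$ to $|\varphi_i|\le\epsilon_2$. For \ref{axiom:symmetry}, the triangle inequality together with $\phi_i=\phi_j$ yields $|\varphi_i-\varphi_j|\le|\varphi_i-\phi_i|+|\varphi_j-\phi_j|$, which is bounded by the required $(\epsilon_1|\phi_i|+\epsilon_2)+(\epsilon_1|\phi_j|+\epsilon_2)$. For \ref{axiom:desirability}, the hypothesis is strictly stronger than the strict-desirability precondition \ref{original_axioms:desirability}, so the exact SVs satisfy $\phi_i > \phi_j$; adding $\phi_i-\phi_j>0$ into the decomposition
\[
\varphi_i - \varphi_j = (\varphi_i-\phi_i) + (\phi_i-\phi_j) - (\varphi_j - \phi_j)
\]
and bounding the error terms with $E_i\cap E_j$ gives $\varphi_i-\varphi_j > -(\epsilon_1|\phi_i|+\epsilon_2) - (\epsilon_1|\phi_j|+\epsilon_2)$. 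Hence the joint event $\bigcap_{i} E_i$ entails $E_{A_1}\cap E_{A_2}\cap E_{A_3}$, and it suffices to lower bound $\Pr[\bigcap_i E_i]$.

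Finally I would combine the per-example bounds in the two regimes. If the $\varphi_i$'s are independent, then
\[
\Pr\!\left[\textstyle\bigcap_{i\in N} E_i\right] = \prod_{i\in N}\Pr[E_i] \;\ge\; \left(1 - \tfrac{1}{\epsilon_1^2\underline{f}}\right)^{n},
\]
yielding $\delta = 1-(1-1/(\epsilon_1^2\underline{f}))^n$. Otherwise, a union bound on the complements gives $\Pr[\bigcup_i \neg E_i] \le n/(\epsilon_1^2\underline{f})$, yielding the dependent-case $\delta$. The only mildly subtle step is checking that the stronger precondition in \ref{axiom:desirability} indeed activates \ref{original_axioms:desirability} so that $\phi_i>\phi_j$ may be used in the chain of inequalities; everything else is triangle inequality plus Chebyshev, so I do not anticipate a real obstacle beyond being careful that $E_i\cap E_j$ (not the full intersection) is what is used pointwise, which is already implied by $\bigcap_i E_i$.
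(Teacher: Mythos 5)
Your proposal is correct and follows essentially the same route as the paper: a per-example Chebyshev bound at threshold $\epsilon_1|\phi_i|+\epsilon_2$ (the paper's intermediate ``closeness'' event, Lemma~1), a deterministic triangle-inequality argument showing the intersection of these events implies \ref{axiom:nullity}--\ref{axiom:desirability}, and then the product bound under independence or the union bound otherwise. Your treatment of \ref{axiom:desirability} is in fact slightly more explicit than the paper's, since you spell out that the precondition forces $\phi_i>\phi_j$ before bounding $\varphi_i-\varphi_j$, a step the paper leaves implicit.
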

\cref{proposition:chebyshev_fidelity} (its proof is in \cref{app:proofs}) formalises the effects of the variations in $\boldsymbol{\varphi}$ in satisfying probably approximate Shapley fairness where a larger minimum variation (i.e., $\underline{f}$) results in larger $\delta$, hence lower probability of $\boldsymbol{\varphi}$ satisfying the fairness axioms. To see whether $\varphi_i$'s are independent, it is equivalent to checking whether any permutation sampled is used for estimating multiple $\varphi_i$'s (proof in \cref{app:proofs}).

The fidelity score, $f_i$, is sensitive to the number of sampled permutations and marginal contributions, $m_i$, as the variance of the SV estimator uses $m_i$ independent samples to produce $\mathbb{V}[\varphi_i] =  \mathbb{V}[\sigma_i(\pi)] / m_i$.
Therefore, we define an insensitive quantity, the \emph{invariability} of $i$, $r_i$, as the FS with only \emph{one} sample of $\pi$. Hence, $r_i \propto 1/ \mathbb{V}[\sigma_i(\pi)]$ and here $\pi \sim U$. A higher $r_i$ implies that $i$-th marginal contribution is more invariable across different permutations. The fidelity score is product of the invariability and number of samples, i.e., $f_i = m_i r_i$, used in proving the following corollary.

\begin{corollary} \label{corollary:k_axioms}
Using the notations in \cref{proposition:chebyshev_fidelity}, the minimum total budget $m = \sum_{i \in N} m_i$ needed to satisfy $(\epsilon_1, \epsilon_1\xi,\delta)$-Shapley fairness is at most $\mathcal{O}(n \epsilon_1^{-2} (1-(1-\delta)^{1/n})^{-1})$ if $\varphi_i$'s are independent; and $\mathcal{O}(n^2 \epsilon_1^{-2}\delta^{-1})$ o/w.
\end{corollary}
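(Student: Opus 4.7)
}

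The plan is to invert the relationships in Proposition~\ref{proposition:chebyshev_fidelity} to express a sufficient lower bound on $\underline{f}$ as a function of the target $\delta$, then use the identity $f_i = m_i r_i$ to turn a uniform lower bound on the $f_i$'s into per-entry budget requirements, and finally sum over $i \in N$ to obtain the total budget complexity.

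First, I would solve Proposition~\ref{proposition:chebyshev_fidelity} for $\underline{f}$. In the independent case, rearranging $\delta = 1-(1-1/(\epsilon_1^2 \underline{f}))^n$ yields the sufficient condition $\underline{f} \geq 1/(\epsilon_1^2(1-(1-\delta)^{1/n}))$, while in the dependent case the condition $\delta = n/(\epsilon_1^2 \underline{f})$ rearranges to $\underline{f} \geq n/(\epsilon_1^2 \delta)$. Call the corresponding right-hand sides $T_{\text{ind}}$ and $T_{\text{dep}}$. Since $\underline{f} = \min_{i\in N} m_i r_i$, enforcing $\underline{f} \geq T$ is equivalent to requiring $m_i r_i \geq T$ for every $i$, i.e., $m_i \geq \lceil T/r_i\rceil$. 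Moreover, given any total budget $m$, the allocation that maximises $\min_i m_i r_i$ is the one that equalises $m_i r_i$ across $i$ (any imbalance strictly decreases the minimum while keeping $\sum_i m_i$ fixed), so the minimum total budget needed to attain $\underline{f}\geq T$ is exactly $m = T\sum_{i\in N} 1/r_i$ up to an additive $n$ from rounding.

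Next, I would invoke the standing regularity assumption in our ML setting that the marginal contributions $\sigma_i(\pi)$ are bounded, so that each invariability $r_i$ is bounded away from $0$ by a constant; this gives $\sum_{i\in N} 1/r_i = \mathcal{O}(n)$. Substituting $T_{\text{ind}}$ and $T_{\text{dep}}$ yields
\begin{equation*}
m \;=\; \mathcal{O}\!\bigl(n\,\epsilon_1^{-2}\,(1-(1-\delta)^{1/n})^{-1}\bigr)
\end{equation*}
in the independent case and $m = \mathcal{O}(n^2\,\epsilon_1^{-2}\,\delta^{-1})$ in the dependent case, matching the claim.

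The main obstacle I anticipate is the optimality argument for the allocation: although equalising $m_i r_i$ is intuitively the right thing to do, a careful justification is needed that this is indeed the budget-minimising choice for the constraint $\min_i m_i r_i \geq T$, as opposed to just a feasible choice. A short swap argument suffices: if $m_i r_i > m_j r_j$ for some pair, decreasing $m_i$ by one (provided the inequality still holds) decreases total budget without violating the minimum constraint, so at the optimum all $m_i r_i$ are as equal as integrality allows. A secondary subtlety is the $\sum_i 1/r_i = \mathcal{O}(n)$ step, which should be stated explicitly as part of the boundedness assumption on the utility function $v$ (the same assumption implicitly used in prior MC analyses \citep{Castro2009,maleki2013}).
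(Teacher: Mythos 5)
Your proposal follows essentially the same route as the paper's proof: invert Proposition~\ref{proposition:chebyshev_fidelity} to get a threshold on $\underline{f}$, convert it to per-entry requirements $m_i \geq T/r_i$ via $f_i = m_i r_i$, and sum over $i \in N$, with the $r_i$-dependent factor absorbed into the $\mathcal{O}(\cdot)$ (the paper does this implicitly via $\sum_i 1/r_i$ and $\max_i r_i$, whereas you state the boundedness of the $r_i$'s explicitly, and your swap argument is a harmless extra since the constraints $m_i r_i \geq T$ already decouple across $i$). The proposal is correct.
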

The budget complexity is an upper bound in a best/ideal case in the sense that our derivation (in \cref{app:proofs}) uses $r_i$ which cannot be observed in practice. 
While it is not shown to be tight, 
our $\mathcal{O}(n)$ budget complexity for the independent scenario is linear in terms of the number of training examples, and the $\mathcal{O}(n^2)$ budget complexity for the dependent scenario seems necessary for the $\mathcal{O}(n^2)$ pairwise interactions between $\varphi_i$'s.
\cref{sec:active_valuation} describes an estimation method that runs in the budget complexity upper bound given in \cref{corollary:k_axioms}.

\section{Fairness via Greedy Active Estimation}
\label{sec:active_valuation}

To achieve probably approximate Shapley fairness with the lowest budget (number of samples) of marginal contribution evaluations, we propose a novel greedy active estimation (GAE) consisting of two core components - \emph{greedy selection} and \emph{importance sampling}. The first component efficiently split the training budget across examples while the second component influence and guide the sampling of permutations and reduce the variance $\mathbb{V}[\varphi_i]$ for each example $i$. In this section, we outline these components and show that they improve the minimum fidelity scores, $\underline{f}$.
The details and full pseudo-code of the algorithm is given in \cref{app:algorithm}. 

\subsection{Greedy Selection using Pigou-Dalton} \label{sec:greedy_selection}
From \cref{proposition:chebyshev_fidelity}, we can observe that a larger $\underline{f}$ decreases the probability of unfairness.
Hence, to efficiently achieve probably approximate fairness, we should maximize $\underline{f}$ by improving the FS of the training example with the current lowest $f_i$. Our greedy method is outlined in \cref{proposition:optimality_of_greedy} and formally proven in \cref{app:proofs}.

\begin{proposition}[\textit{Informal}]  \label{proposition:optimality_of_greedy}
Given the constraint of evaluating a total of $m$ marginal contributions for all $j \in N$ to its predecessor set when the permutations are sampled from a fixed distribution $q$ (e.g., the uniform distribution).
Then, the minimum FS $\underline{f}$ is maximised by (iteratively) greedily selecting and evaluating a marginal contribution of $i = \argmin_{j \in N} f_j$, until $m$ is exhausted.
\end{proposition}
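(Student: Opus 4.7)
The plan is to reduce the claim to a combinatorial max-min allocation problem and then dispatch it with a short exchange/contradiction argument. Because the sampling distribution $q$ is held fixed, $\mathbb{V}_q[\sigma_i(\pi)]$ depends only on $i$; drawing $m_i$ i.i.d.\ samples under $q$ gives $\mathbb{V}[\varphi_i] = \mathbb{V}_q[\sigma_i(\pi)]/m_i$, so $f_i = m_i\,r_i$ with $r_i \coloneqq (|\phi_i|+\xi)^2/\mathbb{V}_q[\sigma_i(\pi)] > 0$ a strictly positive constant fixed throughout the run (generalising the invariability of the paper to an arbitrary fixed $q$). Maximising $\underline{f}=\min_i f_i$ under the constraint $\sum_i m_i = m$ therefore becomes
\begin{equation*}
    \max_{\mathbf{m}\in\mathbb{Z}_{\geq 0}^n,\ \sum_{i\in N} m_i = m}\ \min_{i\in N}\ m_i\,r_i,
\end{equation*}
a Pigou-Dalton-style problem whose objective is linear in each coordinate.

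For optimality I would argue by contradiction. Let $\mathbf{m}^g$ denote the greedy allocation with minimum value $f^g$ attained at some $i^* \in \argmin_{j} m^g_j r_j$, and suppose some feasible $\mathbf{m}^*$ satisfies $f^* \coloneqq \min_j m^*_j r_j > f^g$. Then $m^*_{i^*} r_{i^*} \geq f^* > f^g = m^g_{i^*} r_{i^*}$, forcing $m^*_{i^*} > m^g_{i^*}$; since the two total budgets agree, there must exist $j^*$ with $m^*_{j^*} < m^g_{j^*}$. Let $t^*$ be the step at which greedy increments coordinate $j^*$ from $m^*_{j^*}$ to $m^*_{j^*}+1$; this step is well defined since the $j^*$-th coordinate starts at $0$, ends at $m^g_{j^*} > m^*_{j^*}$, and moves by exactly $+1$ per increment. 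The greedy selection rule at $t^*$ then yields
\begin{equation*}
    m^*_{j^*}\,r_{j^*} \;=\; m^{(t^*-1)}_{j^*}\,r_{j^*} \;\leq\; m^{(t^*-1)}_{i^*}\,r_{i^*} \;\leq\; m^g_{i^*}\,r_{i^*} \;=\; f^g,
\end{equation*}
where the last inequality uses coordinate-wise monotonicity of greedy. But then $f^* \leq m^*_{j^*} r_{j^*} \leq f^g$, contradicting $f^* > f^g$, so $f^g$ is in fact optimal.

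The main obstacle is conceptual rather than technical: one has to recognise that fixing $q$ collapses $f_i$ into the product form $m_i r_i$, after which the whole proposition reduces to a single Pigou-Dalton exchange on an integer simplex. Ties in $\argmin$ are harmless because the displayed inequality chain uses only $\leq$, so any consistent tie-breaking preserves it; integrality is automatic since greedy adds one unit at a time; and the argument applies irrespective of whether the $\varphi_i$'s are independent or dependent across $i$, since it uses only the per-example variance decomposition. The same reasoning then carries directly over to the importance-sampling setting of GAE as long as, within a given round, the proposal used to draw the $m_i$ samples feeding $\varphi_i$ is held fixed.
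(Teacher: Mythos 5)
Your proof is correct, and it rests on the same key reduction as the paper's: once $q$ is fixed, the invariability $r_i$ is a constant, $f_i = m_i r_i$, and the problem collapses to maximising $\min_i m_i r_i$ over integer budget allocations with $\sum_i m_i = m$. Where you diverge is in how greedy's optimality for this allocation problem is established. The paper argues by budget counting relative to the optimal level $f^*$: any method must give each $i$ at least $e_i = \lceil (f^* - f_i^{(0)})/r_i \rceil$ evaluations, and greedy never spends an evaluation on a coordinate already at or above $f^*$ while some other coordinate is below it, so it spends exactly the minimal $e_i$ per coordinate. Your step-indexed exchange argument --- locating the greedy step at which the under-funded coordinate $j^*$ is incremented past its competitor's count and reading off $m^*_{j^*} r_{j^*} \leq f^g$ from the selection rule plus coordinate-wise monotonicity --- proves the same statement more self-containedly: it handles ties explicitly, does not presuppose that the optimum is exactly attainable by integer counts $e_i$, and makes the contradiction airtight where the paper's ``each $i$ receives exactly $e_i$ evaluations'' is left somewhat informal. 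The only cosmetic mismatch is that the paper's setting starts each coordinate at a bootstrapped $f_i^{(0)}$ rather than at $0$; your argument carries over verbatim as long as greedy and the competing allocation share the same initial counts, since the increment step $t^*$ still exists whenever $m^*_{j^*} < m^g_{j^*}$. Your closing remarks (per-coordinate variance only, independence of the $\varphi_i$'s irrelevant, extension to the importance-sampling proposal within a round) are consistent with how the paper uses the result in \cref{prop:active_greedy_mc}.
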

One direct implication is that greedy selection outperforms equally allocating the budget $m$ among all $N$ (i.e., $m_i = m/n$), which is used by existing methods (\cref{sec:preliminaries}).
Greedy selection will use a lower budget $m_i$ on a training example $i$ with higher invariability $r_i$ (lower variance in marginal contribution across permutations) to meet the same threshold $\underline{f}$. The budget will be mainly devoted to training examples with lower invariability and higher variance instead.

Moreover, improving $\underline{f}$ across all $i \in N$ is in line with the Pigou-Dalton principle (PDP) \citep{dalton1920}:
Suppose we have two divisions of the budget that result in two sets of FSs denoted by $\boldsymbol{f},\boldsymbol{f}'\in \mathbb{R}^{n}$ respectively. 
PDP prefers $\boldsymbol{f}$ to $\boldsymbol{f}'$ if $\exists i,j\in N$ s.t., (a) $\forall k \in N \setminus \{i,j\}, f_k = f_k'$ and (b) $f_i+f_j=f_i'+f_j'$ and (c) $|f_i - f_j| < |f_i' - f_j'|$.
PDP favors a division of the budget that leads to more equitable distribution of FSs.
For SV estimation, it means we are approximately equally sure about all the estimates of the training examples, which can improve the effectiveness of identifying valuable training examples for active learning \citep{Ghorbani2021_batch_active} (\cref{fig:add-remove}) or the potentially noisy/adversarial ones \citep{pmlr-v97-ghorbani19c} (\cref{fig:f1_score} in \cref{app:experiments}).
Theoretically, an inequitable distribution of FSs with some training examples with significantly lower $f_i$ will have a worse fairness guarantee (\cref{proposition:chebyshev_fidelity}).
We show that greedy selection satisfies PDP (\cref{proposition:pe_pdp} in \cref{app:proofs}).

\emph{Remark.}
Although greedy selection maximizes the minimums FS, $\underline{f}$, it is not guaranteed to achieve approximate Shapley fairness with the highest probability as the probability bound in \cref{proposition:chebyshev_fidelity} is not tight (e.g., due to the application of union bound in derivation). Nevertheless, in \cref{sec:experiments}, we empirically demonstrate that greedy selection indeed outperforms other existing methods in achieving probably approximate fairness. It is an appealing future direction to further improve the analysis and provide a tight bound for \cref{proposition:chebyshev_fidelity}.

\subsection{Active Permutation Selection via Importance Sampling} \label{sec:importance_sampling}
To improve the fidelity scores in Definition~\ref{def:afs}, our GAE method uses importance sampling to reduce $\mathbb{V}[\varphi_i]$  for every training example $i$ by setting $\varphi_i \coloneqq 1/m_i \sum_{t = 1}^{m_i} \sigma_i(\pi^t)/(q_i(\pi^t)n!) , \pi^t \sim q_i$. Here, $q_i$ is our proposal distribution over set of all permutations $\Pi$ that assigns probability $q_i(\pi)$ to permutation $\pi$.
Following existing works \cite{Castro2017,Ghorbani2020ADF}, we encode the assumption that any permutation, $\pi$, with the same cardinality for the predecessor set of $i$, $P_i^{\pi}$, should be assigned equal sampling probability. Hence $q_i(\pi) \propto q_i'(|P_i^{\pi}|)$ where the function $q_i'$ maps the predecessor's cardinality to the sampling probability.
We derive the optimal (but intractable) distribution $q_i^*(\pi) \propto q_i'^*(|P_i^{\pi}|) \propto (\mathbb{E}_{\pi \sim U(|P_i^{\pi}|)}[\sigma(\pi)^2])^{0.5}$ (proof in \cref{app:proofs}) and approximate it with a ``learnable'' $q_i'$.
Specifically, we use a categorical distribution over the support $\{0,\ldots,n-1\}$ as $q_i'$ and learn its parameters through maximum likelihood estimation (MLE) on tuples of predecessor set size and marginal contribution, with bootstrapping (i.e., sampling a small amount of permutations using MC, detailed in \cref{app:algorithm}). 
This active selection leads to both theoretical (\cref{prop:active_greedy_mc}) and empirical improvements (see Sec.~\ref{sec:experiments}), whilst ensuring $\mathbb{E}[\varphi_i]=\phi_i$ (\cref{app:additional-analysis}).

\begin{proposition}\label{prop:active_greedy_mc} 
For a fixed budget $m$, denote the minimum FS achieved by greedy selection and active importance sampling, greedy selection only (with uniform sampling), and uniform MC sampling as $\underline{f}_{\text{GAE}},\underline{f}_{\text{greedy}}$ and $\underline{f}_{\text{MC}}$, respectively.
Then, GAE outperforms the other methods as
\[\underline{f}_{\text{GAE}} \geq \underline{f}_{\text{greedy}} \geq \underline{f}_{\text{MC}}\ .
\]
Furthermore, the minimum FSs $(\underline{f}_{\text{GAE}}, \underline{f}_{\text{greedy}})$ are equal iff (a) $\forall i \in N, \mathbb{V}_{\pi \sim q^*_i}[\sigma_i(\pi)/(q_i^*(\pi)n!)] = \mathbb{V}[\sigma_i(\pi)]$ and
the minimum FSs $( \underline{f}_{\text{greedy}}, \underline{f}_{\text{MC}})$ are equal iff (b) every $i \in N$ has the same invariability w.r.t. $q_i^*$, ${r_{i,q_i^*} \coloneqq (|\phi_i| + \xi)^2 / \mathbb{V}_{\pi \sim q^*_i}[\sigma_i(\pi)/(q_i^*(\pi)n!)]}$.\footnote{As before, the invariability is the FS when using only \emph{one} sample, but the definition is updated to match the redefined $\varphi_i$.}

\end{proposition}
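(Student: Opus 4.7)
The plan is to handle the two inequalities separately, then derive the equality conditions.

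For the second inequality $\underline{f}_{\text{greedy}} \geq \underline{f}_{\text{MC}}$, both methods sample permutations from the uniform distribution $U$, so for every $i \in N$ we have $f_i = m_i r_i$ with the same invariability $r_i = (|\phi_i| + \xi)^2/\mathbb{V}_{\pi \sim U}[\sigma_i(\pi)]$. MC uses the allocation $m_i = m/n$, which is a feasible allocation under the budget constraint $\sum_i m_i = m$. By Proposition~\ref{proposition:optimality_of_greedy}, greedy selection maximises $\underline{f}$ over all feasible allocations when the sampling distribution is fixed, so $\underline{f}_{\text{greedy}} \geq \underline{f}_{\text{MC}}$. For the equality condition, I would solve the max-min program $\max_{\boldsymbol{m}} \min_i m_i r_i$ subject to $\sum_i m_i = m$ and deduce that the optimum is $\underline{f}_{\text{greedy}} = m/\sum_j r_j^{-1}$, attained when $m_i r_i$ is constant across $i$. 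MC attains $\underline{f}_{\text{MC}} = (m/n)\min_i r_i$, so equality reduces to the harmonic mean of the $r_j$'s equalling $\min_i r_i$, which (by the harmonic-minimum inequality) happens iff all the invariabilities $r_i$ are equal --- matching condition~(b) under the natural reading that the sampling distribution in the greedy/MC comparison is $U$.

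For the first inequality $\underline{f}_{\text{GAE}} \geq \underline{f}_{\text{greedy}}$, the only change is that GAE samples $\pi \sim q_i^*$ and uses the importance-sampled estimator $\sigma_i(\pi)/(q_i^*(\pi)\, n!)$ for each example $i$. The key step is to invoke the variance-optimality of $q_i^*$: since $q_i^*$ is derived (in Section~\ref{sec:importance_sampling}) as the minimiser of the variance of the importance-sampled estimator within the class of proposals depending only on $|P_i^\pi|$, and the uniform $U$ belongs to this class, we get $\mathbb{V}_{\pi \sim q_i^*}[\sigma_i(\pi)/(q_i^*(\pi)\, n!)] \leq \mathbb{V}_{\pi \sim U}[\sigma_i(\pi)]$ for every $i$. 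Therefore $r_{i,q_i^*} \geq r_i$ for every $i$. Applying the same max-min optimisation as above under both proposal distributions yields $\underline{f}_{\text{GAE}} = m/\sum_j r_{j,q_j^*}^{-1} \geq m/\sum_j r_j^{-1} = \underline{f}_{\text{greedy}}$, with equality iff $r_{i,q_i^*} = r_i$ for every $i$, equivalently condition~(a).

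Finally, I would chain the two inequalities and restate the two equality conditions as in the proposition. I expect the main obstacle to be the discrete/integer nature of the $m_i$'s: the clean closed-form $\underline{f} = m/\sum_j r_j^{-1}$ comes from a continuous relaxation, whereas greedy selection operates one sample at a time. I would address this by noting that the greedy-on-minimum rule is exactly the integer max-min allocation algorithm (it is a standard exchange argument: swapping a unit of budget from an example with strictly larger $f_i$ to the current $\argmin$ cannot decrease $\underline{f}$), so the closed-form still bounds $\underline{f}_{\text{greedy}}$ and $\underline{f}_{\text{GAE}}$ up to unit-granularity terms that vanish in the relevant comparisons. A secondary subtlety is that Proposition~\ref{proposition:optimality_of_greedy} assumed a fixed sampling distribution; applying it separately under $U$ (for greedy) and under $\{q_i^*\}$ (for GAE) is legitimate because in each case the sampling law is pre-specified before allocation begins.
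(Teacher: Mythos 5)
Your proposal is correct and follows essentially the same route as the paper's proof: it rests on the same closed forms $\underline{f}_{\text{MC}}=(m/n)\min_i r_i$ and $\underline{f}_{\text{greedy}}=m/\sum_j r_j^{-1}$ (obtained in the paper under the same balanced-allocation/continuous-relaxation assumption you flag), and on the same per-example variance domination $\mathbb{V}_{\pi\sim q_i^*}[\sigma_i(\pi)/(q_i^*(\pi)n!)]\le\mathbb{V}_{\pi\sim U}[\sigma_i(\pi)]$, which the paper proves explicitly via Cauchy--Schwarz and you invoke via the optimality of $q_i^*$ within the cardinality-based proposal class containing $U$. Your added care about integer granularity and about reading condition~(b) with the invariabilities taken under $U$ is a reasonable refinement of the same argument rather than a different proof.
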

The proof is in \cref{app:proofs}.
In practice, equality conditions (a) and (b) are unlikely to hold. (a) only holds when our cardinality assumption ($i$-th marginal contribution to predecessor set of the same cardinality are similar) is wrong and unhelpful.
A necessary but unrealistic condition for (b) is that for two data points $i,j$ with the same SV, their marginal contributions, i.e., the set $\{\sigma_i(\pi) | \pi \in \Pi\}$, must be equal.

\paragraph{Regularising importance sampling with uniform prior.}
\cref{prop:active_greedy_mc} requires the cardinality assumption so that the importance sampling approach (using $q_i'$ which corresponds to a $q_i$) is effective by ensuring the derived $q_i$ is close to the theoretical optimal $q_i^*$.
However, in practice, there are situations where using the uniform distribution $U$ performs better than $q_i$ obtained via learning $q_i'$ using the cardinality assumption.
First, if the marginal contributions on the \emph{same} cardinality vary significantly (i.e., the cardinality assumption does not hold), then the approximation using $q_i$ is inaccurate.
Second, if the marginal contributions over \emph{different} cardinalities vary minimally, then importance sampling has little benefit as the marginal contributions are approximately equal.
Interestingly, in both cases, using $U$ (treating all cardinalities uniformly) is the mitigation because it avoids using the incorrect inductive bias (i.e., the cardinality assumption).

Therefore, to incorporate $U$, we regularise the learning of $q_i'$ with a uniform Dirichlet prior.
Specifically, from the MLE parameters $\boldsymbol{w} \in \triangle(n)$ of $q_i'^*, i \in N$ obtained via bootstrapping,\footnote{$\triangle(n)$ denotes the probability simplex of dimension $n-1$ and $\boldsymbol{w}$ is derived in \cref{app:additional-analysis}.}
and a uniform/flat Dirichet prior parameterised by $(\alpha+1)\mathbf{1}_n, \alpha \geq 0$, we can obtain the maximum a posteriori (MAP) estimate as $n\boldsymbol{w} + (\alpha+1)\mathbf{1}_n$ (more details and derivations in \cref{app:additional-analysis}).
With this, we unify the frequentist approach of learning $q_i'^*$'s parameters via purely MLE and the Bayesian approach of incorporating both likelihood and prior belief with $\alpha$ controlling how strong our prior belief is. Specifically, when $\alpha=0$, $q_i'$ reduces to MLE and as $\alpha \rightarrow \infty$, $q_i'$ tends to the uniform distribution over cardinality (due to the uniform Dirichlet prior) and thus the corresponding $q_i$ tends to $U$. In other words, if the cardinality assumption is satisfied (not satisfied), we expect the learned $q_i'$ with a small (large) $\alpha$ to perform better (\cref{sec:experiments}).

\section{Experiments} \label{sec:experiments}

We empirically verify that our proposed method can effectively mitigate the adverse situation described in introduction --- the violation of the original symmetry axiom and its negative consequences (e.g., identical data are valued/priced very differently).
We further compare GAE's performance w.r.t.~other axioms and PDP, and in different scenarios with real-world datasets against existing methods.

\paragraph{Specific problem scenarios and comparison baselines.}
As mentioned in Sec.~\ref{sec:preliminaries}, our method is general, so we empirically investigate several specific problem scenarios in ML:
\textbf{P1.} Data point valuation quantifies the relative effects of each training example in improving the learning performance (to remove noisy training examples or actively collect more valuable training examples) \citep{bian2021energy,Ghorbani2020ADF,pmlr-v97-ghorbani19c,Jia2019nn,jia2019towards,Kwon2022}. 
\textbf{P2.} Dataset valuation aims to provide value of a dataset among several datasets (e.g., in a data marketplace) \citep{Agarwal2019_datamaketplace,ohrimenko2019collaborative,xu2021vol}. 
\textbf{P3.} Agent valuation in the CML/FL setting determines the contributions of the agents to design their compensations \citep{Hwee2020,Song2019-profitfl,Wang2020,xu2021gradient}.
\textbf{P4}. Feature attribution studies the relative importance of features in a model's predictions \citep{Covert2021_improvingkernelSHAP,lundberg2017_kernelSHAP}.
We investigate \textbf{P1.} in detail in Sec.~\ref{sec:experiments-P1} and \textbf{P2.} \textbf{P3.} and \textbf{P4.} together in Sec.~\ref{sec:experiments-P234}.
We compare with the following estimation methods (as baselines): MC \citep{Castro2009}, stratified sampling \citep{Castro2017,maleki2013}, multi-linear extension (Owen) \citep{Okhrati2020,Owen1972}, Sobol sequences \citep{Mitchell2021} and improved KernelSHAP (kernel) \citep{Covert2021_improvingkernelSHAP}.\footnote{We follow \citep{Covert2021_improvingkernelSHAP} as it provides an unbiased estimator where the original estimator \citep{lundberg2017_kernelSHAP} is only provably consistent and empirically shown to be less efficient \citep{Covert2021_improvingkernelSHAP}.}

\subsection{Investigating \ref{axiom:nullity}-\ref{axiom:desirability} and PDP within \textbf{P1.}}
\label{sec:experiments-P1}

\paragraph{Settings.}
We fit classifiers (e.g., logistic regression) on different datasets, use test accuracy as $v$ \citep{pmlr-v97-ghorbani19c,jia2019towards}, and adopt Data Shapley \citep[Proposition 2.1]{pmlr-v97-ghorbani19c} as the SV definition.
We randomly select $50$ training examples from a dataset and duplicate each once (i.e., a total of $n=100$ training examples).
Following Sec.~\ref{sec:fairness}, we set $\xi = 1\text{e-}3$.
For bootstrapping (included for all baselines), we uniformly randomly select $20$ permutations and evaluate the marginal contributions for each $i$.
We set a budget $m=2000$ for each baseline. As the true $\boldsymbol{\phi}$ are intractable for $n=100$, $\boldsymbol{\phi}$ is approximated via MC with significantly larger budget ($200$ bootstrapping permutations and $m=30000$, averaged over $10$ independent trials) as ground truth (in order to evaluate estimation errors) \citep{jia2019towards}.

\paragraph{Effect of $\underline{f}$ on symmetry \ref{axiom:symmetry}.}
\cref{proposition:chebyshev_fidelity} implies a higher $\underline{f}$ leads to a better fairness guarantee. We specifically investigate the effect of $\underline{f}$ in mitigating mis-estimations of \emph{identical} training examples and directly verify \ref{axiom:symmetry}.
We consider three evaluation metrics: lowest FS (i.e., $\underline{f}$); the proportion of duplicate pairs $i, i'$ with estimates having a deviation larger than a threshold $t$, i.e., $|\varphi_i - \varphi_{i'}| > t = \epsilon_1|\phi_i| + \xi\epsilon_1$ (as in~\ref{axiom:symmetry}); and the (log of) sum of the deviation ratio $\rho_{i,i'} \coloneqq \max(\varphi_i / \varphi_{i'}, \varphi_{i'} / \varphi_i)$ over all $i,i'$ pairs. 
In \cref{fig:k_value_dataset}a,c, the $\underline{f}$ of our methods increases (improves) as the number of samples, $m$, increases. However, the $\underline{f}$ of all baseline methods are stagnated close to $0$. Thus, as expected, our method significantly outperforms the baselines in obtaining a high $\underline{f}$. As predicted by \cref{proposition:chebyshev_fidelity}, this results in a lower probability and extent of fairness. As compared to baseline methods, our methods consistently have a lower proportion of identical examples that do not satisfy symmetry (\ref{axiom:symmetry}) (\cref{fig:k_value_dataset}b) and smaller deviation ratio of the estimated SV (\cref{fig:k_value_dataset}d).
\begin{figure}[!ht]
    \begin{tabular}{c@{}c}
    \includegraphics[width=0.46\linewidth]{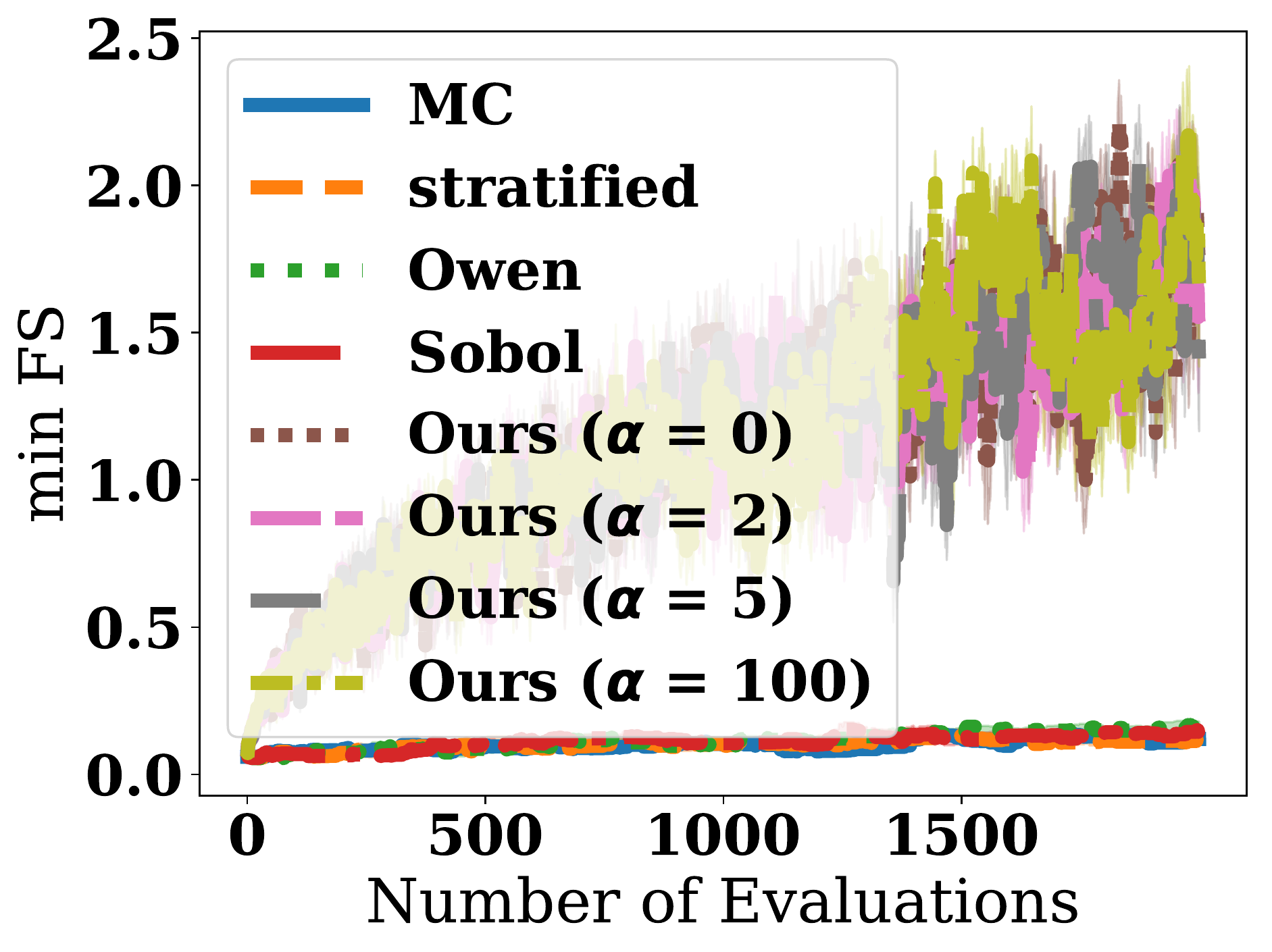} &
    \includegraphics[width=0.46\linewidth]{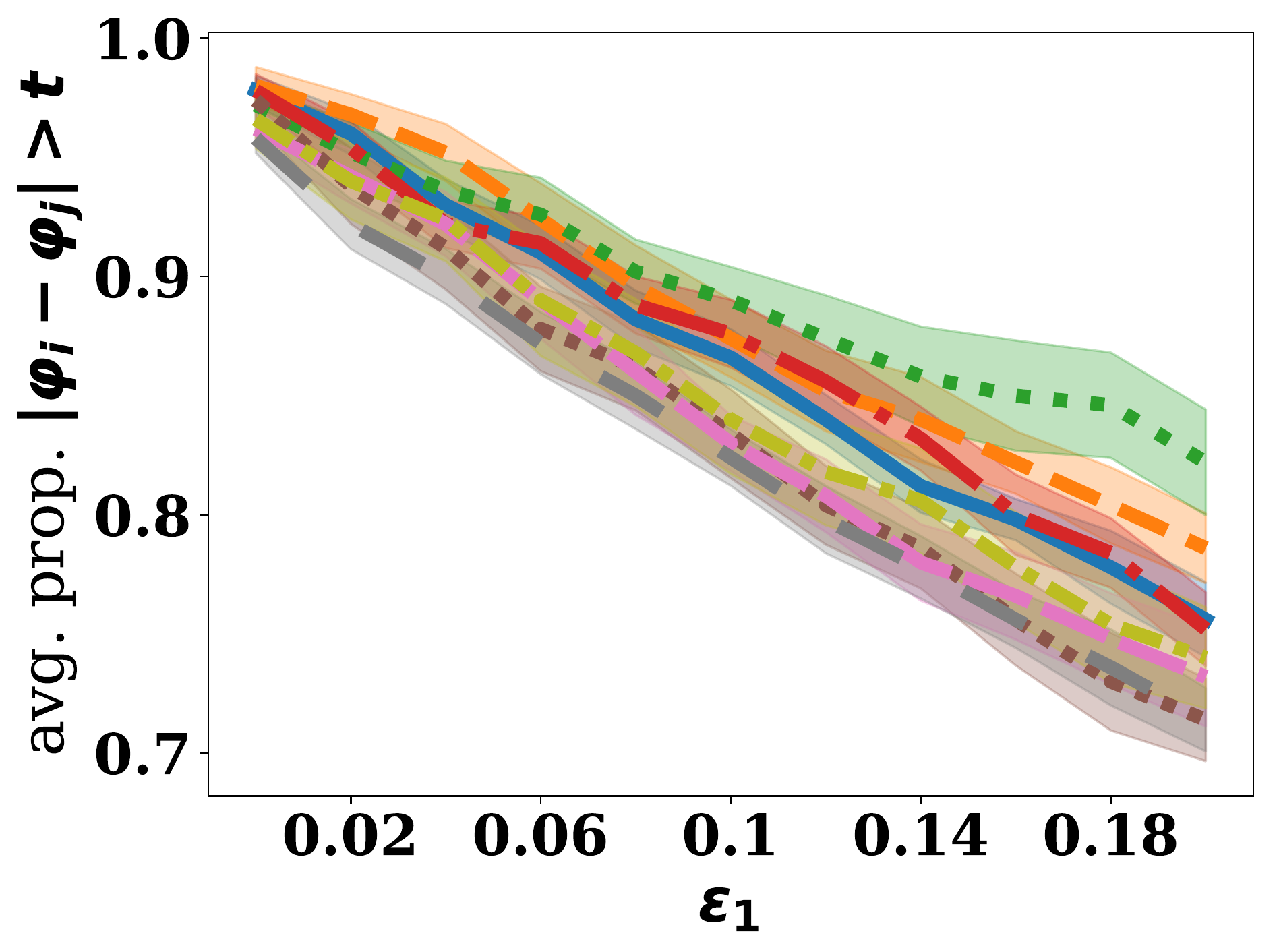} \\
    {(a) $\underline{f}$ vs. $m$} & {(b) \ref{axiom:symmetry} violation ($\%$) vs. $\epsilon_1$} \\
    \includegraphics[width=0.46\linewidth]{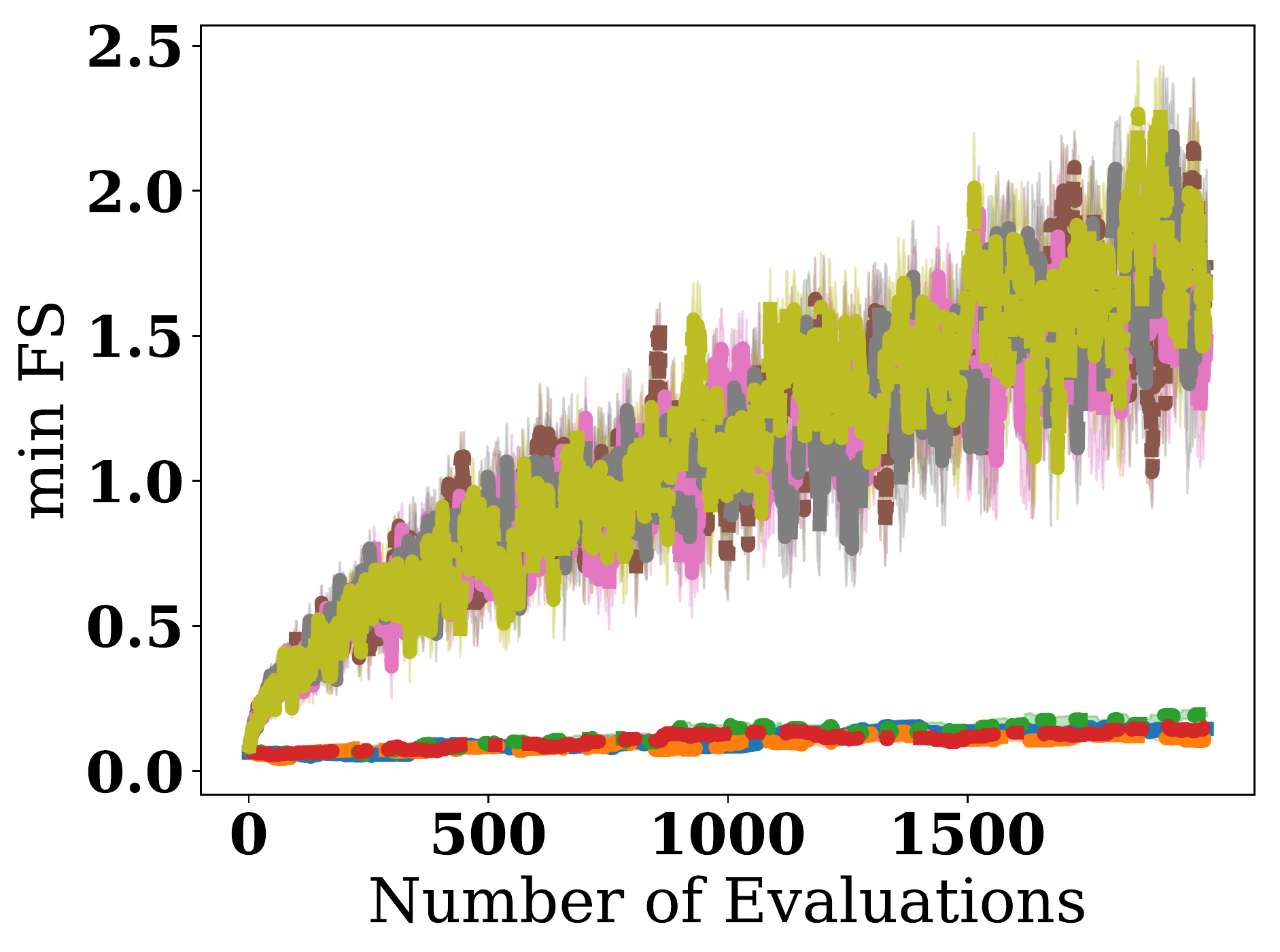} &
    \includegraphics[width=0.48\linewidth]{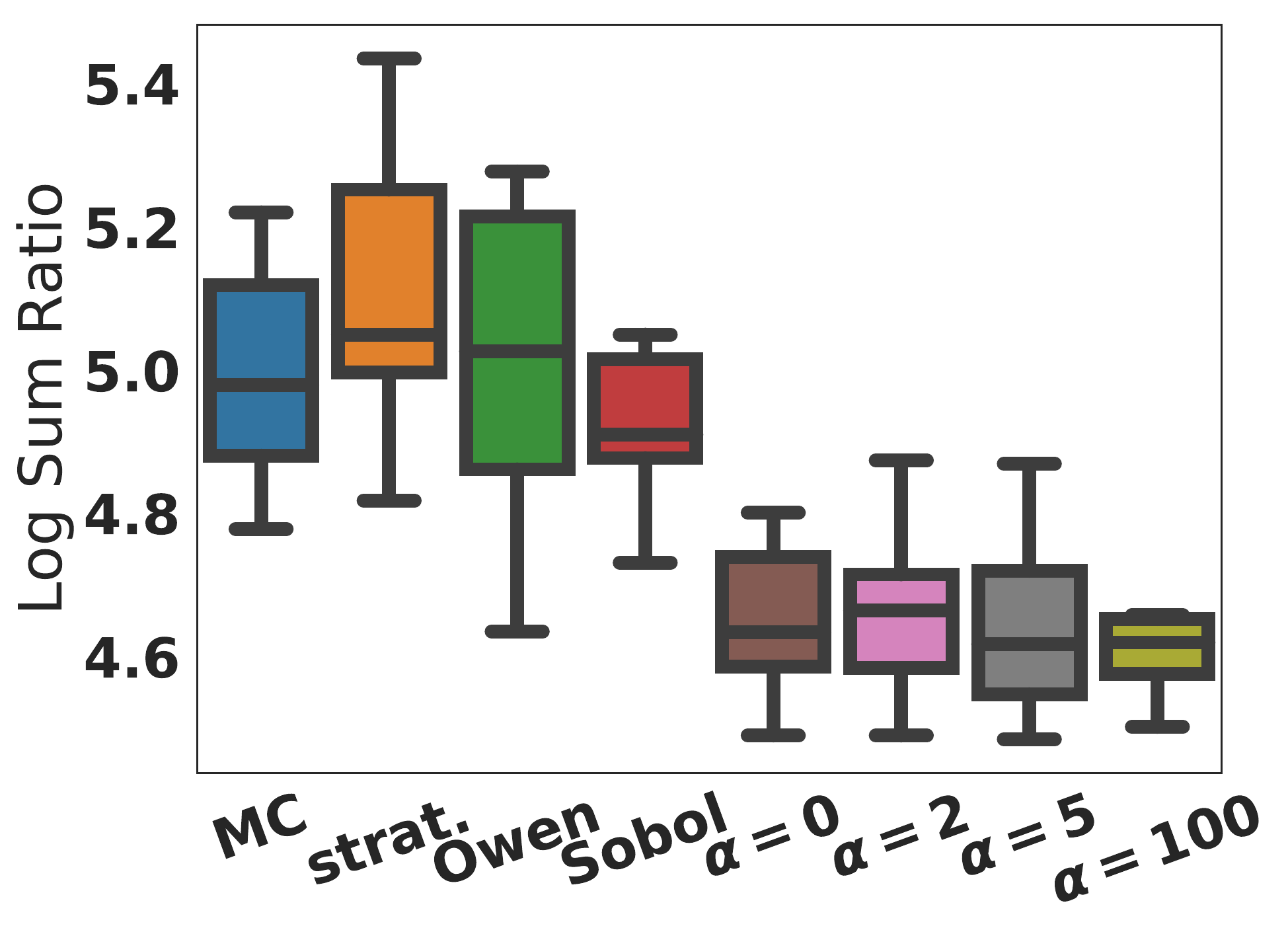} \\
    {(c)  $\underline{f}$ vs. $m$} & {(d) $\log \sum_{(i,i')} \rho_{i,i'}$} 
    \end{tabular}
    \caption{
    (a-b) and (c-d) plot results from using logistic regression on the synthetic Gaussian~\citep{Kwon2022} and Covertype~\citep{misc_covertype_31} datasets, respectively, using various baseline methods and ours.
    In (a,c), higher $\underline{f}$ is better while in (b,d), lower values are better.
    The intervals show the standard error of $10$ independent trials.
    }
    \label{fig:k_value_dataset}
\end{figure}

\paragraph{Verifying nullity \ref{axiom:nullity} and Pigou Dalton Principle.}
In practice, \ref{axiom:nullity} is rarely applicable (i.e., $\forall \pi, \sigma_i(\pi)=0$), 
so we instead investigate a more likely scenario: $|\phi_i|$ is very small for some $i$, because the training example $i$ has a minimal impact during training (e.g., $i$ is redundant).
We randomly draw $20$ training examples from the breast cancer dataset~\citep{breast_cancer} to fit a support vector machine classifier (SVC).
To verify \ref{axiom:nullity}, we standardize $\boldsymbol{\phi}$ 
and $\boldsymbol{\varphi}$ (i.e., $\boldsymbol{\phi}^\top \mathbf{1}_n  = \boldsymbol{\varphi}^\top  \mathbf{1}_n = 1$) and
calculate $\epsilon_{\text{abs}} \coloneqq \sum_{i: |\phi_i|\leq 0.01}|\varphi_i - \phi_i|$. As PDP is difficult to verify directly but it satisfies the Nash social welfare (NSW) \citep{DeClippel2010_nash_pdp,Kaneko1979}, we use the (negative log of) NSW on standardized $\boldsymbol{f}$ (i.e., $\boldsymbol{f}^{\top} \mathbf{1}_n =n$) $\text{NL NSW} \coloneqq -\log \prod_{i \in N} f_i$ (lower indicates PDP is better satisfied).
\cref{tab:nullity_pdp} shows our method obtains lowest estimation errors on (nearly) null training examples and satisfies PDP the best.

\begin{table}[!ht]
	\centering
    \setlength{\tabcolsep}{2pt}
    \caption{Average (standard errors) of error and NL NSW over $10$ independent trials.
    }
    \label{table:fairness-varying-beta-feature-noise}
    \resizebox{\linewidth}{!}{
    \footnotesize
    \begin{tabular}{lllrrrr}
    \toprule
    baselines &        $\epsilon_{\text{abs}}$ &     NL NSW \\
    \midrule
    MC &  2.12e-02 (2.75e-03) &  14.6 (5.12e-01)  \\
    Owen &  6.33e-02 (4.04e-03) &  18.2 (7.10e-01) \\
    Sobol &      6.28e-02 (5.64e-03) & 11.6 (6.24e-01) \\
    stratified &  2.89e-02 (5.84e-03) &   14.0 (7.46e-01) \\
    kernel &      6.44e-01 (1.15e-01) &  21.4 (1.48)  \\
    \midrule
    Ours ($\alpha = 0$) &  1.72e-02 (3.50e-03) &  \textbf{2.24} (7.40e-01) \\
    Ours ($\alpha = 2$) &  1.61e-02 (4.65e-03) &  2.57 (5.28e-01) \\
    Ours ($\alpha = 5$) &  \textbf{1.42e-02} (4.00e-03) &  2.55 (5.51e-01) \\
    Ours ($\alpha = 100$) &  2.13e-02 (5.95e-03) &  3.48 (5.61e-01) \\
    \bottomrule
    \end{tabular}
    }
    \label{tab:nullity_pdp}
\end{table}
\vspace{-1em}
\begin{figure}
    \centering
    \includegraphics[width=0.46\linewidth]{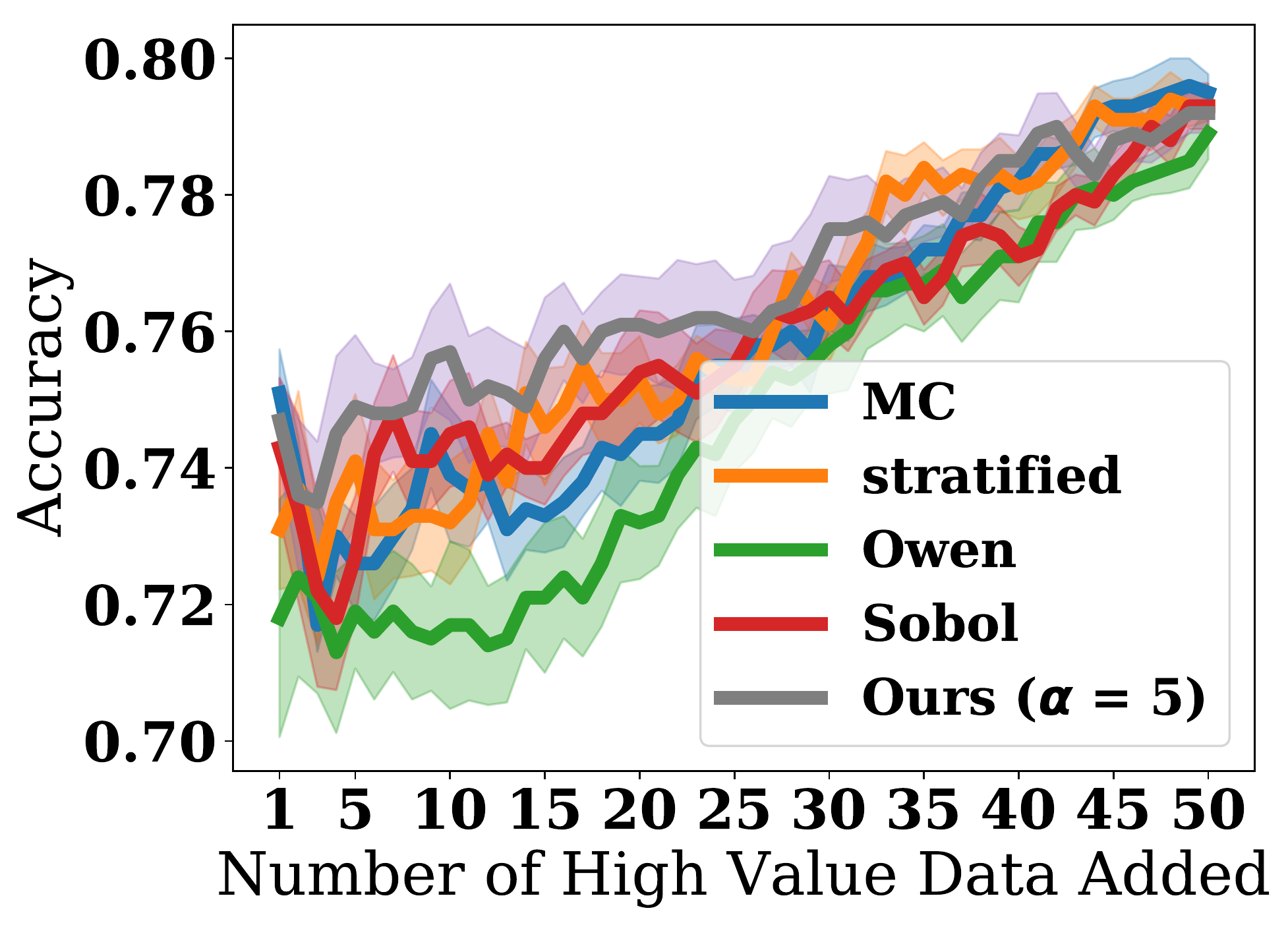}
    \includegraphics[width=0.49\linewidth]{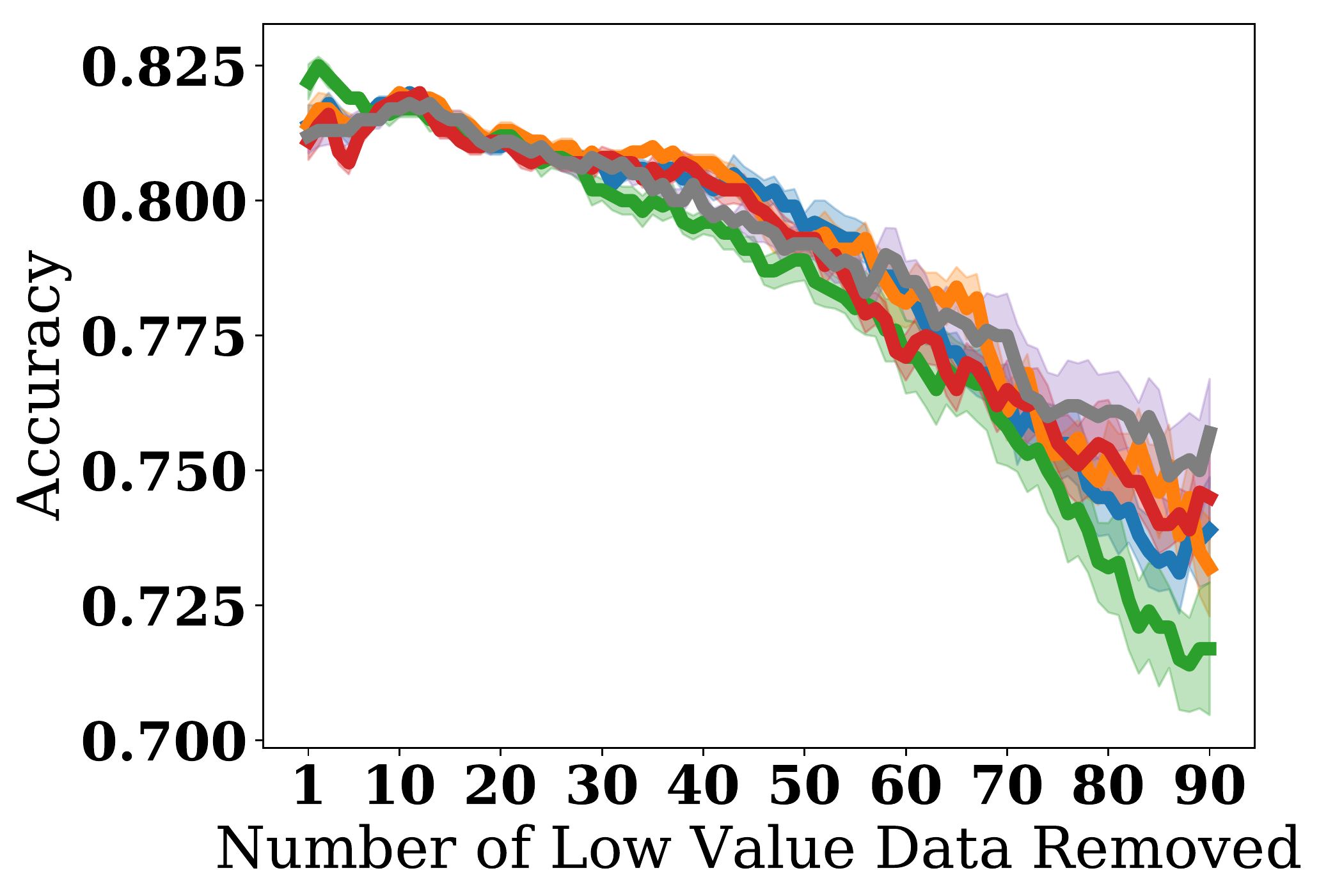}
    \captionof{figure}{Accuracy of logistic regression adding (removing) training examples generated from Gaussian distribution with highest (lowest) $\varphi_i$ in left (right).  Bootstrapping with $20$ permutations and $m=2000$. Average (standard errors in shaded color) over $10$ independent trials.  KernelSHAP  does not give clear trends in both plots because its estimates are not very accurate. Therefore we omit it here. A more detailed plot which involves KernelSHAP and other variants of our methods is provided in \cref{app:experiments}.
    }
    \label{fig:add-remove}
\end{figure}

\paragraph{Verifying approximate desirability \ref{axiom:desirability}.}
For \ref{axiom:desirability}, we verify whether the valuable training examples have high $\varphi_i$ by adding (removing) training examples according to highest (lowest) $\varphi_i$ \citep{pmlr-v97-ghorbani19c,Kwon2022} (\cref{fig:add-remove}) and via noisy label detections \citep{jia2019towards} (\cref{fig:f1_score} in \cref{app:experiments}).
\cref{fig:add-remove} left (right) shows our method is effective in identifying the most (least) valuable training examples to add (remove).

\subsection{Generalising to Other Scenarios}
\label{sec:experiments-P234}

We examine the estimation accuracy, \ref{axiom:desirability}, and PDP within \textbf{P2.}, \textbf{P3.} and \textbf{P4.}.\footnote{We exclude \textbf{P1.} and \textbf{P2.} because they are less likely to be applicable in these scenarios.}
For \textbf{P2.} we adopt robust volume SV \citep[Definition 3]{xu2021vol} (RVSV) and several real-world datasets for linear regression including used-car price prediction \citep{used_car_dataset} and credit card fraud detection \citep{credit_card_dataset} where $n$ data providers each owning a dataset (to estimate its RVSV). 
For \textbf{P3.} we consider \citep[Equation 1]{Hwee2020} (CML) and hotel reviews sentiment prediction \citep{hotel_reviews_dataset} and Uber-lyft rides price prediction \citep{uber_lyft_dataset}; in addition, we also consider \citep[Definition 1]{Wang2020} (FL) using two image recognition tasks (MNIST \citep{cnn_mnist} and CIFAR-10 \citep{cnn_cifar10}) and two natural language processing tasks (movie reviews \citep{pang2005seeing_mr} and Stanford Sentiment Treebank-5 \citep{kim2014convolutional_SST5}).
We partition the original dataset into $n$ subsets, each owned by an agent $i$ in FL/CML and we estimate each agent's contribution via the respective SV definitions.
For \textbf{P4.} we follow \citep[Theorem 1]{lundberg2017_kernelSHAP} on several datasets including adult income~\citep{adult_income}, iris~\citep{iris}, wine~\citep{wine}, and covertype~\citep{misc_covertype_31} classification with different ML algorithms including $k$NN, logistic regression, SVM, and multi-layer perceptron (MLP).
To ensure the experiments complete within reasonable time, we perform principal component analysis to obtain $7$ principal components/features for computing $\boldsymbol{\phi}$.\footnote{We find if $n\geq8$ features, the experiments take exceedingly long to complete due to the exponential complexity compounded further with the costly utility computation \citep[Equation 10]{lundberg2017_kernelSHAP}.
}
For \textbf{hyperparameters}, since the largest $n$ among these scenarios is $7$, we set the budget $m=1000$ and the bootstrapping of $300$ evaluations (a total of $1300$ evaluations for each baseline). We set $\xi = 1\text{e-}3$ and vary $\alpha \in \{0,2,5,100\}$ where $100$ simulates $\alpha \to \infty$.
Additional experimental details (datasets, ML models etc) are in \cref{app:experiments}.

\paragraph{Evaluation and results.}
We examine the mean squared error (MSE) and mean absolute percentage error
(MAPE) between $\boldsymbol{\varphi}$ and $\boldsymbol{\phi}$ for estimation accuracy, inversion counts $N_{\text{inv}}$ and errors $\epsilon_{\text{inv}}$ for \ref{axiom:desirability} and NL NSW (defined previously) for PDP.
The inversion count $N_{\text{inv}}\coloneqq \sum_{i\neq j \in N}\mathbb{1}(\phi_i > \phi_j \cap \varphi_i < \varphi_j) + \mathbb{1}(\phi_i < \phi_j \cap \varphi_i > \varphi_j)$ is the number of inverted pairings in $\boldsymbol{\varphi}$ while $\epsilon_{\text{inv}}\coloneqq \sum_{i\neq j \in N} |\phi_i - \phi_j -(\varphi_i -\varphi_j)|$ is the sum of absolute errors (w.r.t.~the true difference $\phi_i-\phi_j$).
We present one set of average (and standard errors) over $5$ repeated trials for \textbf{P2.} \textbf{P3.} and \textbf{P4.} each in 
\cref{tab:vol-credit-card-overall,tab:cml-hotelreviews-overall,tab:feature-wine-overall} (others in \cref{app:experiments}).
Overall, our method performs the best.
While most methods perform competitively to ours w.r.t.~MAPE, they are often worse (than ours) by an order of magnitude w.r.t.~MSE.
This is because our method explicitly addresses both the multiplicative and absolute errors (via $\xi = \epsilon_2 /\epsilon_1$ in FS). Specifically, reducing absolute errors when $|\phi_i|$ is large (e.g., RVSV for \textbf{P2.} or \citep[Equation 1]{Hwee2020} for \textbf{P3.} as both use the determinant of a large data matrix) is effective in reducing MSE.
In our experiments, we find kernelSHAP underperforms others, which may be attributed to it having a larger (co-)variance,\footnote{It is a co-variance matrix because kernelSHAP estimates the vector $\boldsymbol{\varphi}$ by solving a penalised regression.} empirically verified in \citep{Covert2021_improvingkernelSHAP}.

\begin{table}[!ht]
    \centering
        \caption{Evaluation of $\varphi_i$ within \textbf{P2.} using credit card dataset with $n=10$ data providers who each have a randomly sub-sampled dataset containing $100$ training examples \citep{xu2021vol}.
        }
        \resizebox{\linewidth}{!}{
        \begin{tabular}{lllrrrr}
        \toprule
        baselines &        MAPE &          MSE &  $N_{\text{inv}}$ &  $\epsilon_{\text{inv}}$ & NL NSW  \\
        \midrule
        MC &  3.87e-02 (7.9e-03) &  2.70e-03 (7.9e-04) &      3.60 (1.33) &      4.58 (0.82) &  1.72e-02 (4.6e-03) \\
        Owen &  3.06e-02 (6.7e-03) &  1.60e-03 (5.2e-04) &      4.00 (1.41) &      3.50 (0.76) &  1.31e-02 (3.6e-03) \\
        Sobol &  6.75e-02 (3.4e-03) &  9.62e-03 (1.5e-03) &      4.80 (1.20) &      7.97 (0.56) &  7.46e-02 (1.3e-02) \\
        stratified &  4.46e-02 (8.3e-03) &  3.30e-03 (8.3e-04) &      4.40 (1.17) &      5.17 (0.87) &  1.72e-02 (5.9e-03) \\
        kernel &      0.10 (2.0e-02) &  1.37e-02 (3.7e-03) &      8.80 (2.15) &  1.09e+01 (2.02) &         3.64 (0.46) \\
        \midrule
        Ours $(\alpha=0)$ &      0.10 (1.6e-02) &  2.15e-02 (8.0e-03) &  1.08e+01 (2.24) &  1.18e+01 (1.85) &         2.60 (0.88) \\
        Ours $(\alpha=2)$ &  2.30e-02 (2.5e-03) &  7.60e-04 (1.7e-04) &      2.80 (1.02) &      2.50 (0.29) &  \textbf{3.40e-04} (9.0e-05) \\
        Ours $(\alpha=5)$ &  \textbf{2.14e-02} (3.1e-03) &  \textbf{6.80e-04} (1.7e-04) &      \textbf{1.20} (0.49) &      \textbf{2.34} (0.31) &  9.90e-04 (9.0e-05) \\
        Ours $(\alpha=100)$ &  2.40e-02 (2.9e-03) &  9.90e-04 (2.8e-04) &      2.40 (0.75) &      2.77 (0.42) &  6.91e-03 (2.4e-03) \\
        \bottomrule
        \end{tabular}
        }
    \label{tab:vol-credit-card-overall}
\end{table}

\begin{table}[!ht]
    \centering
    \caption{Evaluation of $\varphi_i$ within \textbf{P3.} CML using hotel reviews dataset with $n=10$ agents who each have a randomly sub-sampled dataset containing $100$ training examples \citep{xu2021vol}. 
    }
    \resizebox{\linewidth}{!}{
    \begin{tabular}{lllllll}
    \toprule
    baselines &        MAPE &          MSE &  $N_{\text{inv}}$ &  $\epsilon_{\text{inv}}$ & NL NSW  \\
    \midrule
    MC &  3.53e-02 (6.0e-03) &  2.00e-03 (7.0e-04) &  1.32e+01 (2.94) &      4.40 (0.83) &      0.11 (3.0e-02) \\
    Owen &  3.06e-02 (1.3e-03) &  1.23e-03 (1.7e-04) &  1.00e+01 (1.41) &      3.70 (0.24) &      0.22 (5.0e-02) \\
    Sobol &  6.31e-02 (2.6e-03) &  1.17e-02 (9.4e-04) &  1.00e+01 (1.10) &      8.54 (0.33) &      0.28 (7.6e-02) \\
    stratified &  3.21e-02 (3.0e-03) &  1.67e-03 (2.8e-04) &  1.44e+01 (1.94) &      4.10 (0.37) &      0.20 (3.0e-02) \\
    kernel &      0.39 (5.7e-02) &      0.23 (6.4e-02) &  4.20e+01 (4.94) &  4.89e+01 (7.24) &         5.27 (1.34) \\
    \midrule
    Ours $(\alpha=0)$ &  1.17e-02 (9.5e-04) &  2.00e-04 (3.0e-05) &      \textbf{2.80} (0.49) &      1.48 (0.12) &  \textbf{2.74e-03} (2.1e-04) \\
    Ours $(\alpha=2)$ &  1.16e-02 (1.3e-03) &  2.00e-04 (5.0e-05) &      4.80 (1.96) &      1.47 (0.18) &  1.80e-02 (1.8e-03) \\
    Ours $(\alpha=5)$ &  1.18e-02 (7.4e-04) &  \textbf{1.80e-04} (3.0e-05) &      3.60 (1.33) &      1.43 (0.10) &  4.59e-02 (3.7e-03) \\
    Ours $(\alpha=100)$ &  \textbf{1.11e-02} (1.5e-03) &  \textbf{1.80e-04} (4.0e-05) &      4.80 (1.62) &      \textbf{1.40} (0.18) &  8.97e-02 (7.0e-03) \\
    \bottomrule
    \end{tabular}
    }
    \label{tab:cml-hotelreviews-overall}
\end{table}

\begin{table}[!ht]
    \centering
    \caption{Evaluation of $\varphi_i$ within \textbf{P4.} using the wine dataset on a randomly sampled subset of size $2000$ with $n=7$ principal features on a random forest classifier. 
    }
    \resizebox{\linewidth}{!}{
\begin{tabular}{lllllll}
\toprule
  baselines &                MAPE &                 MSE &    $N_{\text{inv}}$ &  $\epsilon_{\text{inv}}$ &              NL NSW \\
\midrule
         MC &  4.44e-02 (7.9e-03) &  6.47e-03 (1.7e-03) &         0.40 (0.40) &      2.54 (0.41) &      0.33 (3.1e-02) \\
       Owen &  9.88e-02 (1.0e-02) &  1.77e-02 (3.4e-03) &         1.20 (0.49) &      4.51 (0.44) &      0.44 (3.3e-02) \\
      Sobol &      0.24 (1.1e-02) &  7.37e-02 (2.4e-03) &         2.40 (0.40) &  1.18e+01 (0.26) &      1.32 (6.7e-02) \\
 stratified &  6.65e-02 (8.2e-03) &  9.36e-03 (2.4e-03) &  \textbf{0} (0.0e+00) &      2.62 (0.18) &      0.34 (2.7e-02) \\
     kernel &      0.15 (3.2e-02) &  1.56e-02 (4.1e-03) &         3.60 (1.47) &      6.08 (1.13) &     1.05e+01 (0.20) \\
     \midrule
   Ours $(\alpha=0)$ &  6.41e-02 (1.2e-02) &  1.12e-02 (6.2e-03) &         0.80 (0.49) &      3.16 (0.75) &      0.11 (2.7e-02) \\
   Ours $(\alpha=2)$ &  3.65e-02 (5.0e-03) &  2.49e-03 (7.7e-04) &  \textbf{0} (0.0e+00) &      1.80 (0.26) &  \textbf{8.40e-02} (1.1e-02) \\
   Ours $(\alpha=5)$ &  \textbf{3.14e-02} (6.1e-03) &  \textbf{2.02e-03} (5.7e-04) &  \textbf{0} (0.0e+00) &      \textbf{1.54} (0.29) &      0.16 (1.0e-02) \\
 Ours $(\alpha=100)$ &  3.28e-02 (4.4e-03) &  2.12e-03 (4.6e-04) &         0.40 (0.40) &      1.68 (0.23) &      0.29 (1.3e-02) \\
\bottomrule
\end{tabular}
        }
    \label{tab:feature-wine-overall}
\end{table}

\section{Discussion and Conclusion}
We propose \emph{probably approximate Shapley fairness} via a re-axiomatisation of Shapley fairness and subsequently exploit an error-aware \emph{fidelity score} (FS) to provide a fairness guarantee with a polynomial (in $n$) budget complexity. 
We identify that jointly considering multiplicative and absolute errors (via their ratio $\xi$) is crucial in the quality of the fairness guarantee (which existing works did not do).
Through analysing the effect of $\xi$ on FS (used in our algorithm), we empirically find a suitable value for $\xi$.
To achieve the fairness guarantee, we propose a novel \emph{greedy active estimation} that integrates a greedy selection (which achieves a budget optimality) and active (permutation) selection via importance sampling. 
We identify that importance sampling can lead to poorer performance in practice as the necessary cardinality assumption may not be satisfied. To mitigate this, we describe a simple (via a single coefficient $\alpha$) regularisation using a uniform Dirichlet prior, that interestingly unifies the frequentist and Bayesian approaches (its effectiveness is empirically verified).
For future work, it is appealing to explore whether there exists a \emph{biased} estimator with much lower variance to provide a similar/better fairness guarantee with a competitive budget complexity.

\newpage
\subsubsection{Acknowledgements.} This research/project is supported by the National Research Foundation Singapore and DSO National
Laboratories under the AI Singapore Programme (AISG Award No: AISG$2$-RP-$2020$-$018$). Xinyi Xu is 
also supported by the Institute for Infocomm Research of Agency for Science, Technology and Research (A*STAR)

\bibliography{references}

\newpage
\onecolumn
\appendix

\section{Algorithm Pseudo-code}
\label{app:algorithm}

Alg.~\ref{algo:active_valuation} presents the pseudo-code for GAE. Greedy selection iteratively picks the $\varphi_i$ with the lowest $f_i$ to update its SV estimate using permutations obtained from active selection. This process repeats until the total budget $m$ is exhausted.

\begin{algorithm}
\caption{Greedy Active Estimator}\label{algo:active_valuation}
\begin{algorithmic}
\Procedure{GAE}{$m', m, N, q', \xi$}
\State \text{\# $m'$: number of bootstrapping evaluations for each $\varphi_i$\ ,}
\State \text{\# $m$: the total budget for all evaluations, }
\State \text{\# $N$: the set to be valued,}
\State \text{\# $q'$: the proposal distribution that samples over the cardinalities $\{0,1,...,n-1\}$,}
\State \text{\# $\xi$: as in \cref{def:afs}.}
\State \text{Draw $m'$ permutations $\pi_t \sim U(\Pi),\ t \in \{1,2,...,m'\}$} \Comment{For Bootstrapping}
\State \text{Evaluate marginal contributions for each $i \in N$}
\State \text{Estimate parameters of $q$ using the marginal contributions (details in Appendix~\ref{app:estimate_params})}
\State \text{Initialize all FSs using the marginal contributions (details in the paragraph below)}
\State $m_i \gets m',\ \forall i \in N$
\For{$k \gets 1$ to $m$}
\State $j \gets \arg\min_{i\in N}\text{FS}(\varphi_i, \xi)$ \Comment{Greedy Selection}
\State $m_j \gets m_j + 1$
\State $N' \gets N \setminus \{j\}$
\State \text{Draw cardinality sample $c \sim q'$} \Comment{Active Selection}
\State \text{Draw a random permutation $\pi' \sim U(\text{perm}(N'))$}
\State $\pi_{m_j} \gets \{\pi'_0, \pi'_1, ..., \pi'_{c-1}, j, \pi'_c, .., \pi'_{n-2}\}$
\State \text{Evaluate $\sigma_j(\pi_{m_j})$}
\State $w_{m_j} \gets 1 / (nq'(c))$
\State $\varphi_j \gets (m_j-1)/m_j \times \varphi_j + w_{m_j}\sigma_j(\pi_{m_j})/m_j$
\State $s_j^2 \gets 1/(m_j-1) \times \sum_{t=1}^{m_j}(w_{t}\sigma_j(\pi_{t}) - \varphi_j)^2$
\State $f_j \gets m_j \times (\varphi_j + \xi)^2 / s_j^2$
\EndFor
\State \textbf{return} $\varphi_i$ for all $i \in N$
\EndProcedure
\end{algorithmic}
\end{algorithm}

\paragraph{Bootstrapping the estimation of FS and proposal distribution $q$.} 
To mitigate the cold-start problem of obtaining accurate FS (and a good proposal distribution $q$), we draw $m'$ bootstrapping permutations via MC \cite{bootstrap} to obtain $m'$ samples/marginal contributions for each $i$.
Subsequently, for each $i$, the sample mean  $\varphi_i = 1/m'\times \sum_{t=1}^{m'}\sigma_i(\pi_t)$ and sample variance $s_i^2 = 1/(m'-1)\times \sum_{t=1}^{m'} (\sigma_i(\pi_t) - \varphi_i)^2$ are then used to estimate the population mean $\mathbb{E}[\varphi_i] = \phi_i$ and population variance $\mathbb{E}[s_i^2] = \mathbb{V}[\sigma_i(\pi)] = m' \mathbb{V}[\varphi_i]$.
Both sample mean and sample variance are also used to estimate $f_i = \text{FS}(\varphi_i, \xi) \approx m' \times (\varphi_i + \xi)^2 / s_i^2$ used in greedy selection. 
Additionally, the marginal contributions obtained during bootstrapping are used to estimate the parameters of the proposal distribution $q$ to improve its fit to the actual distribution of marginal contributions.
While MC is used in bootstrapping, this choice is not restrictive and other approaches are also possible. For instance, if a good proposal distribution $q$ is known in advance (from prior knowledge), then $q$ can be applied both in bootstrapping and in active selection.

While our theoretical results (Propositions~\ref{proposition:optimality_of_greedy},\ref{prop:active_greedy_mc}) require all marginal contributions used to estimate $f_i$ to be drawn from a fixed distribution $q$ (i.e., technically we should discard the marginal contributions obtained from bootstrapping if the permutations are drawn from $U$ which may be different from $q$), as $m \gg m'$ in implementation, the $m'$ samples from bootstrapping has a marginal impact, so we keep the marginal contributions (for estimating $\varphi_i$) as an implementation choice.

\section{Additional Analysis and Discussion} \label{app:additional-analysis}

\subsection{Equivalent Probabilistic Formulation of \ref{original_axioms:nullity}-\ref{original_axioms:desirability} Using Conditional Events}

\begin{enumerate}[label*=F\arabic*., leftmargin=0.75cm, topsep=1pt]
    \item  Nullity: 
    let $E_{F_1}$ be the (conditional) event that for any $i \in N$, conditioned on $(\forall \pi \in \Pi, \sigma_i(\pi) = 0)$,  then $\phi_i = 0$.    
    \item  Symmetry: let $E_{F_2}$ be the (conditional) event that for all $i \neq j \in N$, conditioned on  $(\forall C \subseteq N \setminus \{i,j\}, v(C \cup \{i\}) = v(C \cup \{j\}))$, then $\phi_i = \phi_j$. 
    \item Strict desirability: let $E_{F_3}$ be the (conditional) event that for all $i \neq j \in N$, conditioned on
    $(\exists B \subseteq N \setminus \{i,j\}, v(B \cup \{i\}) > v(B \cup \{j\})) \wedge (\forall C \subseteq N \setminus \{i, j\}, v(C \cup \{i\}) \geq v(C \cup \{j\}))$, then $\phi_i > \phi_j$. 
\end{enumerate}
It can be verified that the probability of $E_{F_1}$-$E_{F_3}$ occurring is $1$, which is equivalent to the original formulation in the main text. Note that \ref{original_axioms:nullity}-\ref{original_axioms:desirability} and \ref{axiom:nullity}-\ref{axiom:desirability} are \textit{conditional} events, so that the probability of the clause being satisfied is \textit{not} relevant, instead the (conditional) probability of the implication, conditioned on the clause is satisfied, is what these axioms are designed to characterise and guarantee.

\subsection{Variance Results for Some Existing Estimators}
Continuing from the discussion in Sec.~\ref{sec:fairness}, we summarise the variance results of some existing estimators in \cref{tab:var-fairness-result}.

\begin{table}[!ht]
    \centering
    \caption{Variance results for some existing estimators. The variances for some estimators (e.g., Owen) are not available.}
    \label{tab:var-fairness-result}
    \resizebox{\linewidth}{!}{
    \begin{tabular}{c|c|c|c}
    \toprule
    estimators         &  variance & remark & reference\\
    \midrule
    MC     &  $\mathbb{V}_{\pi \sim U}[\sigma_i(\pi)] / m$ & N.A. & \citep[Proposition 3.1]{Castro2009}\\
    stratified & not available & a probabilistic error bound for each $\varphi_i$ & \citep{maleki2013} \\
    Owen & not available  & N.A. & \citep{Okhrati2020} \\
    antithetic  & $\mathbb{V}_{\pi \sim U}[\sigma_i(\pi)] / m \times [1 + \text{Corr}(\sigma_i(X), \sigma_i(Y))]$  &  dependent on correlation & \citep[Equation 4]{Mitchell2021}\\
    orthogonal  & $1/m\sum_{l=1}^{n/k}\sum_{j,r=1}^{k}\text{Cov}(\sigma_i(\pi_{lj}), \sigma_i(\pi_{lr}))$  & dependent on covariance & \citep[Section 4.2]{Mitchell2021} \\
    Sobol & not available & N.A. & \citep{Mitchell2021}\\
    kernel & $M \Sigma_{\boldsymbol{\varphi}} M^\top$ & covariance matrix & \citep[Equation 12]{Covert2021_improvingkernelSHAP}. \\
    Ours     & $\mathbb{V}_{\pi \sim q_i}[\sigma_{i,q_i}(\pi)] / m$ & dependent on the proposal distribution $q_i$ & Equ.~\eqref{eq:importance_var}  \\
    \bottomrule
    \end{tabular}
    }
\end{table}

Stratified sampling has a probabilistic error bound (which additionally requires $\max_C \nu(C)$ and $\min_C \nu(C)$) for each $\varphi_i$ individually \citep[Equation 15]{maleki2013}:
with probability at least $(1-\delta)^n$,
\begin{equation*}
    |\varphi_i - \phi_i| \leq \frac{d\sqrt{-\ln\frac{\delta}{2}}}{n\sqrt{m_i}}\left(\sum_{k=0}^{n-1}(k+1)^{\frac{2}{3}}\right)^{\frac{3}{2}},
\end{equation*}
where $d = 2(\max v(C)/|C| - \min v(C)/|C|)$ and $m_i$ is the budget for evaluating $\varphi_i$.
The contrast with our probably approximate fairness is that it does not consider the interaction between $\varphi_i,\varphi_j$.

Antithetic is used in Owen and orthogonal is an extension of antithetic \citep{Mitchell2021}. $m$ refers to the number of sample permutations drawn.
The variance of antithetic sampling estimator \citep[Equation 4]{Mitchell2021} depends on the correlation of a randomly sampled vector from a unit cube $X\sim U(0,1)^n$ and its complement $Y = \mathbf{1}_n - X$ element-wise.
The orthogonal estimator \citep[Equation 14]{Mitchell2021} extends the antithetic sampling estimator, so its variance expression also extends that of the antithetic sampling estimator.
Hence, to analyse the fairness guarantee, additional assumptions on the correlation or covariance are required.

Unfortunately, the variance results are not provided for the Sobol estimator \citep[Algorithm 4]{Mitchell2021} or Owen estimator \citep[Algorithm 1]{Okhrati2020} (compared in our experiments because they have empirically good performance \citep{Mitchell2021,Okhrati2020}), so it is difficult to apply our proposed fairness framework to theoretically analyse their fairness.

Note that KernelSHAP is not a sampling-based estimator as it obtains $\boldsymbol{\varphi}$ by solving a regression. Hence, its variance result is a covariance matrix shown above with $M\coloneqq A^{-1} -  (A^{-1} \mathbf{1} \mathbf{1}^\top A^{-1})   /  (\mathbf{1}^\top A^{-1} \mathbf{1}) $ where $A_{ii} = 1/2$ and
\[A_{ij,i\neq j} = \frac{1}{n(n-1)} \frac{\sum_{l=2}^{n-1}\frac{l-1}{n-l} }{\sum_{j=1}^{n-1} 1/(l (n-l))}\ , \]
and $\Sigma_{\boldsymbol{\varphi}}$ is the covariance matrix of (the vector) $\boldsymbol{\varphi}$.

From \cref{tab:var-fairness-result}, it is appealing to provide the corresponding variance results for the existing \emph{unbiased} estimators in order to analyse their fairness guarantee as in \cref{definition:fairness-guarantee}.
Furthermore, it is an interesting future direction to consider \emph{biased} estimators with lower variance to achieve competitive fairness guarantees. For instance, a weighted importance sampling method for off-policy MC evaluation \citep{Sutton1998} leads to a biased estimator but empirically leads to faster convergence on some reinforcement learning tasks. More broadly, exploring how the variance vs.~bias trade-off affects the fairness of an SV estimator is also an interesting direction to explore. 

\subsection{Limitations of MC}

Firstly, MC is commonly adopted for each $i\in N$ separately \citep{Castro2009,maleki2013} (i.e., estimating $\varphi_i$ independently of others).
As we have demonstrated in \cref{fig:ri_mape} and \cref{fig:ri_ape_app}, $f_i \neq f_j$ in general even with the same number of permutation samples - some $\varphi_i$ observes larger variations than others due to large variation in their marginal contributions $\sigma_i(\pi)$. However, MC does not take this into consideration at all and as a result, cannot effectively optimise $\max \min_i f_i$ (optimising which provides the fairness guarantee as in \cref{proposition:chebyshev_fidelity} and \cref{corollary:k_axioms})

Secondly, in MC, the permutations are selected uniformly randomly. However, it cannot guarantee that each $i$ receives permutations that are equally helpful in obtaining an accurate $\varphi_i$ (i.e., some permutations are more helpful in estimating $\varphi_j$ while some others more helpful for $\varphi_i$). For instance, the marginal contributions to subsets with smaller cardinality tend to be much larger in magnitude \cite{maleki2013,pmlr-v97-ghorbani19c,Kwon2022}.
Intuitively, this echos diminishing returns where the performance improvement from adding a single training example to a small training set is more significant than that from adding the same training example to a large training set \citep{wang2021learnability,xu2021vol}. Without accounting for this, MC produces marginal contributions that have high variance.

The first limitation is addressed using greedy selection (Sec.~\ref{sec:greedy_selection}). The second limitation is commonly tackled using importance sampling~\cite{kloek1978}: Permutations are drawn (randomly selected) from a proposal distribution $q_i(\pi)$ with densities according to $|\sigma_i(\pi)|$. 
Importance sampling can produce an unbiased estimator,
\begin{equation*}
    \phi_i = \mathbb{E}_{\pi \sim U(\pi)}[\sigma_i(\pi)] = \mathbb{E}_{\pi \sim q_i(\pi)}\left[\frac{U(\pi)\sigma_i(\pi)}{q_i(\pi)}\right] = \mathbb{E}_{\pi \sim q_i(\pi)}\left[\frac{\sigma_i(\pi)}{n!q_i(\pi)}\right],
\end{equation*}
where $U$ is a uniform distribution over all permutations $\Pi$. Theoretically, the optimal proposal distribution $q_i^*$ corresponds to when $\mathbb{V}[\sigma_i(\pi) / (q_i^*(\pi)n!)]$ is minimised.
It implies the density is proportional to the magnitude of the true marginal contribution, i.e., $q_i^* \propto |\sigma_i(\pi)|$~\cite{importance_sampling_analysis}. 
However, $q_i^*$ is unattainable in practice without knowing all the marginal contributions.

\subsection{Estimating the Parameters of the Categorical Distribution}
\label{app:estimate_params}

It is natural to model the distribution of $|P_i^\pi|$ as a categorical distribution with a support as the different cardinalities $\{0,1,\ldots,n-1\}$. To obtain the parameters of the categorical distribution, one way is to directly estimate $\sqrt{\mathbb{E}_{\pi \sim U_c}[\sigma_i(\pi)^2]}$ (i.e., an MLE estimate as in Appendix~\ref{app:var}) where $U_c$ refers to the uniform distribution over all permutations of cardinality $c$: $\{\pi: \pi \in \Pi, |P_i^\pi|=c\}$. Another approach balances this estimation with a prior belief via a Dirichlet distribution (i.e., an MAP estimate) paramameterised by $\{\alpha_0,\alpha_1,\ldots,\alpha_{n-1}\}$, denoted as $\text{Dir}((\alpha_0,\alpha_1,\ldots,\alpha_{n-1}))$.
The density of a $n$-dimensional probability vector $\{x_0, x_1,\ldots,x_{n-1}\}$ w.r.t.~this Dirichlet distribution is
\begin{equation*}
    f(x_0, x_1, ..., x_{n-1}; \alpha_0, \alpha_1, ..., \alpha_{n-1}) = \frac{\Gamma\left( \sum_{k=0}^{n-1}\alpha_k\right)}{\prod_{k=0}^{n-1}\Gamma(\alpha_k)} \prod_{k=0}^{n-1}x_k^{\alpha_k-1}\ .
\end{equation*}

We combine both approaches by setting an uninformative prior $\text{Dir}((\alpha+1)\boldsymbol{1}_n)$ (i.e., $\alpha_0=\alpha_1=\ldots=\alpha_{n-1}=\alpha+1$) with a controllable strength via $\alpha$ (a larger $\alpha$ means we trust this prior more). We use the marginal contributions gathered from bootstrapping to update the Dirichlet distribution $\text{Dir}((\alpha+1)\boldsymbol{1}_n)$. The goal is for the proposal distribution $q'$ to approximate the optimal distribution well, specifically $q'^*$ (as shown in Equ.~\eqref{eq:optimal_q_card} later).
To do so, we approximate $\mathbb{E}_{\pi \sim U_c}[\sigma_i(\pi)^2]$ which is a key component for $q'^*$.

\textbf{Interpreting the MAP perspective.}
Let the MLE estimate be $\boldsymbol{w}$, the MAP estimate is then obtained by adding a constant term $\alpha$ to a scaled version of $\boldsymbol{w}$, denoted as $\boldsymbol{\tilde{w}} = n\boldsymbol{w}$. The scaling operation can be
approximately viewed as drawing $n$ samples from $q'^*$,\footnote{The choice of number of samples is artificial. The relative magnitude of number of samples and $\alpha$ determines the relative importance between $\boldsymbol{w}$ and the prior belief (Dirichlet parameterised by $\alpha$). We choose $n$ for convenience of notation although other sample sizes can achieve the same effect (with $\alpha$ adjusted accordingly).} with $\tilde{w}_c$ falling into the $c$-th category. Therefore, $\boldsymbol{\tilde{w}}$ can be treated as coming from a categorical distribution $q'^*$, which combines with the Dirichlet prior to give an MAP estimate $\alpha + \tilde{w}_c$ for each category $c$.

Specifically, using the marginal contributions from bootstrapping, a probability simplex $\boldsymbol{w}$ (s.t.~$\sum_{k=0}^{n-1} w_k = 1$) is constructed where $w_c$ is proportional to the mean absolute marginal contribution with cardinality $c$, i.e., $w_c \propto 1/m_i \sum_{t=1}^{m_i} \sigma_i(\pi_t)^2$ for $m_i$ samples drawn from $U_c$ to evaluate $\varphi_i$. 
Then the scaled $\tilde{\boldsymbol{w}} \gets n\boldsymbol{w}$ is used to obtain the MAP estimate for the parameters $\boldsymbol{\theta}$ of the proposal distribution $q'$ as $\theta_c = (\tilde{w}_c+ \alpha) / \sum_{k=0}^{n-1}(\tilde{w}_k+ \alpha)$. 
In particular, $\alpha = 0$ recovers the MLE estimate. A detailed proof is given below.

\begin{proof}
Notice that $\tilde{w}_c$ can be seen as the number of data belonging to the $c$-th category from $n$ observations. Further let $\boldsymbol{\theta} \sim \text{Dir}((\alpha+1)\boldsymbol{1}_n)$. Then, the posterior distribution of $\boldsymbol{\theta}$ is
\begin{equation*}
    \begin{aligned}
    \text{Pr}(\boldsymbol{\theta}|\boldsymbol{\tilde{w}}) &\propto \text{Pr}(\boldsymbol{\tilde{w}}|\boldsymbol{\theta})\text{Pr}(\boldsymbol{\theta}|\alpha) \\
    \implies \log \text{Pr}(\boldsymbol{\theta}|\boldsymbol{\tilde{w}}) &\propto \log \text{Pr}(\boldsymbol{\tilde{w}}|\boldsymbol{\theta}) + \log \text{Pr}(\boldsymbol{\theta}|\alpha)\ .
    \end{aligned}
\end{equation*}
Incorporate this equation with the constraint $\sum_{k=0}^{n-1}\theta_k = 1$ to form the Lagrangian,
\begin{equation*}
\begin{aligned}
   \mathcal{L}(\alpha, \boldsymbol{\theta}) &= \log \text{Pr}(\boldsymbol{\tilde{w}}|\boldsymbol{\theta}) + \log \text{Pr}(\boldsymbol{\theta}|\alpha) + \lambda \left(1 - \sum_{k=0}^{n-1}\theta_k \right) \\
   &= \sum_{k=0}^{n-1}\tilde{w}_k \log \theta_k + \sum_{k=0}^{n-1} \alpha\log \theta_k + \lambda \left(1 - \sum_{k=0}^{n-1}\theta_k \right).
\end{aligned}
\end{equation*}
Set its partial derivative w.r.t.~$\theta_c$ to $0$ and solve for $\theta_c$, 
we get the MAP estimate for $\theta_c$
\begin{equation*}
    \theta_c^{\text{MAP}} = \argmax_{\theta_c} \mathcal{L}(\alpha, \boldsymbol{\theta})  = \frac{\tilde{w}_c + \alpha}{\sum_{k=0}^{n-1} (\tilde{w}_k + \alpha)}\ ,
\end{equation*}
which completes the proof.
\end{proof}

\subsection{Greedy Active Estimator is Unbiased}

We show that GAE produces unbiased estimates for all $\varphi_i,\ i\in N$.

\begin{proposition}[\bf Unbiasedness of GAE] \label{proposition:unbiasednes_of_fae}
Given a proposal distribution with support $\{0,1,2,...,n-1\}$, $\forall i, \mathbb{E}[\varphi_i] = \phi_i$ where each $\varphi_i$ is obtained from applying GAE.
\end{proposition}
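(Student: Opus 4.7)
The plan is to verify unbiasedness by showing that each individual term used in forming $\varphi_i$ has expectation $\phi_i$, then invoking linearity of expectation. I would split the argument according to the two types of draws that contribute to $\varphi_i$: bootstrapping samples and active-selection samples.

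For the bootstrapping phase, permutations are drawn from $U$ over $\Pi$, and the weight is implicitly one, so $\mathbb{E}_{\pi \sim U}[\sigma_i(\pi)] = (1/n!) \sum_{\pi \in \Pi} \sigma_i(\pi) = \phi_i$ follows directly from \eqref{eq:shapley}. For the active-selection phase, I would first identify the induced proposal measure on $\Pi$. The procedure draws a cardinality $c \sim q'$ on $\{0,\dots,n-1\}$, then a uniform permutation $\pi'$ of $N\setminus\{i\}$, and inserts $i$ at position $c$. There are $(n-1)!$ permutations of $N \setminus \{i\}$ giving the same pre/post split at position $c$, so the induced probability of obtaining a specific $\pi$ is
\[
q_i(\pi) \;=\; \frac{q'(c)}{(n-1)!}, \qquad c = |P_i^\pi|.
\]
The importance weight prescribed by the algorithm is $w = 1/(n\, q'(c))$, which equals $U(\pi)/q_i(\pi) = (1/n!)\big/\bigl(q'(c)/(n-1)!\bigr)$. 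A standard change of measure then yields
\[
\mathbb{E}_{\pi \sim q_i}\!\left[w\,\sigma_i(\pi)\right]
\;=\; \sum_{\pi \in \Pi} q_i(\pi)\, \frac{U(\pi)}{q_i(\pi)}\, \sigma_i(\pi)
\;=\; \sum_{\pi \in \Pi} U(\pi)\, \sigma_i(\pi) \;=\; \phi_i.
\]

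Finally, I would write $\varphi_i = (1/m_i)\sum_{t=1}^{m_i} Y_t$ with $Y_t \coloneqq w_t\,\sigma_i(\pi_t)$, where $w_t = 1$ for bootstrapping terms and $w_t = 1/(n\,q'(c_t))$ for active terms. Conditional on $q'$ (which is a deterministic function of the bootstrapping samples), each $Y_t$ has expectation $\phi_i$ by the two preceding displays. Linearity of expectation gives $\mathbb{E}[\varphi_i \mid q'] = \phi_i$, and the tower property yields $\mathbb{E}[\varphi_i] = \phi_i$ since $\phi_i$ is a constant.

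The main obstacle is the bookkeeping of the sampling measure in the active-selection phase, specifically justifying that the factor $1/(n\,q'(c))$ is precisely the Radon--Nikodym derivative $U/q_i$ restricted to permutations with $|P_i^\pi|=c$; once this identification is made, the rest is routine. A secondary subtlety worth flagging is that the sample count $m_i$ is itself random under greedy selection, but this does not affect the conditional-mean computation above because, given $q'$, the active draws for a fixed $i$ are i.i.d. from $q_i$ and share the common expectation $\phi_i$ with the bootstrapping draws, so the argument applies term by term inside the sum as treated in the paper.
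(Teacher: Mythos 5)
Your proposal is correct and follows essentially the same route as the paper's own proof: an importance-sampling change of measure, using the fact that the proposal over cardinalities induces a distribution $q_i$ with full support on $\Pi$ so that the weighted marginal contributions have expectation $\phi_i$ by Equ.~\eqref{eq:shapley}. You are in fact more explicit than the paper — verifying that $1/(n\,q'(c))$ equals $U(\pi)/q_i(\pi)$, treating the bootstrap terms separately, and conditioning on the learned $q'$ — but these are refinements of the same argument rather than a different one.
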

\begin{proof}[Proof of \cref{proposition:unbiasednes_of_fae}]
Let $q$ be the sampling distribution that samples permutations according to our importance sampling method and proposal distribution. Since the support of our proposal distribution is the set of all cardinalities and every permutation of a fixed cardinality has equal (and therefore positive) probability of being selected, the support of $q$ is $\Pi$. Next, consider that for each $i \in N$, GAE produces an estimate $\varphi_i = \mathbb{E}_{\pi \sim q}[U(\pi)\sigma_i(\pi)/q(\pi)]$. Since the support of $q$ is $\Pi$, we have $\forall \pi \in \Pi, q(\pi) > 0$. Hence $\varphi_i = \mathbb{E}_{\pi \sim q}[U(\pi)\sigma_i(\pi)/q(\pi)] = \mathbb{E}_{\pi \sim U}[\sigma_i(\pi)] = 1/(n!)\sum_{\pi \in \Pi}\sigma_i(\pi) = \phi_i$ by Equ.~\eqref{eq:shapley}.
\end{proof}

\section{Proofs and Derivations} \label{app:proofs}

\subsection{Proof of \cref{proposition:chebyshev_fidelity}}

To aid the proofs of the guarantee of Axioms~\ref{axiom:nullity}-\ref{axiom:desirability}, we introduce the following intermediate \textit{fidelity} axiom: 

\begin{enumerate}[label*=A\arabic*., ref=A\arabic*,leftmargin=0.75cm, topsep=1pt, start=0]
\item \label{axiom:general} Closeness: For any $i \in N$, the estimated SV $\varphi_i$, can deviate from the true SV $\phi_i$ by a relative error $\epsilon_1$ and an absolute error $\epsilon_2$:
\begin{equation}\label{equ:A0}
    |\phi_i - \varphi_i |\leq \epsilon_1 |\phi_i| + \epsilon_2\ .
\end{equation}
\end{enumerate}

\begin{lemma} \label{lem:fidelity_axiom}
Denote $\underline{f} \coloneqq \min_{i\in N} f_i$, then the fidelity axiom ~\ref{axiom:general} with error parameters $\epsilon_1,\epsilon_2$ in \eqref{equ:A0} is satisfied for any $i\in N$ w.p.~$\geq 1- 1/(\epsilon_1^2 f_i) \geq 1 - 1/(\epsilon_1^2 \underline{f})$.

\begin{proof}[Proof of \cref{lem:fidelity_axiom}]
From \eqref{equ:A0},
\begin{align*}
    \begin{split}
        \text{Pr}[ |\phi_i - \varphi_i |\leq \epsilon_1 |\phi_i| + \epsilon_2]  &= 1 -  \text{Pr}[ |\phi_i - \varphi_i |> \epsilon_1 |\phi_i| + \epsilon_2 ] \\
        &\geq 1 - \frac{\mathbb{V}[\varphi_i]}{(\epsilon_1^2|\phi_i|+\epsilon_2)^2} \\
        &= 1 - \frac{1}{\epsilon_1^2 f_i}\\
        &\geq 1 - \frac{1}{\epsilon_1^2 \underline{f}}\ .
    \end{split}
\end{align*}
The first inequality is by Chebyshev's inequality and the last inequality is by $\underline{f} = \min_{i\in N} f_i$. 
\end{proof}

\end{lemma}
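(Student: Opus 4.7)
The plan is to apply Chebyshev's inequality directly to the unbiased estimator $\varphi_i$ and then perform a short algebraic identification of the resulting bound with $1/(\epsilon_1^2 f_i)$ using the definition of the fidelity score.

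First, I would note that $\varphi_i$ is unbiased, i.e.\ $\mathbb{E}[\varphi_i]=\phi_i$ (as required in Definition~\ref{def:afs} and as is the case throughout the paper). Then for any threshold $t>0$, Chebyshev's inequality gives $\text{Pr}[|\varphi_i-\phi_i|>t]\le \mathbb{V}[\varphi_i]/t^{2}$. The natural choice here is $t=\epsilon_1|\phi_i|+\epsilon_2$, which is exactly the allowed deviation in axiom~\ref{axiom:general}. Taking complements yields $\text{Pr}[|\phi_i-\varphi_i|\le \epsilon_1|\phi_i|+\epsilon_2]\ge 1-\mathbb{V}[\varphi_i]/(\epsilon_1|\phi_i|+\epsilon_2)^{2}$.

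The key identification is then purely algebraic: factor out $\epsilon_1$ to write $(\epsilon_1|\phi_i|+\epsilon_2)^{2}=\epsilon_1^{2}(|\phi_i|+\epsilon_2/\epsilon_1)^{2}=\epsilon_1^{2}(|\phi_i|+\xi)^{2}$. Dividing $\mathbb{V}[\varphi_i]$ by this quantity and matching against Definition~\ref{def:afs}, namely $f_i=(|\phi_i|+\epsilon_2/\epsilon_1)^{2}/\mathbb{V}[\varphi_i]$, immediately gives $\mathbb{V}[\varphi_i]/(\epsilon_1|\phi_i|+\epsilon_2)^{2}=1/(\epsilon_1^{2}f_i)$, establishing the first inequality $1-1/(\epsilon_1^{2}f_i)$ as a lower bound on the probability.

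Finally, the second inequality $1-1/(\epsilon_1^{2}f_i)\ge 1-1/(\epsilon_1^{2}\underline{f})$ follows from the definition $\underline{f}\coloneqq \min_{j\in N}f_j$, so that $f_i\ge \underline{f}$ and hence $1/(\epsilon_1^{2}f_i)\le 1/(\epsilon_1^{2}\underline{f})$. There is no real obstacle here; the only subtle point is the sign of $\epsilon_1$ in the factoring step, which is fine since $\epsilon_1>0$ by the interpretation as a (multiplicative) error in axiom~\ref{axiom:general}, and noting that if $\epsilon_1^{2}\underline{f}\le 1$ the bound becomes vacuous but still formally correct.
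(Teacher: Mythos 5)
Your proposal is correct and follows essentially the same route as the paper: Chebyshev's inequality applied to the unbiased estimator $\varphi_i$ at threshold $\epsilon_1|\phi_i|+\epsilon_2$, the algebraic identification of $\mathbb{V}[\varphi_i]/(\epsilon_1|\phi_i|+\epsilon_2)^2$ with $1/(\epsilon_1^2 f_i)$ via \cref{def:afs}, and the final step from $f_i \geq \underline{f}$. Your write-up is in fact slightly cleaner, since the paper's displayed denominator $(\epsilon_1^2|\phi_i|+\epsilon_2)^2$ contains a typo (the exponent on $\epsilon_1$ should not appear inside the parenthesis) that your factoring step $(\epsilon_1|\phi_i|+\epsilon_2)^2=\epsilon_1^2(|\phi_i|+\xi)^2$ gets right.
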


\begin{proof}[Proof of \cref{proposition:chebyshev_fidelity}]
By Chebyshev's inequality, A0 in Equ.~\eqref{equ:A0} holds for some $i\in N$ w.p.~$\geq 1 - 1/(\epsilon_1^2 f_i)$ (\cref{lem:fidelity_axiom}). Let A0 hold for all $i\in N$ simultaneously, w.p.~$\geq 1-\delta_{A0}$ (the expression for $\delta_{A0}$ is derived later).

The case for \ref{axiom:nullity} is straightforward by substituting $\phi_i=0$ into \eqref{equ:A0}.

For \ref{axiom:symmetry} and \ref{axiom:desirability}, first the apply triangle inequality,
\[ |\phi_i - \varphi_i -\phi_j + \varphi_j| \leq | \phi_i - \varphi_i| +|-\phi_j + \varphi_j|\ , \]
then apply \eqref{equ:A0},
\begin{equation} \label{equ:intermediate}
    |\phi_i - \varphi_i -\phi_j + \varphi_j| \leq \epsilon_1(|\phi_i|+|\phi_j|) + 2 \epsilon_2\ .
\end{equation}

For \ref{axiom:symmetry}, substitute $\phi_i = \phi_j$ into \eqref{equ:intermediate}, 
\[ | - \varphi_i + \varphi_j| \leq \epsilon_1(|\phi_i|+|\phi_j|) + 2 \epsilon_2\ .\]

For \ref{axiom:desirability}, substitute $\delta_{ij}\coloneqq \phi_i - \phi_j$ and $\hat{\delta}_{ij}\coloneqq \varphi_i -\varphi_j$ into \eqref{equ:intermediate},
\[ |\delta_{ij} - \hat{\delta}_{ij}| \leq \epsilon_1(|\phi_i|+|\phi_j|) + 2 \epsilon_2\ .\]

Hence, \ref{axiom:nullity}-\ref{axiom:desirability} are satisfied (for all $i\in N$ or $i,j\in N$) if A0 is holds for all $i\in N$ simultaneously. 

To derive $\delta_{A0}$, consider two cases: i) all $\varphi_i$'s are independent, then $\delta_{A0} \leq (1-1/(\epsilon_1^2\underline{f}))^n$.

Otherwise, ii) $\delta_{A0} \leq n/(\epsilon_1^2\underline{f})$ (by the union bound of the complement):
\begin{align*}
    \text{Pr}[\forall i\in N, \text{A0 holds}] &= 1 - \text{Pr}[\exists i\in N, \text{A0 does not hold}] \\ 
    &\geq 1 - \sum_i \text{Pr}[\text{A0 does not hold for } i] \\  
    &\geq 1 - n / (\epsilon_1^2 \underline{f})
\end{align*}
where the first inequality is by the union bound and the second inequality is by \cref{lem:fidelity_axiom}.
Lastly, substituting \cref{definition:fairness-guarantee} completes the proof.
\end{proof}

\paragraph{Remark 1.} 
If all $\varphi_i$'s are independent, a tighter bound for fairness (shown above) is $p\coloneqq \prod_i(1 - 1/(\epsilon_1^2f_i))$ (w.p.~$\geq 1-p$ the properties are satisfied w.r.t.~$\epsilon_1,\epsilon_2$). Therefore, to improve fairness, we can instead improve its lower bound $p$. Specifically, we consider which $i$ to evaluate one additional budget so as to maximise the improvement in $p$. Let $f_i$ and $f_i'$ denote the fidelity score of $i$ when it has received $m_i$ budget and after being evaluated one additional budget (i.e., $m_i+1$ based on the $\varphi_i$ of the $m_i$ budget), respectively.

The (multiplicative) improvement $\Delta_p$ of $p$ is defined as $\Delta_p \coloneqq p' / p$ where $p'$ refers to the fairness bound after receiving one additional evaluation. Then, 
\begin{equation*}
    \Delta_p = \frac{p'}{p} = \frac{1 - \frac{1}{\epsilon_1^2f_i'}}{1-\frac{1}{\epsilon_1^2f_i}} = \frac{1 - \frac{1}{\epsilon_1^2(m_i+1)r_i}}{1-\frac{1}{\epsilon_1^2m_ir_i}} = \frac{\epsilon_1^2r_i - \frac{1}{m_i+1}}{\epsilon_1^2r_i - \frac{1}{m_i}}\ .
\end{equation*}
Observe that to maximise $\Delta_p$ (equivalently to improve $p$) which depends on $r_i, m_i$, we should select $i$ s.t., $r_i,m_i$ maximise $\Delta_p$ each time. This observation gives rise to a modified version of the Greedy Active Estimator (GAE): Instead of evaluating $\argmin_i f_i$, the modified algorithm evaluates $\argmax_i \Delta_p(r_i, m_i)$. However, one practical limitation is that $\epsilon_1$ has to be specified, which implies the improved bound (i.e., better $p$) is specific to some fixed $\epsilon_1$ and may not generalise to other values of $\epsilon_1$. Note that GAE circumvents this limitation since $\underline{f}$ does not require a specified $\epsilon_1$. 

Moreover, through analysing the partial derivatives of $\Delta_p$ w.r.t $r_i,m_i$, we believe finding $i$ via $\argmin f_i$ is a good surrogate for $\argmax \Delta_p(r_i,m_i)$ without having to pre-specify $\epsilon_1$.
Note that
\begin{equation*}
    \Delta_p = 1 + \frac{1}{\epsilon_1^2r_i + 1}\left( \frac{1}{m_i + 1} + \frac{\epsilon_1^2r_i}{1 - \epsilon_1^2r_im_i} \right)\ ,
\end{equation*}
which gives the partial derivative w.r.t. $r_i$ and $m_i$ as
\begin{equation*}
\begin{aligned}
        \frac{\partial \Delta_p}{\partial r_i} &= - \frac{\epsilon_1^2m_i}{(\epsilon_1^2r_im_i)^2(m_i+1)}\ , \\
        \frac{\partial \Delta_p}{\partial m_i} &= \frac{1}{\epsilon_1^2r_i + 1}\left( \frac{\epsilon_1^4r_i^2}{(1-\epsilon_1^2r_im_i)^2} - \frac{1}{(m_i+1)^2} \right)\ .
\end{aligned}
\end{equation*}
Observe that $\frac{\partial \Delta_p}{\partial r_i} < 0$ always holds, while $\frac{\partial \Delta_p}{\partial m_i} < 0$ when $\epsilon_1^2 < \frac{1}{r_i(2m_i+1)}$. Therefore, if $\epsilon_1$ is (sufficiently) small, then $\Delta_p$ is larger for smaller $r_i,m_i$ (i.e., select $i\in N$ with small $r_i,m_i$). As a surrogate, $\underline{f} = \min_i f_i = \min_i r_i m_i$ can capture this relationship and is thus the design choice in our algorithm (i.e., greedy selection).

\paragraph{Remark 2.} \textbf{Proof of independence of $\varphi_i, \varphi_j$.} Express $\varphi_i = f_i(\mathcal{X}_i)$ and $\varphi_j = f_j(\mathcal{X}_j)$, where $f_i, f_j: 2^\Pi \rightarrow \mathbb{R}$ are two deterministic functions that map (a set of) permutations to a real value (i.e., the respective SV estimate) and $\mathcal{X}_i$ is the random variable denoting the set of randomly sampled permutations for calculating $\varphi_i \gets f_i(\mathcal{X}_i)$ (elaborating lines 36-37).
Note that we assume all marginal contributions $\sigma_i(\pi), \sigma_j(\pi), \forall \pi\in \Pi$ are fixed (though unknown), given a well defined problem. It can be seen that $\varphi_i,\varphi_j$ are independent if $\mathcal{X}_i, \mathcal{X}_j$ are independent.

\subsection{Proof of \cref{corollary:k_axioms}}
\begin{proof}[Proof of Corollary \ref{corollary:k_axioms}]
We prove the corollary by considering two cases:

i) All $\varphi_i$'s are independent. From $\delta = 1- (1-1/(\epsilon_1^2\underline{f}))^n$, solve for $\underline{f}$,
\[ \underline{f} \geq  \frac{1}{\epsilon_1^2  (1 -  (1 - \delta)^{1/n})}\ ,\]
and since $r_i m_i \geq \underline{f}$
\[ m_i \geq \frac{1}{\epsilon_1^2  (1 - (1 - \delta)^{1/n})r_i} \ , \]
which implies the total budget
\begin{equation*}
\begin{aligned}
    m  
    &= \sum_{i\in N} m_i \\
    &\geq \sum_{i \in N} \frac{1}{\epsilon_1^2   (1 - (1 - \delta)^{1/n}) r_i } \\
    &= \epsilon_1^{-2}(1-(1-\delta)^{1/n})^{-1}\sum_{i \in N} \frac{1}{r_i}\\
    &\geq \epsilon_1^{-2}(1-(1-\delta)^{1/n})^{-1}\frac{n}{\max_{i\in N} r_i} \\
    &= \mathcal{O}(n\epsilon_1^{-2}(1-(1-\delta)^{1/n})^{-1})\ .
\end{aligned}
\end{equation*}

ii) Otherwise, $\varphi_i$'s may be dependent of each other. From $\delta = n /(\epsilon_1^2\underline{f})$, solve for $\underline{f}$,
\[\underline{f} \geq \frac{n}{\epsilon_1^2 \delta}\ , \]
and since $r_i m_i \geq \underline{f}$
\[ m_i \geq \frac{n}{\epsilon_1^2 \delta r_i} \ , \]
which implies the total budget
\begin{equation*}
\begin{aligned}
    m  
    &= \sum_i m_i \\
    &\geq \sum_i \frac{n}{\epsilon_1^2 \delta r_i}\\
    &= n\delta^{-1}\epsilon_1^{-2}\sum_{i\in N}\frac{1}{r_i} \\
    &\geq n\delta^{-1}\epsilon_1^{-2}\frac{n}{\max_{i \in N} r_i} \\
    &= \mathcal{O}(n^2\epsilon_1^{-2}\delta^{-1})\ .
\end{aligned}
\end{equation*}

\end{proof}

From the above derivations, it can be implied that any estimator which keeps track of $\underline{f}$ and stops soon long as the target $\underline{f}$ is achieved is able to obtain the upper bound budget complexity. As our proposed Greedy Active Estimator (GAE) iteratively updates $\underline{f}$, by terminating it when the current evaluation improves $\underline{f}$ to the desired value, GAE runs in the budget upper bound.

\subsection{Importance Sampling for Variance Reduction}
\label{app:var}

Following \citep{Castro2017,Kwon2022}, we restrict the sampling probability of a permutation $\pi$, $q_i(\pi)$, to depend on the cardinality of the predecessor set, $|P_i^{\pi}|$. 
Precisely, let $q_i'$ be a discrete distribution defined on the support $\{0,1,2,\ldots ,n-1\}$ that maps the cardinality $c$ of the predecessor set to the probability that any permutation of cardinality $c$ is drawn.
The sampling probability $q_i(\pi) = q_i'(|P_i^{\pi}|)/(n-1)!$ for a specific permutation $\pi$ because there are other $(n-1)!$ equally likely permutations $\pi'$ where $|P^\pi_i|= |P^{\pi'}_i|$ (i.e., the position of $i$ in permutations $\pi$ and $\pi'$ is the same).

We minimise the variance of the importance sampling estimator to derive the optimal importance sampling distributions $q_i'^*(c)$ (over the support $\{0,1,\ldots,n-1\}$) and $q_i^*(\pi)$ (over the support $\Pi$). Let $\tilde{\mathbb{V}}_{q_i}[\sigma_i(\pi)] \coloneqq \mathbb{V}_{\pi \sim q_i}[\sigma_i(\pi)/(q_i(\pi)n!)]$ denotes the \emph{probability adjusted} variance of $\sigma_i(\pi)$ \cite{importance_sampling}, using the sampling distribution $q_i$ supported on $\Pi$;\footnote{$q_i(\pi)$ is the probability of $\pi$ w.r.t.~$q_i$ (e.g., If $q_i = U$, then $q_i(\pi) = 1/n!$).} The variance of marginal contribution $\sigma_i(\pi)$ of $i$ w.r.t.~a permutation $\pi$ sampled from a proposal distribution $q_i$ is
\begin{equation}
\begin{aligned}
        \tilde{\mathbb{V}}_{q_i}[\sigma_i(\pi)]
        &= \sum_{\pi \in \Pi} \left(\frac{(\sigma_i(\pi)/n! - \phi_i q_i(\pi))^2}{q_i(\pi)}\right) \\
        &= \sum_{\pi \in \Pi} \frac{1}{n!} \left(\frac{(\sigma_i(\pi)/n! - \phi_i q_i(\pi))^2}{q_i(\pi)}\times n!\right).
\end{aligned}
\label{eq:importance_var}
\end{equation}
Using the Lagrange multiplier and differentiating Equ.~\eqref{eq:importance_optimal_var} to minimise the above variance, we will show that the following optimal distributions can be obtained:
\begin{equation}
    q_i'^*(c) =  \sqrt{\mathbb{E}_{\pi \sim U_c}[\sigma_i(\pi)^2]} \Big/ \left(\sum_{k=0}^{n-1}\sqrt{\mathbb{E}_{\pi \sim U_k}[\sigma_i(\pi)^2]} \right) \ ,
\label{eq:optimal_q_card}
\end{equation}
and therefore, by substituting $q_i(\pi) = q_i'(|P_i^{\pi}|)/(n-1)!$, we get the optimal sampling distribution of permutation $q_i^*(\pi)$
\begin{equation}
    q^*_i(\pi) = \frac{q'^*_i(|P^{\pi}_i|)}{(n - 1)!}\ .
\label{eq:optimal_q_perm}
\end{equation}
The optimality of $q_i'^*(c)$ and hence $q_i^*(\pi)$ can be proved as follows. Denote $\mathbb{E}_{U_c} = \mathbb{E}_{\pi \sim U_c}$ where $U_c$ is defined as in Appendix~\ref{app:estimate_params} to represent uniform distribution of permutations with a fixed cardinality. Further denote $\sigma = \sigma_i(\pi)$. Then, the variance can be rewritten as the average of $\mathbb{E}_{U_c}$ over all cardinalities $c \in \{0,1,2,. . .,n-1\}$
\begin{equation}
\begin{aligned}
    \tilde{\mathbb{V}}_{q_i}[\sigma_i(\pi)]
    &= \sum_{\pi \in \Pi} \frac{1}{n!} \left(\frac{(\sigma_i(\pi)/n! - \phi_i q_i(\pi))^2}{q_i(\pi)}\times n!\right) \\
    &= \sum_{c=0}^{n-1} \sum_{\pi \in \{\pi: |P_i^\pi|=c\}} \frac{1}{n!} \left(\frac{(\sigma_i(\pi)/n! - \phi_i q_i(\pi))^2}{q_i(\pi)}\times n!\right) \\
    &= \sum_{c=0}^{n-1} \sum_{\pi \in \{\pi: |P_i^\pi|=c\}} \frac{1}{n!} \left(\frac{(\sigma_i(\pi)/n! - \phi_i q_i'(c)/(n-1)!)^2}{q_i'(c)/(n-1)!}\times n!\right) \\
    &= \sum_{c=0}^{n-1} \sum_{\pi \in \{\pi: |P_i^\pi|=c\}} \frac{1}{n!} \left(\frac{(\sigma_i(\pi) - n\phi_i q_i'(c))^2}{n!nq_i'(c)}\times n!\right) \\
    &= \frac{1}{n} \sum_{c=0}^{n-1} \sum_{\pi \in \{\pi: |P_i^\pi|=c\}} \frac{1}{(n-1)!} \left(\frac{(\sigma_i(\pi) - n\phi_i q_i'(c))^2}{n!nq_i'(c)}\times n!\right) \\
    & = \frac{1}{n} \sum_{c=0}^{n-1} \mathbb{E}_{U_c}\left[\frac{(\sigma - n\phi_i q'^*_i(c))^2}{n!nq'_i(c)} \times n! \right] \\
    & = \frac{1}{n} \sum_{c=0}^{n-1} n! \times \left( \frac{\mathbb{E}_{U_c}[\sigma^2]}{n!nq'_i(c)} - \frac{2\mathbb{E}_{U_c}[\sigma]\phi_i}{n!} + \frac{n\phi_i^2q'_i(c)}{n!} \right) \\
    &= \frac{1}{n} \sum_{c=0}^{n-1} \left( \frac{\mathbb{E}_{U_c}[\sigma^2]}{nq'_i(c)} - 2\mathbb{E}_{U_c}[\sigma]\phi_i + n\phi_i^2q'_i(c) \right) \\
    &= \frac{1}{n} \sum_{c=0}^{n-1} \left( \frac{\mathbb{E}_{U_c}[\sigma^2]}{nq'_i(c)} - 2\mathbb{E}_{U_c}[\sigma]\phi_i \right) + \phi_i^2\ .
\end{aligned}
\label{eq:var_card}
\end{equation}
For a fixed $c$, minimise the summands in Equ.~\eqref{eq:var_card} over the choice of $q_i'(c)$ s.t.~$\sum_{c=0}^{n-1}q_i'(c) = 1$, which can be reformulated as the following Lagrangian after discarding the constant term $n\phi_i^2$,
\begin{align*}
    \begin{split}
            \mathcal{G} &= \frac{1}{n} \sum_{c=0}^{n-1} \left( \frac{\mathbb{E}_{U_c}[\sigma^2]}{nq'_i(c)} - 2\mathbb{E}_{U_c}[\sigma]\phi_i
            \right) - \lambda \left(1- \sum_{c=0}^{n-1}q_i'(c) \right).
    \end{split}
\end{align*}
Set its partial derivative w.r.t.~$q_i'(c)$ to $0$,
\begin{align*}
    \begin{split}
    \frac{\partial \mathcal{G}}{\partial q_i'(c)} &= (
    - \frac{\mathbb{E}_{U_c}[\sigma^2]}{n^2q_i'(c)^2}) + \lambda = 0 \\
    q_i'^*(c) &= \frac{\sqrt{\mathbb{E}_{U_c}[\sigma^2]}}{\sqrt{
    n^2\lambda}} \propto \sqrt{\mathbb{E}_{U_c}[\sigma^2]}\ .
    \end{split}
\end{align*}
The direct proportionality uses the fact that $q_i'^*(c)$ is independent of $n$ and $\lambda$.
To satisfy $\sum_{c=0}^{n-1}q_i'^*(c) = 1$, we standardise all $q_i'^*(c)$ and obtain Equ.~\eqref{eq:optimal_q_card}, which completes the proof.

Now, we prove that importance sampling by cardinality strictly outperforms MC. Denote $S = \sum_{k=0}^{n-1} \sqrt{\mathbb{E}_k[\sigma^2]}$. Plug Equ.~\eqref{eq:optimal_q_card} back into Equ.~\eqref{eq:var_card},
\begin{align*}
    \begin{split}
    \tilde{\mathbb{V}}_{q_i^*}[\sigma_i(\pi)] 
    & = \frac{1}{n} \sum_{c=0}^{n-1} \left( \frac{\sqrt{\mathbb{E}_{U_c}[\sigma^2]}S}{n} - 2\mathbb{E}_{U_c}[\sigma]\phi_i + \frac{n\phi_i^2\sqrt{\mathbb{E}_{U_c}[\sigma^2]}}{S} \right).
    \end{split}
\end{align*}
For MC, we take advantage of the linearity of expectation and evaluate its variance by breaking variance into strata of different cardinalities. Substitute $q_i(\pi)$ with $1/n!$,
\begin{align*}
    \begin{split}
    \tilde{\mathbb{V}}_{U}[\sigma_i(\pi)]
    & = \frac{1}{n} \sum_{c=0}^{n-1} \mathbb{E}_{U_c}\left[\frac{(\sigma/n! - \phi_i/n!)^2}{(1/n!)^2}\right] \\
    & = \frac{1}{n} \sum_{c=0}^{n-1} \mathbb{E}_{U_c}\left[(\sigma - \phi_i)^2\right] \\
    & = \frac{1}{n} \sum_{c=0}^{n-1} \left( \mathbb{E}_{U_c}[\sigma^2] - 2\phi_i\mathbb{E}_{U_c}[\sigma] + \phi_i^2 \right).
    \end{split}
\end{align*}
The two variances are compared via a subtraction,
\begin{equation*}
    \begin{aligned}
        \tilde{\mathbb{V}}_{q_i^*}[\sigma_i(\pi)]  - \tilde{\mathbb{V}}_{U}[\sigma_i(\pi)]
        &= \frac{\phi_i^2}{n}\sum_{c=0}^{n-1} \left(\frac{n\sqrt{\mathbb{E}_{U_c}[\sigma^2]}}{S} - 1\right) + \sum_{c=0}^{n-1}\frac{1}{n}\left(\frac{\sqrt{\mathbb{E}_{U_c}[\sigma^2]}S}{n} - \mathbb{E}_{U_c}[\sigma^2]\right) \\
        &= 0 + \sum_{c=0}^{n-1}\frac{1}{n}\left(\frac{\sqrt{\mathbb{E}_{U_c}[\sigma^2]}S}{n} - \mathbb{E}_{U_c}[\sigma^2]\right) \\
        &= \frac{1}{n} \left(\frac{S^2}{n} - \sum_{c=0}^{n-1}\mathbb{E}_{U_c}[\sigma^2]\right).
    \end{aligned}
\end{equation*}
Next, note that $S^2 = (\sum_{k=0}^{n-1}\sqrt{\mathbb{E}_k[\sigma^2]})^2 \leq n\sum_{k=0}^{n-1}\mathbb{E}_k[\sigma^2]$ by the Cauchy Schwarz inequality. Then,
\begin{equation}
    \begin{aligned}
        \tilde{\mathbb{V}}_{q_i^*}[\sigma_i(\pi)]  - \tilde{\mathbb{V}}_{U}[\sigma_i(\pi)]
        &= \frac{1}{n} \left(\frac{S^2}{n} - \sum_{c=0}^{n-1}\mathbb{E}_{U_c}[\sigma^2]\right) \leq 0 \\
        \implies  \tilde{\mathbb{V}}_{q_i^*}[\sigma_i(\pi)]  &\leq \tilde{\mathbb{V}}_{U}[\sigma_i(\pi)]
    \end{aligned}
    \label{eq:importance_optimal_var}
\end{equation}
where equality holds if and only if the equality condition for the Cauchy Schwarz inequality holds, i.e., $\mathbb{E}_{U_0}[\sigma^2] = \mathbb{E}_{U_1}[\sigma^2] = ... = \mathbb{E}_{U_{n-1}}[\sigma^2]$.

\subsection{Proof of \cref{proposition:optimality_of_greedy}}

Before proving \cref{proposition:optimality_of_greedy}, we first provide a formal version of the proposition.
\begin{proposition*}[{\emph{formal} \bf Budget Optimality of Greedy Selection}]
For a fixed budget $m$, denote $\underline{f}_{A}$ as the minimum FS obtained by estimation algorithm $A$. Let $\mathcal{A}_q$ be the set of all sampling-based estimation methods (defined in Sec.~\ref{sec:preliminaries}) that sample permutations from a fixed distribution $q$. Then, greedy selection (this is, iteratively selecting $i = \argmin_{j\in N} f_j$) with the same underlying distribution $q$ achieves the optimal minimum FS, $\underline{f}_{\text{GS}, q}$. In other words, $\underline{f}_{\text{GS}, q} = \max_{A \in \mathcal{A}_q} \underline{f}_{A}$.
\end{proposition*}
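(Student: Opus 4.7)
The plan is to reduce the proposition to a deterministic max--min integer allocation problem and then establish greedy optimality by an exchange argument. First, I would observe that every method $A \in \mathcal{A}_q$ is completely characterised, for the purpose of fidelity scores, by the integer vector $(m_1,\ldots,m_n)$ of per-example sample counts with $\sum_i m_i = m$: because $q$ is fixed across iterations, the $m_i$ permutations contributing to $\varphi_i$ are i.i.d.~draws from $q$, so $\mathbb{V}[\varphi_i] = V_i/m_i$ for some per-example variance $V_i$ determined only by $q$ (namely $\mathbb{V}_{\pi\sim q}[\sigma_i(\pi)]$ or its importance-weighted analogue). Hence $f_i = m_i r_i$ with invariability $r_i \coloneqq (|\phi_i|+\xi)^2/V_i$ depending only on $i$ and $q$, not on $A$. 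The statement therefore reduces to the integer program of maximising $\min_i m_i r_i$ subject to $\sum_i m_i = m$ and $m_i \in \mathbb{Z}_{\geq 0}$, and I claim that the greedy rule (picking $i = \argmin_j m_j r_j$ at each step) attains this maximum.

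The second step is an exchange argument. Let $\boldsymbol{m}^{\text{GS}}$ denote the greedy allocation and suppose, for contradiction, that some competitor $\boldsymbol{m}^{*}$ satisfies $\min_i m_i^{*} r_i > \min_i m_i^{\text{GS}} r_i$. Pick $j^{*} \in \argmin_i m_i^{\text{GS}} r_i$; the strict gap forces $m_{j^{*}}^{\text{GS}} < m_{j^{*}}^{*}$, and budget conservation then yields some $k$ with $m_k^{*} \leq m_k^{\text{GS}} - 1$ (so in particular greedy incremented $m_k$ at least once). Consider the last iteration $t$ at which greedy did so. By the greedy rule, $k$ attained the smallest fidelity at step $t$, whence $(m_k^{\text{GS}} - 1)\, r_k = m_k^{\text{GS}(t)} r_k \leq m_{j^{*}}^{\text{GS}(t)} r_{j^{*}} \leq m_{j^{*}}^{\text{GS}} r_{j^{*}}$, the last inequality because $m_{j^{*}}$ can only grow after step $t$. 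Combining this with the hypothesised strict gap and $m_k^{*} r_k \leq (m_k^{\text{GS}} - 1) r_k$ produces
\[ (m_k^{\text{GS}} - 1) r_k \leq m_{j^{*}}^{\text{GS}} r_{j^{*}} < \min_i m_i^{*} r_i \leq m_k^{*} r_k \leq (m_k^{\text{GS}} - 1) r_k, \]
a contradiction.

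The main obstacle I anticipate lies in the modelling step rather than in the combinatorics: one must argue that every $A \in \mathcal{A}_q$ really does reduce to a per-index variance of the form $V_i/m_i$. For independent estimators such as MC or stratified sampling this is immediate, but for methods like antithetic Owen or Sobol that couple the evaluations of different $i$'s through a single sampled object, I would invoke the per-permutation budget convention of \cref{sec:preliminaries}: one budget unit corresponds to one marginal contribution evaluation for a specific $i$, drawn from $q$. Coupling across $i$'s then affects only the joint distribution of $(\varphi_1,\ldots,\varphi_n)$ and not the marginal variance of each individual $\varphi_i$, so $f_i = m_i r_i$ is preserved. A secondary point worth handling carefully is the bootstrapping prefix: since every algorithm in $\mathcal{A}_q$ shares the same initial samples, the exchange argument applies verbatim to the incremental allocation produced during the main loop.
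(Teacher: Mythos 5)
Your proposal is correct, and its first step is exactly the paper's: because $q$ is fixed, the fidelity score is linear in the per-example count, $f_i = m_i r_i$ with invariability $r_i$ depending only on $i$ and $q$, so any method in $\mathcal{A}_q$ is equivalent (for the minimum FS) to an integer allocation $(m_1,\ldots,m_n)$ summing to $m$. Where you diverge is in how greedy optimality of this max--min allocation is established. The paper argues via a budget-counting/threshold step: letting $f^*$ be the optimal minimum FS, each $i$ with initial (bootstrapped) score $f_i^{(0)} < f^*$ needs at least $e_i = \lceil (f^* - f_i^{(0)})/r_i \rceil$ evaluations, and since greedy only ever spends budget on the current minimiser (hence never on an index already at or above $f^*$), it reaches $f^*$ with exactly $\sum_i e_i$ evaluations, the minimum possible; with the same total budget $m$ it therefore attains the optimal $\underline{f}$. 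You instead give an exchange/contradiction argument: a competitor with a strictly larger minimum must over-allocate to greedy's bottleneck index $j^*$ and under-allocate to some $k$, and the greedy rule at the last increment of $k$ yields $(m_k^{\text{GS}}-1)r_k \leq m_{j^*}^{\text{GS}} r_{j^*}$, closing a contradictory chain of inequalities. Both arguments are sound under the same idealised assumption (deterministic $f_i = m_i r_i$, which the paper also makes implicitly); yours is more self-contained and rigorous about why no competitor can do better, while the paper's threshold formulation is shorter and dovetails with its budget-complexity result (Corollary~\ref{corollary:k_axioms}), since it directly exhibits the minimum budget needed to certify a target $\underline{f}$. Your closing remarks on why coupled estimators (Owen, Sobol) still have marginal variance $V_i/m_i$ under the one-evaluation-per-budget convention, and on the shared bootstrapping prefix, address modelling points the paper leaves implicit.
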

In short, \cref{proposition:optimality_of_greedy} suggests that if we break up a sampling-based estimation method into two parts: 1) choosing a $\varphi_i$ to evaluate, and 2) sampling a permutation for it, then greedy selection chooses the $\varphi_i$'s in such a way that the resulting minimum fidelity score is the highest, given a fixed underlying sampling distribution.

\begin{proof}[Proof of \cref{proposition:optimality_of_greedy}]
First recall the \emph{invariability} of $\varphi_i$
\begin{equation*} 
    r_{i} \coloneqq \frac{(|\phi_i| + \xi)^2}{\mathbb{V}[\sigma_i(\pi)]} = \frac{f_i}{m_i}
\end{equation*}
where $m_i$ is the budget $i$ receives. Since the sampling distribution $q$ is fixed, $r_i$ is fixed and does not vary with $m_i$. 
Suppose the optimal $\underline{f}$ is $f^*$. To achieve $f^*$, every $i \in N$ with an initial FS (after bootstrapping) $f_i^{(0)} < f^*$ must receive at least $e_i = \lceil (f^* - f_i^{(0)}) / r_i \rceil$ budget to ensure that its resulting FS reaches/exceeds $f^*$. Now, suppose greedy selection has made $e_j$ evaluations for some $j \in N$. If every $i\in N\setminus \{j\}$ has FS greater than $f^*$, then $f^*$ is achieved and the algorithm terminates. Otherwise it evaluates some $i$ with FS smaller than $f^*$. As such, each $i \in N$ receives exactly $e_i$ evaluations, the minimum budget needed to achieve $f^*$. Equivalently, with the same total budget, greedy selection achieves the highest $\underline{f}$ among all sampling-based algorithms with the underlying distribution $q$.
\end{proof}

\subsection{Proof of \cref{prop:active_greedy_mc}}
\label{app:proof_greedy_slection}

\begin{proof}[Proof of \cref{prop:active_greedy_mc}]

We first derive $\underline{f}_{\text{greedy}}$ and $\underline{f}_{\text{MC}}$. For MC, as each $i \in N$ receives $m / n$ evaluations,
\begin{equation} 
    \underline{f}_{\text{MC}} = \min f_i(\varphi_i) = \min_{i \in N} \frac{m}{n} r_i
    = \frac{m\min_{i \in N}r_i}{n}\ .
    \label{eq:mc_k}
\end{equation}

For greedy selection, to simplify analysis, we made a mild assumption that each $r_i$ is relatively small as compared to the number of samples $m$ so that with a careful allocation of $m_i$ samples to $i$, it is possible to achieve $r_i m_i = r_j m_j$ for all $i,j \in N$. Then,
\begin{equation}
    \begin{aligned}
     \forall i, j \in N, r_i m_i & = r_j m_j \\
    \sum_{i \in N} m_i & = m \\
    \Longrightarrow m_i & = \frac{m}{r_i \sum_{j \in N} 1/r_j} \\
    \Longrightarrow \underline{f}_{\text{greedy}} & = m_i r_i = \frac{m}{\sum_{j \in N} 1/r_j}\ .
    \end{aligned}
    \label{eq:greedy_k}
\end{equation}

With Equ.~\eqref{eq:mc_k} and Equ.~\eqref{eq:greedy_k}, $\underline{f}_{\text{GAE}}$ can be related to $\underline{f}_{\text{greedy}}$. First, from $\tilde{\mathbb{V}}_{q_i^*}[\sigma_i(\pi)]  \leq \tilde{\mathbb{V}}_{U}[\sigma_i(\pi)]$ in Equ.~\eqref{eq:importance_optimal_var}, so for any $i \in N$,
\begin{equation*}
\begin{aligned}
        \frac{1}{r_{i,U}} - \frac{1}{r_{i, q_i^*}} &= \frac{\tilde{\mathbb{V}}_{U}[\sigma_i(\pi)]}{(|\phi_i| + \xi)^2} - \frac{\tilde{\mathbb{V}}_{q_i^*}[\sigma_i(\pi)]}{(|\phi_i| + \xi)^2} \\
    &= \frac{\tilde{\mathbb{V}}_{U}[\sigma_i(\pi)] - \tilde{\mathbb{V}}_{q_i^*}[\sigma_i(\pi)]}{(|\phi_i| + \xi)^2} \geq 0\ , 
\end{aligned}
\end{equation*}
which implies $\forall i \in N, 1/r_{i,U} \geq 1/r_{i, q_i^*}$ and hence $\sum_{j \in N} 1/r_{j,U} \geq \sum_{j \in N} 1/r_{j,q_j^*}$. Then,
\begin{equation*}
    \begin{aligned}
    \frac{\underline{f}_{\text{GAE}}}{\underline{f}_{\text{greedy}}} &= \frac{\frac{m}{\sum_{j \in N} 1/r_{j,q_j^*}}}{\frac{m}{\sum_{j \in N} 1/r_{j,U}}} \\
    &= \frac{\sum_{j \in N} 1/r_{j,U}}{\sum_{j \in N} 1/r_{j,q_j^*}} \\
    & \geq 1
    \end{aligned}
\end{equation*}
where equality is taken if and only if $\forall i \in N,\  r_{i,U} = r_{i, q_i^*}$, which completes the first part of the proof. 

For the second part of the proof, with Equ.~\eqref{eq:mc_k} and Equ.~\eqref{eq:greedy_k}, similarly divde  $\underline{f}_{\text{greedy}}$ by $\underline{f}_{\text{MC}}$,
\begin{equation}
\begin{aligned}
    \frac{\underline{f}_{\text{greedy}}}{\underline{f}_{\text{MC}}} 
    & = \frac{\frac{m}{\sum_{j \in N} 1/r_j}}{\frac{m\min_{i \in N}r_i}{n}} \\
    & = \frac{n}{\min_{i \in N}r_i \sum_{j \in N} 1/ r_j} \\
    & = \frac{n}{\sum_{j \in N}\frac{\min_{i \in N}r_i}{r_j}} \\
    & \geq \frac{n}{\sum_{j \in N}\frac{r_j}{r_j}} \\
    & = \frac{n}{n} = 1
\end{aligned}
\label{eq:greedy_k_division}
\end{equation}
where the inequality becomes equality if and only if $\forall i,j \in N,\  r_i = r_j$.

\end{proof}

\subsection{Proof of \cref{proposition:pe_pdp}}

\begin{proposition} \label{proposition:pe_pdp}
Given fixed $r_i$ for each $i$, greedy selection satisfies PDP.
\begin{proof}[Proof of \cref{proposition:pe_pdp}]

We show greedy selection satisfies PDP. To see this, let an alternative estimation process produce final fidelity scores $\boldsymbol{f}'$ satisfying $k^*$ such that $\forall k \in N \setminus \{i,j\}, f_k' = f_k$ and $f_i' + f_j' = f_i + f_j$. Without loss of generality, let $f_i < f_j$. Further let the initial fidelity score of each $i \in N$ be $f_i^{(0)}$ and consider two cases: 1) $f_i^{(0)} = f_i$. In this case, greedy selection makes no evaluations on $i$. Hence $f_i' \geq f_i$ since making evaluations on $i$ can only improve its fidelity score. Thus $|f_i - f_j| = f_j - f_i = f_i' + f_j' - 2f_i \leq f_i' + f_j' - 2f_i' = f_i' - f_j' \leq |f_i' - f_j'|$; 2) $f_i^{(0)} < f_i$. In this case greedy selection makes evaluations on $i$. However, notice that greedy selection stops improving $f_i$ once $f_i \geq k^*$. Hence, for any $f_i'$ that satisfies $k^*$, it must be the case that $f_i' \geq f_i$. As such, reuse the argument for case 1, we have $|f_i - f_j| < |f_i' - f_j'|$. Therefore, under no circumstance is an alternative estimation process preferred over greedy selection, namely, greedy selection satisfies PDP.
\end{proof}
\end{proposition}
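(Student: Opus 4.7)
The plan is to show that for any alternative allocation producing $\boldsymbol{f}'$ with the PDP hypotheses (a) $f_k' = f_k$ for $k \notin \{i,j\}$ and (b) $f_i' + f_j' = f_i + f_j$, the weak inequality $|f_i - f_j| \leq |f_i' - f_j'|$ holds, so greedy is never strictly dispreferred. Without loss of generality take $f_i \leq f_j$ in greedy's output, and use the deterministic identity $f_k = m_k r_k$ (valid because the $r_k$ are fixed by hypothesis).

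From (a) together with fixed $r_k$ we get $m_k' = m_k$ for all $k \notin \{i,j\}$. Because both allocations share the same total budget, this forces $m_i + m_j = m_i' + m_j'$. Substituting into (b) $\,m_i' r_i + m_j' r_j = m_i r_i + m_j r_j\,$ and simplifying yields the compact identity $(m_i' - m_i)(r_i - r_j) = 0$. Either $(m_i', m_j') = (m_i, m_j)$, in which case $\boldsymbol{f}' = \boldsymbol{f}$ and the inequality holds with equality; or $r_i = r_j =: r$. In this nontrivial branch, bootstrap assigns equal $m_i^{(0)} = m_j^{(0)}$ and hence $f_i^{(0)} = f_j^{(0)}$. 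Whenever greedy's argmin rule selects a point from $\{i,j\}$, it picks the one with smaller $m$ (since $f = m r$ with common $r$), while evaluations on other entries leave $m_i, m_j$ unchanged. A short induction on the subsequence of greedy steps that target $\{i,j\}$ maintains $|m_i - m_j| \leq 1$ throughout, so at termination $|m_i - m_j| \in \{0,1\}$ depending on the parity of $m_i + m_j$. For any competing integer split with the same sum, parity forces $|m_i' - m_j'| \geq |m_i - m_j|$; multiplying by $r$ gives $|f_i - f_j| \leq |f_i' - f_j'|$, which is what is needed.

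The main obstacle is justifying that greedy's choices on $\{i,j\}$ in the $r_i = r_j$ regime are governed purely by the sample counts rather than by noisy FS estimates: in the implemented algorithm, different bootstrap samples could produce observed $f_i \neq f_j$ even when $r_i = r_j$ exactly. The proposition's ``given fixed $r_i$'' hypothesis is precisely the device that converts FSs into deterministic functions of the sample counts, removing this stochasticity. A smaller subtlety is the tie-breaking rule when $f_i = f_j$ mid-execution, but any consistent rule suffices for the inductive invariant $|m_i - m_j| \leq 1$.
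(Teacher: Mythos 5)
Your proposal is correct under the same idealised model the paper itself uses (deterministic $f_k = m_k r_k$ with fixed $r_k$ and equal bootstrap counts), but it reaches the conclusion by a genuinely different route. The paper frames the comparison through a target threshold $k^*$: greedy never overshoots any coordinate (it either leaves $f_i$ at its initial value or stops the moment $f_i \geq k^*$), so any alternative that also meets $k^*$ must have $f_i' \geq f_i$ for the smaller coordinate of the pair; combining this with the sum condition $f_i'+f_j'=f_i+f_j$ gives $|f_i-f_j| = f_i'+f_j'-2f_i \leq f_i'+f_j'-2f_i' \leq |f_i'-f_j'|$ in two lines. You instead stay with the budget-division form of PDP: from $f_k'=f_k$ off the pair and a common total budget you deduce $m_i+m_j=m_i'+m_j'$, and the identity $(m_i'-m_i)(r_i-r_j)=0$ shows that any nontrivial PDP-comparable alternative forces $r_i=r_j$; in that branch the greedy argmin rule preserves the invariant $|m_i-m_j|\leq 1$ (evaluations outside the pair leave both counts unchanged, and within the pair greedy can only increment the weakly smaller count), so a parity argument on integer splits with a fixed sum yields $|f_i'-f_j'|\geq|f_i-f_j|$. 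What each buys: your structural lemma isolates the fact that PDP transfers can only be contemplated between entries of equal invariability and argues directly against any other division of the same budget, at the cost of the balancedness-plus-parity induction and the (correctly flagged) reliance on the fixed-$r_i$ hypothesis to make mid-run FS comparisons deterministic; the paper's argument is shorter, needs no induction over the run, and compares against any alternative achieving the same minimum-FS target irrespective of how much budget it spends. Both establish the intended conclusion that no alternative allocation is strictly PDP-preferred to greedy's.
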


To give an intuitive illustration of PDP, consider a dataset (of $3$ training examples) evaluated using two estimators $p_1$ and $p_2$ that lead to FSs of $(25,5,5)$ and $(20,10,5)$ respectively. Though they both have $\underline{f}=5$ (thus the same fairness guarantee by \cref{corollary:k_axioms}), $p_1$ is less fair (than $p_2$) in its allocation of estimation budget as $p_1$ overly focuses on the $1$st training example (instead of the other two).
Note PDP prefers $(20,10,5)$ to $(25,5,5)$, as desired.
Because PDP is w.r.t.~the estimation method and not individual $\varphi_i$, it can be considered a characteristic of the estimation and thus not included in the fairness guarantee (\cref{definition:fairness-guarantee}).

\section{Additional Experimental Results} \label{app:experiments}

\subsection{Dataset License}

Covertype \citep{misc_covertype_31}: Apache License 2.0;
Breast cancer \citep{breast_cancer}: Apache License 2.0;
Iris \citep{iris}: Apache License 2.0;
Adult income \citep{adult_income}: Apache License 2.0;
Wine \citep{wine}: Apache License 2.0;
Diabetes \citep{Efron_2004}: : Attribution-NonCommercial 4.0 International (CC BY-NC 4.0);
MNIST \citep{deng2012mnist}: Creative Commons Attribution-Share Alike 3.0;
CIFAR-10 \citep{cnn_cifar10}: The MIT License (MIT);
Movie reviews \citep{pang2005seeing_mr}: BSD 3-Clause "New" or "Revised" License;
Stanford Sentiment Treebank \citep{kim2014convolutional_SST5}: BSD 3-Clause "New" or "Revised" License;
Credit Card~\cite{credit_card_dataset}: Database Contents License (DbCL); 
Uber \& Lyft~\cite{uber_lyft_dataset}: CC0 1.0 Universal (CC0 1.0); 
Used Car~\cite{used_car_dataset}: CC0 1.0 Universal (CC0 1.0);
Hotel Reviews~\cite{hotel_reviews_dataset}: Attribution-NonCommercial 4.0 International (CC BY-NC 4.0).

\subsection{Computational Resources}

For all our experiments requiring only CPUs, we use a server with 2 $\times$ Intel Xeon Silver 4116 ($2.1$Ghz) as the computing resource. 
For experiments requiring GPUs (i.e., in \textbf{P3.} we train one model for one agent for multiple agents), we utilise a server with Intel(R) Xeon(R) Gold 6226R CPU @$2.90$GHz and four NVIDIA GeForce RTX 3080's.

\subsection{Additional Experimental Settings}

\paragraph{Utility of the null set.} As there is no standard way of defining the utility of the null set (i.e., $v(\emptyset)$), in all our experiments using test accuracy as the utility function, we initialise the ML learner by randomly picking a coalition $C_0$ comprising of two entries, one from each class \textbf{outside} the training set, and set $v(\emptyset)$ as the utility value of $C_0$. For each coalition $C$, we define $v(C)$ as the utility value of $C \cup C_0$. This way, we make sure that at least one entry from each class is present to train the classifier. We do the same for all experiments using negative MSE as the utility function, but picking only one entry outside the training set as we only require one entry to train a regressor. 
For experiments on other specific scenarios, $\nu(\emptyset)$ follows the respective references.

\paragraph{Approximation of $f_i$.} In \cref{def:afs}, $f_i$ depends on $\phi_i$ and $\mathbb{V}[\varphi_i]$ which are intractable to obtain in practice. Therefore, suppose that we have made $m$ evaluations, $f_i$ is approximated by first estimating $\phi_i \approx \varphi_i = 1/m \sum_{t=1}^{m} \sigma_i(\pi_t)$ and $\mathbb{V}_{\pi \sim q}[\varphi_i] \approx s^2 / m = 1/[m(m-1)] \sum_{t=1}^{m} (w_t\sigma_i(\pi_t) - \varphi_i)^2$, where $w_t = 1 / (q(\pi_t)n!)$ (for sampling over cardinalities, we use $w_t = 1 / (q'(|P_i^\pi|)n)$). Then $f_i \approx m \times \varphi_i / s^2$.

\paragraph{Additional details on other SV estimation baselines}
We compare against Sobol sequences out of the four proposed methods in \citep{Mitchell2021} because it is the most computationally efficient and is reported to have the best/competitive performance.

\paragraph{Additional details for \textbf{P2.}.}
We partition a dataset (e.g., used-car price prediction) to $n=8$ agents/data providers, so that each data provider exclusively owns data (price information) of cars from a particular manufacturer (e.g., Audi, Ford, etc). For the two datasets considered for \textbf{P2.}, we set $n=8$ or $n=10$, which is larger than $n=3$ in the original work \citep{xu2021vol}.

While \textbf{P2.} provides a dataset valuation function based on the theoretical framework of linear regression, our experiments (following \citep{xu2021vol}) explore beyond linear regression and consider applications of DNNs
For instance, we perform additional feature pre-processing such as the GloVe word embeddings \citep{pennington2014glove} to create a $8$-dimensional pre-processed feature from a bidirectional long short-term memory model for hotel reviews dataset.
Subsequently, the linear regression/RVSV is w.r.t.~the pre-processed $8$-dimensional feature.

\paragraph{Additional details for \textbf{P3.}.}

In the CML experiments, the uber lyft ride price (hotel review sentiment) prediction dataset presents a $12$ ($8$)-dimensional regression task.
Therefore, the experiment setting follows \citep{Hwee2020} to calculate the \emph{mutual information} of the $12$ or $8$ parameters of a Bayesian linear regression using a Gaussian process (with a radial basis kernel whose lengthscale parameter is automatically learnt).
In the FL experiments, the dataset partition, used ML models and training hyperparameters such as batch size, learning rate etc, follow \citep{xu2021gradient}.\footnote{\url{https://github.com/XinyiYS/Gradient-Driven-Rewards-to-Guarantee-Fairness-in-Collaborative-Machine-Learning}.}

\subsection{Additional Results for $f_i$ Against APE} \label{app:ri_mape}

Under the same experiment setting as \cref{fig:ri_mape}, additional experimental results for $f_i$ Against APE using different dataset and ML algorithm combinations are in \cref{fig:ri_ape_app}.

We randomly select a subset of $50$ training examples from the Covertype dataset to train using a logistic regression classifier and set test accuracy as the utility function $v$ \citep{pmlr-v97-ghorbani19c}. 
Since the exact SV (for comparison in APE) is intractable, we approximate it (as the ground truth in APE) using the MC estimate (denoted by $\bar{\phi}$), i.e., an average of $5000$ marginal contributions w.r.t.~uniformly randomly selected permutations. As $f_i$ is also intractable, we approximate it using $f_i \approx 50 \times \Bar{\phi} / s^2$ where $s^2 = \left( (\sum_{t=1}^{50} \sigma_i(\pi_t)^2) - (\sum_{t=1}^{50} \sigma_i(\pi_t))^2 / 50 \right) / 49$. We evaluate each training example with $50$ permutations drawn from simple random sampling and plot $f_i$ against $\sqrt{1/\text{APE}}$ in \cref{fig:ri_mape} which shows a higher $f_i$ corresponds to a lower APE, as expected. 
Note the `ground truth' is approximated with MC estimates with $100$ times the budget than in the actual estimation \citep{jia2019towards,xu2021gradient}.

\begin{figure}[!ht]
    \centering
    \begin{subfigure}[t]{0.23\textwidth} \centering
    \includegraphics[width=\textwidth]{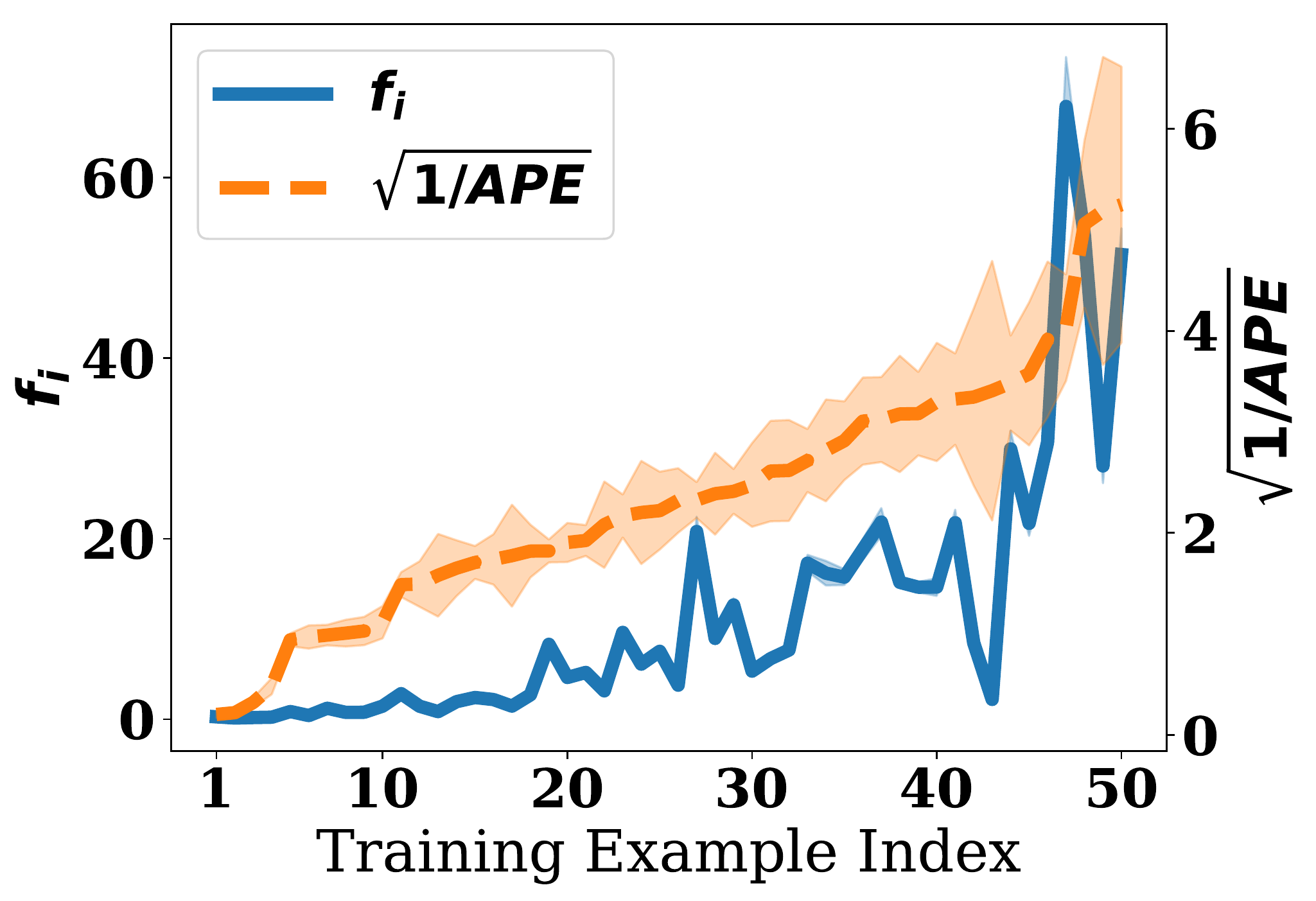}
    \caption{Covertype Logistic: $(0.701 \pm 0.019)$}
    \end{subfigure}
\hfill
    \begin{subfigure}[t]{0.23\textwidth} \centering
    \includegraphics[width=\textwidth]{figures/ri_mape_breast_cancer_logistic.pdf}
    \caption{Beast Cancer Logistic: $(0.678 \pm 0.015)$}
    \end{subfigure}
\hfill
    \begin{subfigure}[t]{0.23\textwidth} \centering
    \includegraphics[width=\textwidth]{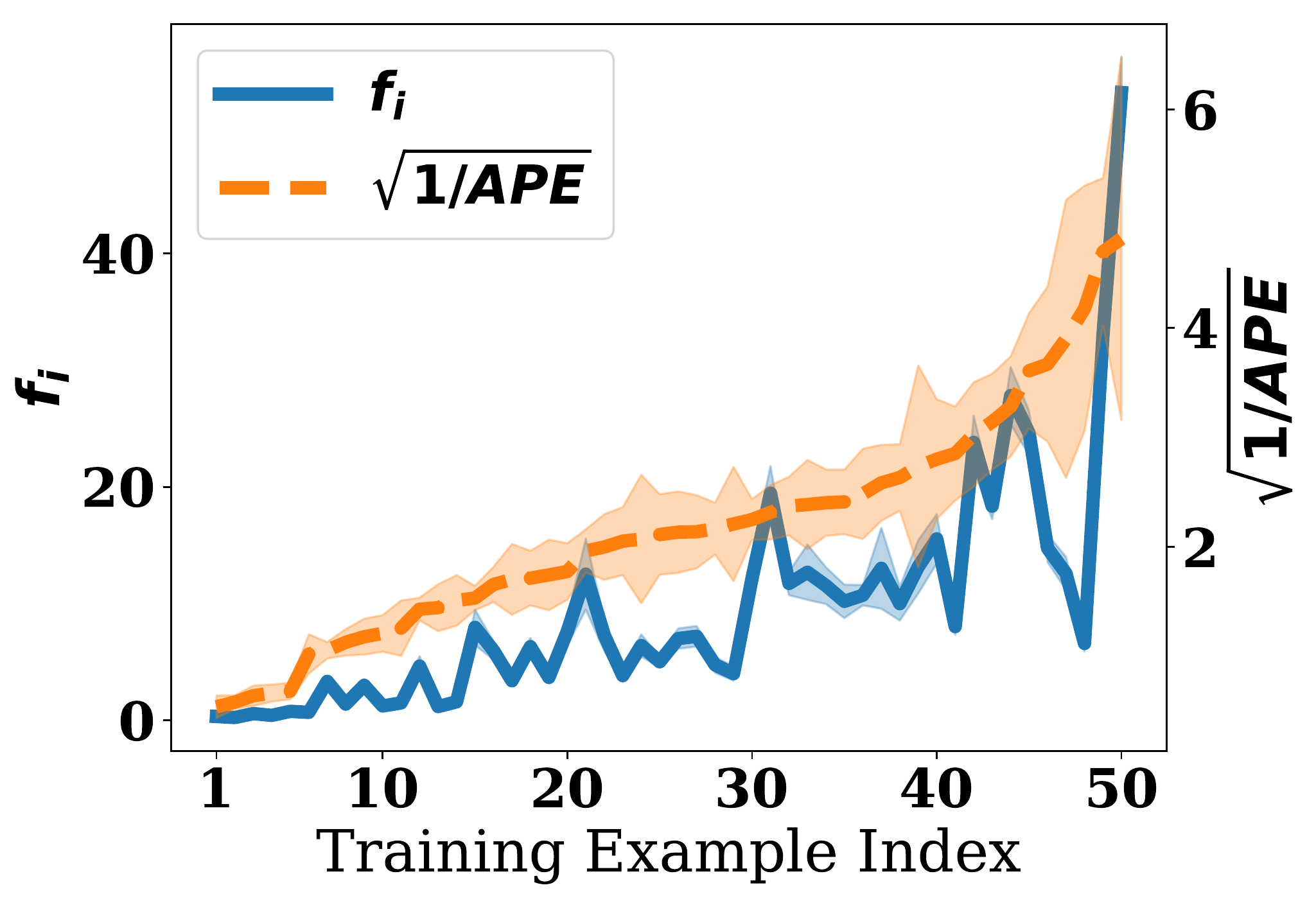}
    \caption{MNIST Logistic: $(0.580 \pm 0.020)$} 
    \end{subfigure}
\hfill
    \begin{subfigure}[t]{0.23\textwidth} \centering
    \includegraphics[width=\textwidth]{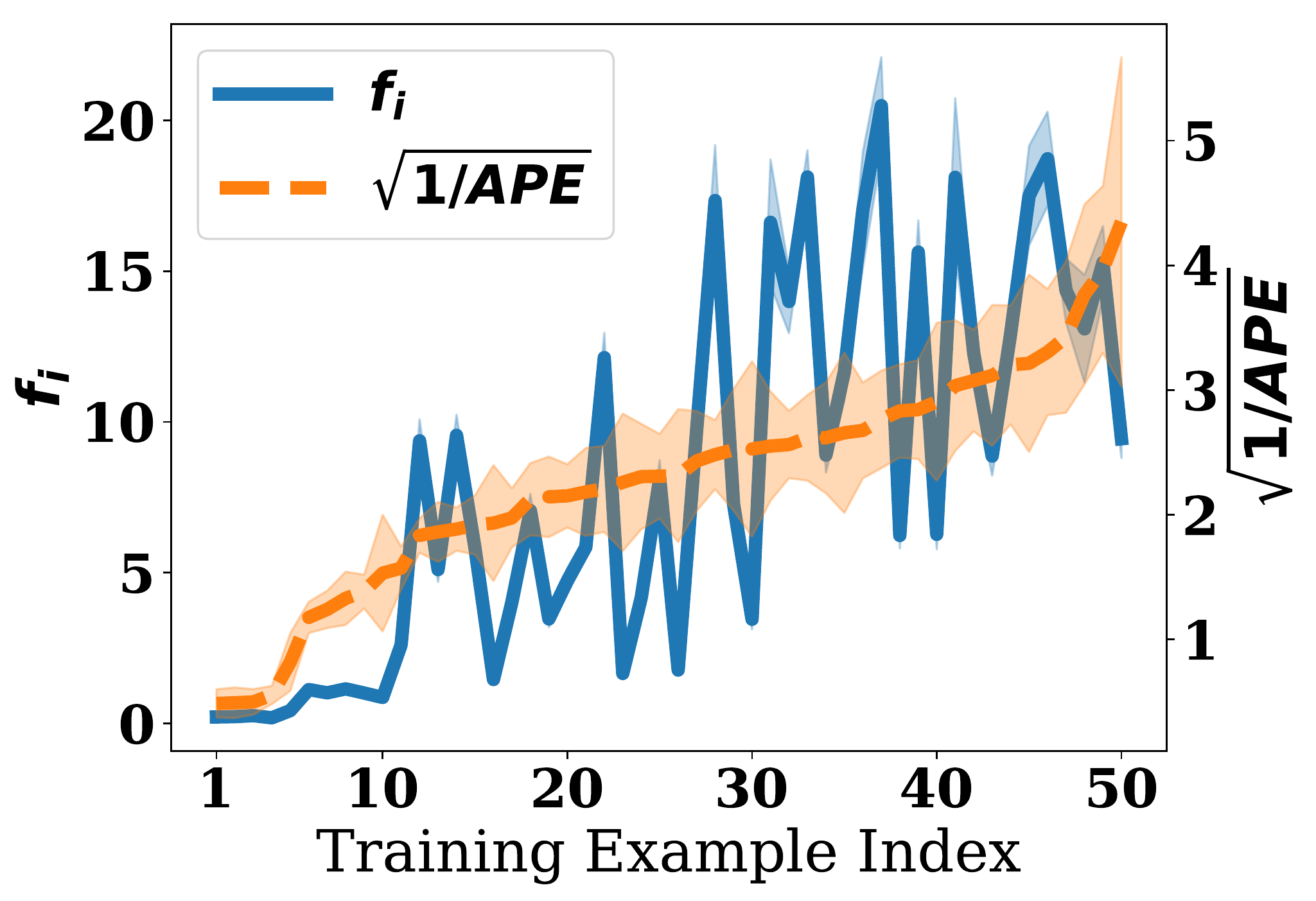}
    \caption{Synthetic Gaussian Logistic: $(0.536 \pm 0.029)$}
    \end{subfigure}
\hfill
    \begin{subfigure}[t]{0.23\textwidth} \centering
    \includegraphics[width=\textwidth]{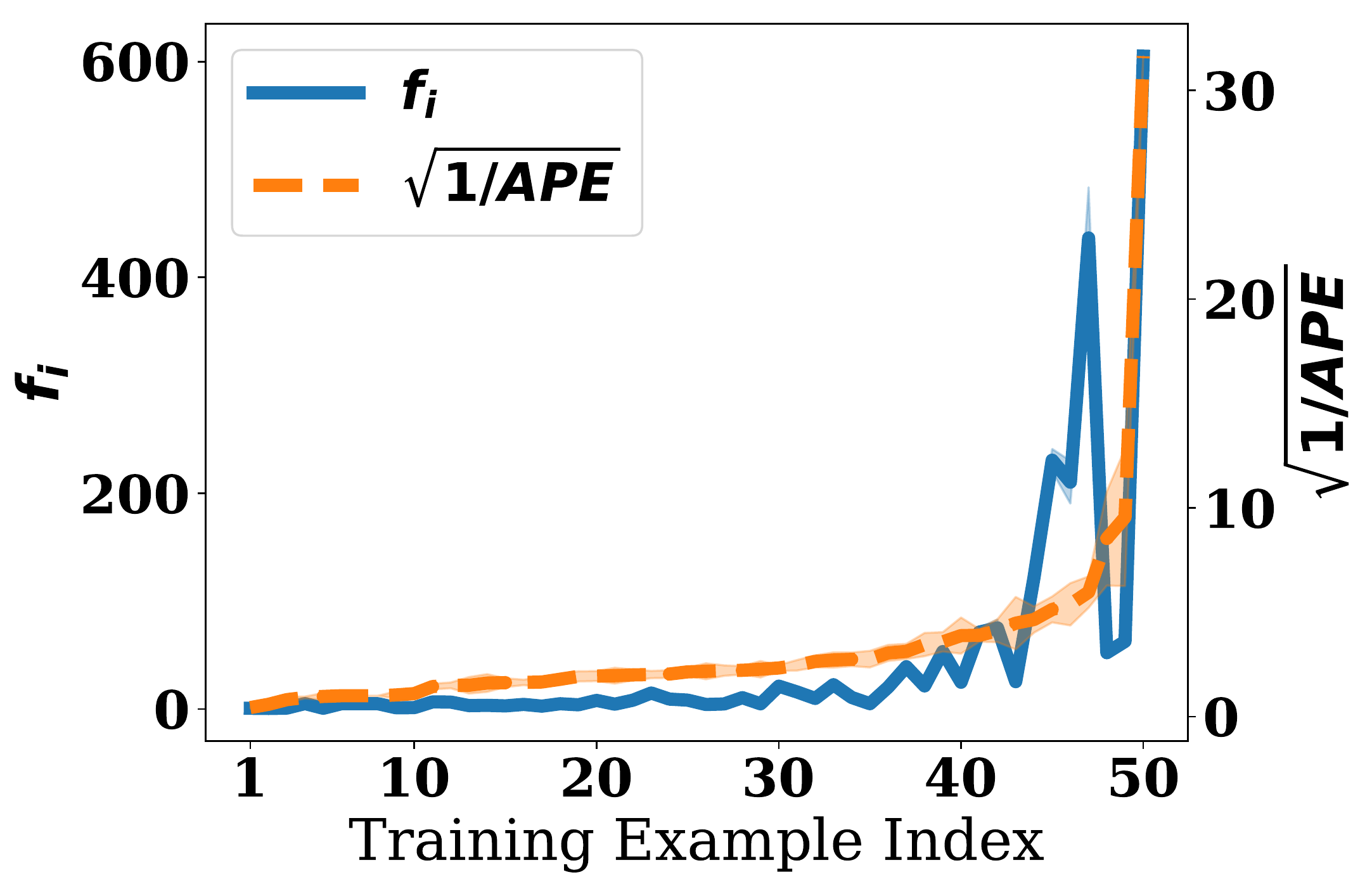}
    \caption{Covertype $k$NN: $(0.713 \pm 0.015)$}
    \end{subfigure}
\hfill
    \begin{subfigure}[t]{0.23\textwidth} \centering
    \includegraphics[width=\textwidth]{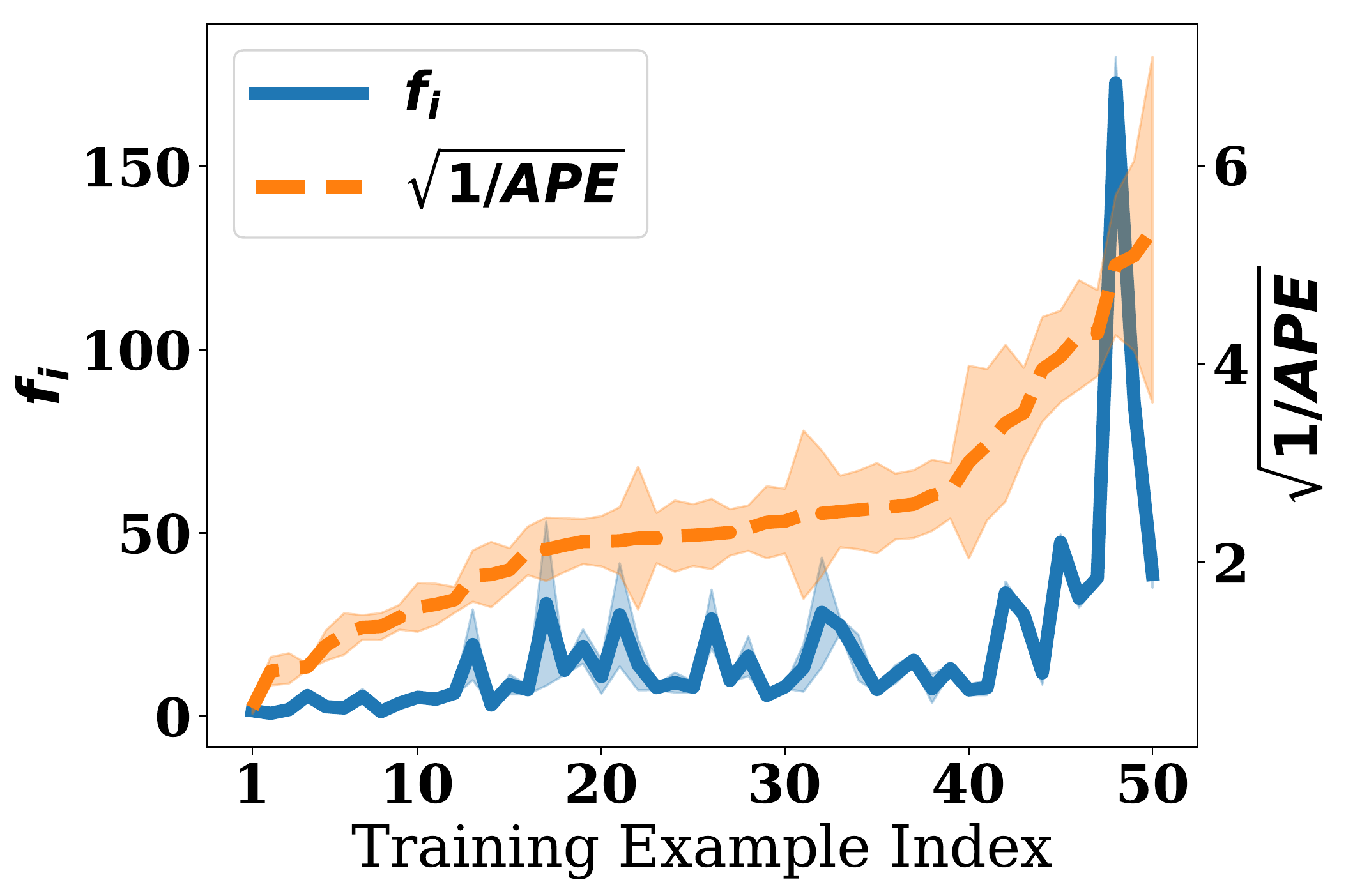}
    \caption{Breast Cancer $k$NN: $(0.409 \pm 0.024)$}
    \end{subfigure}
\hfill
    \begin{subfigure}[t]{0.23\textwidth} \centering
    \includegraphics[width=\textwidth]{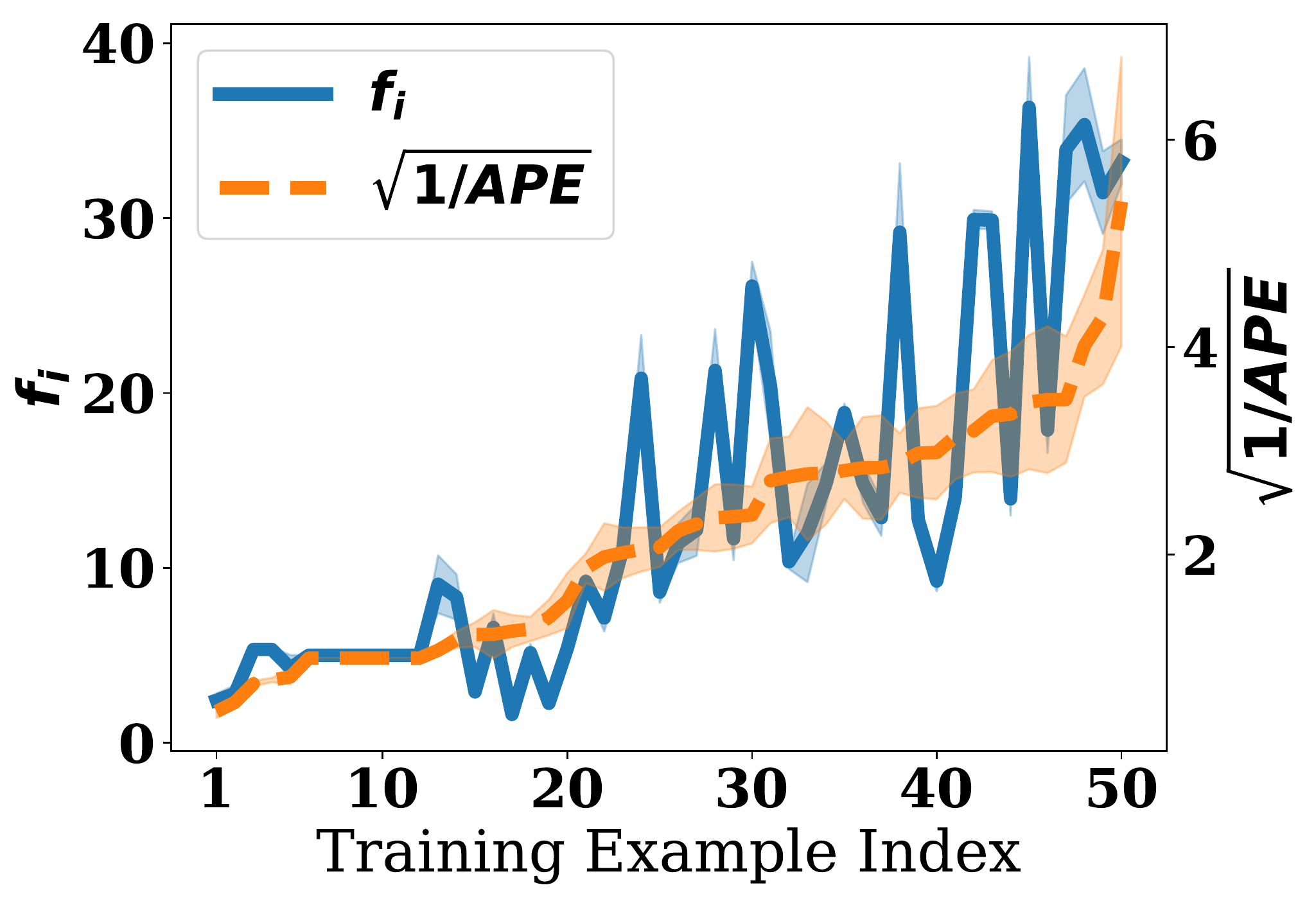}
    \caption{MNIST $k$NN: $(0.689 \pm 0.012)$}
    \end{subfigure}
\hfill
    \begin{subfigure}[t]{0.23\textwidth} \centering
    \includegraphics[width=\textwidth]{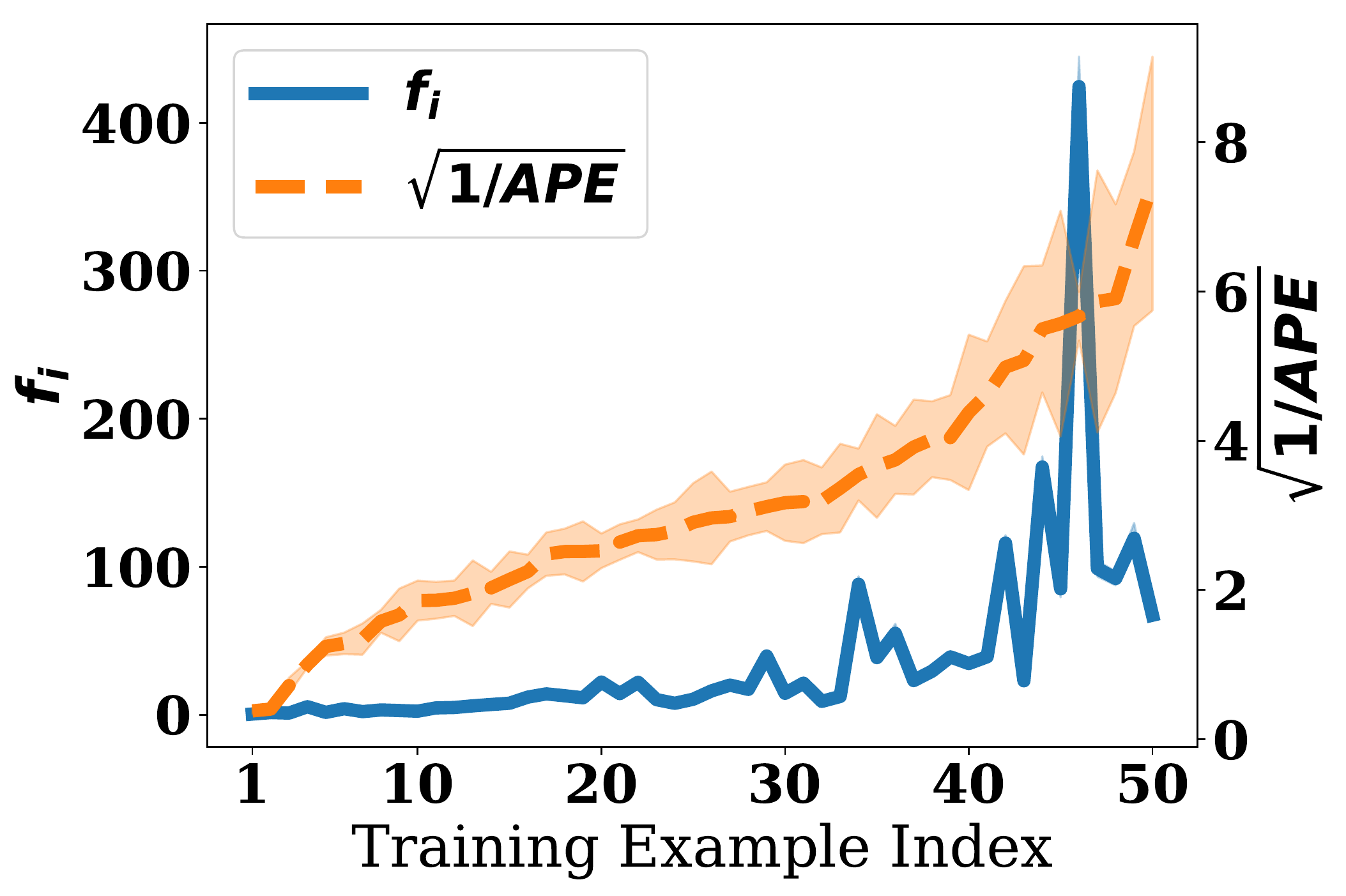}
    \caption{Synthetic Gaussian $k$NN: $(0.681 \pm 0.013)$}
    \end{subfigure}
\hfill
    \begin{subfigure}[t]{0.23\textwidth} \centering
    \includegraphics[width=\textwidth]{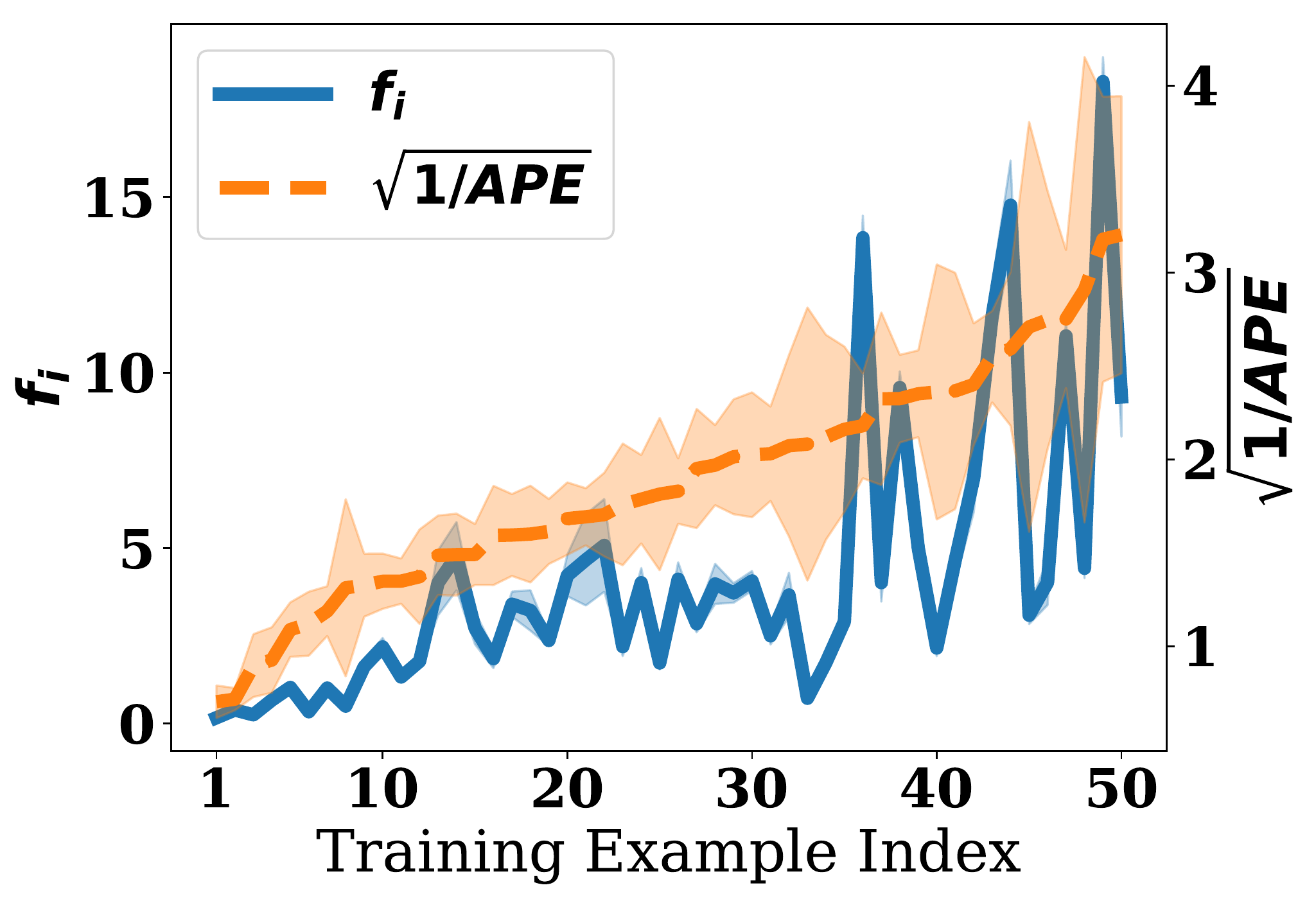}
    \caption{Covertype SVC: $(0.529 \pm 0.031)$}
    \end{subfigure}
\hfill
    \begin{subfigure}[t]{0.23\textwidth} \centering
    \includegraphics[width=\textwidth]{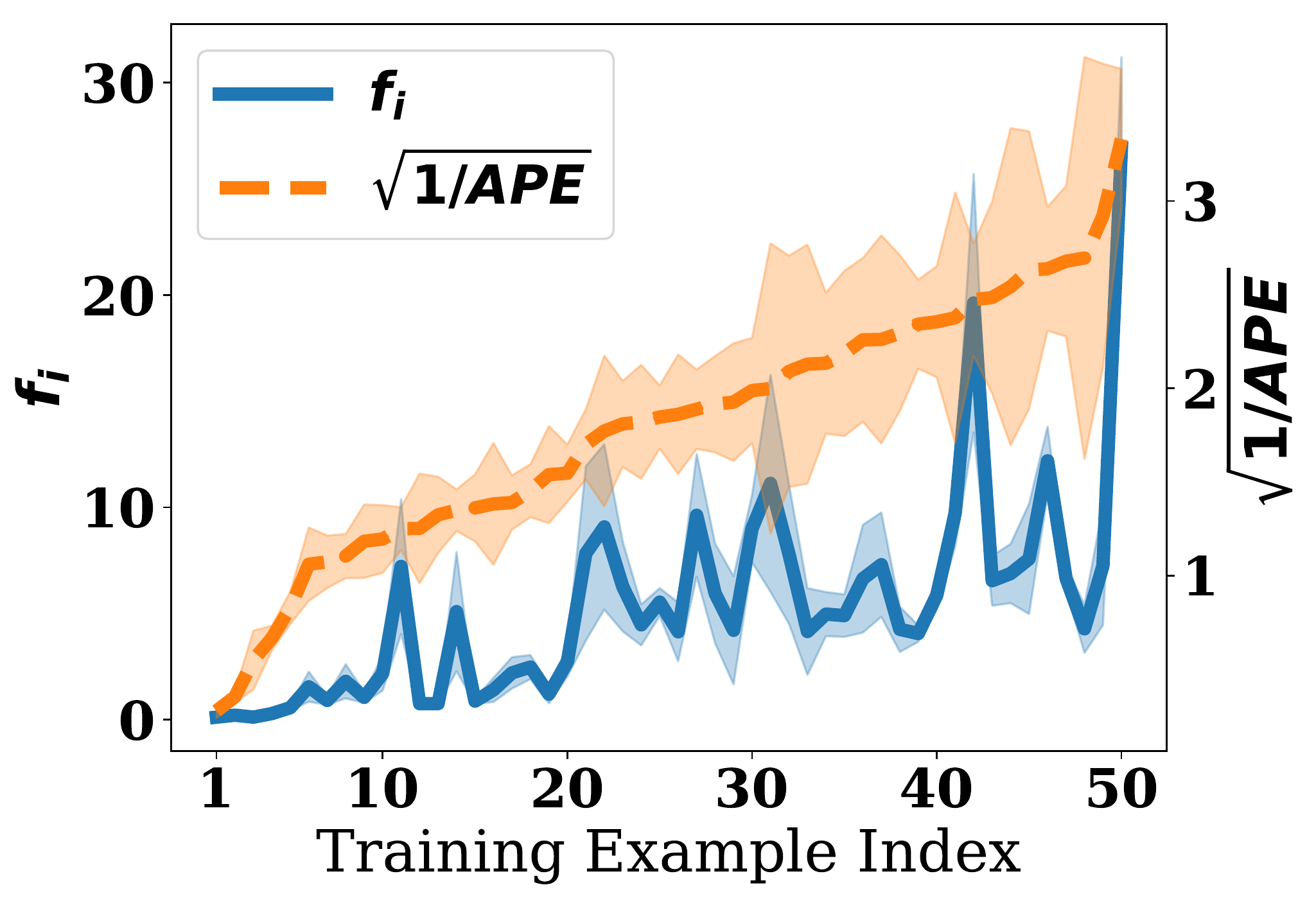}
    \caption{Breast Cancer SVC: $(0.498 \pm 0.021)$}
    \end{subfigure}
\hfill
    \begin{subfigure}[t]{0.23\textwidth} \centering
    \includegraphics[width=\textwidth]{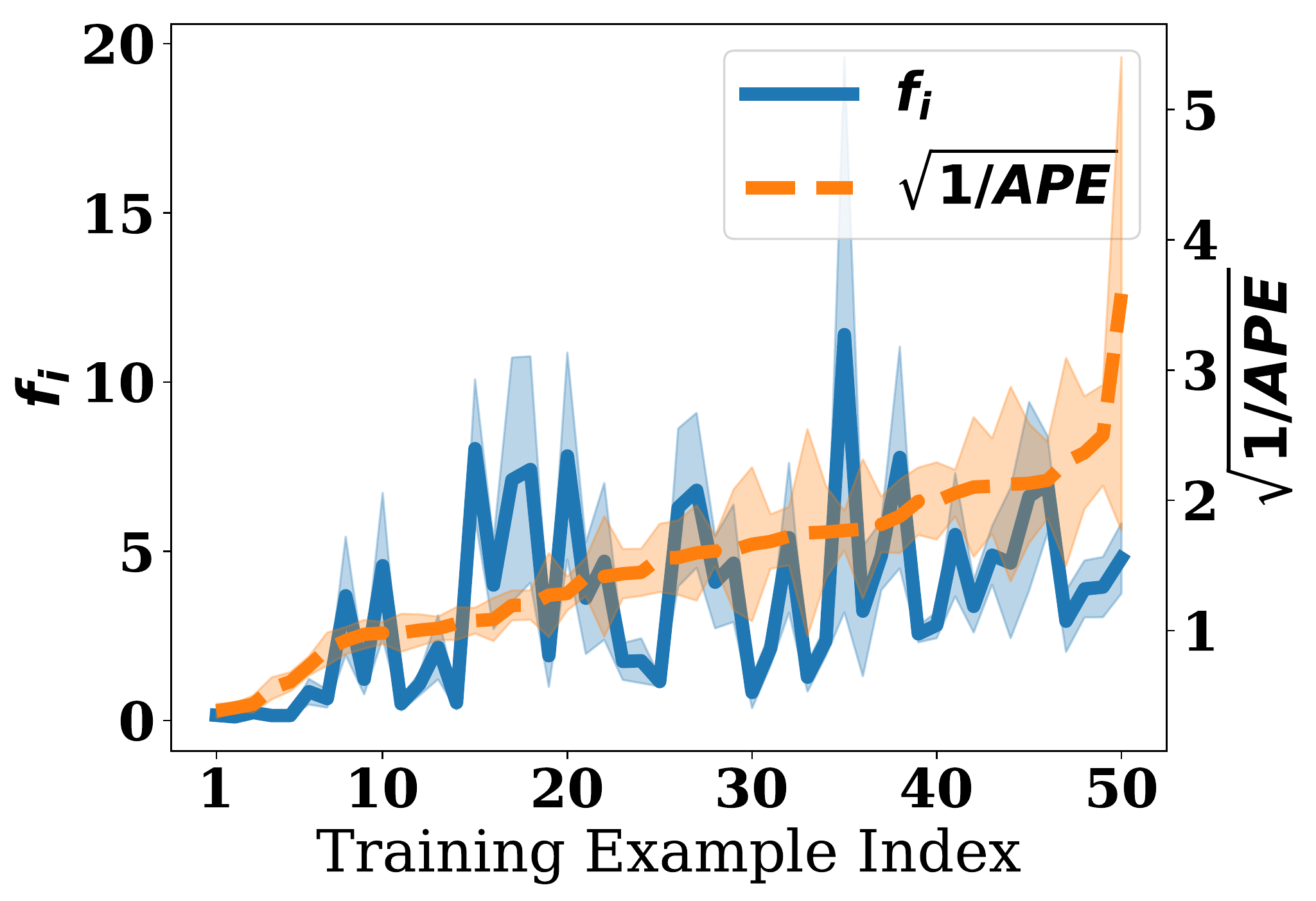}
    \caption{MNIST SVC: $(0.444 \pm 0.017)$}
    \end{subfigure}
\hfill
    \begin{subfigure}[t]{0.23\textwidth} \centering
    \includegraphics[width=\textwidth]{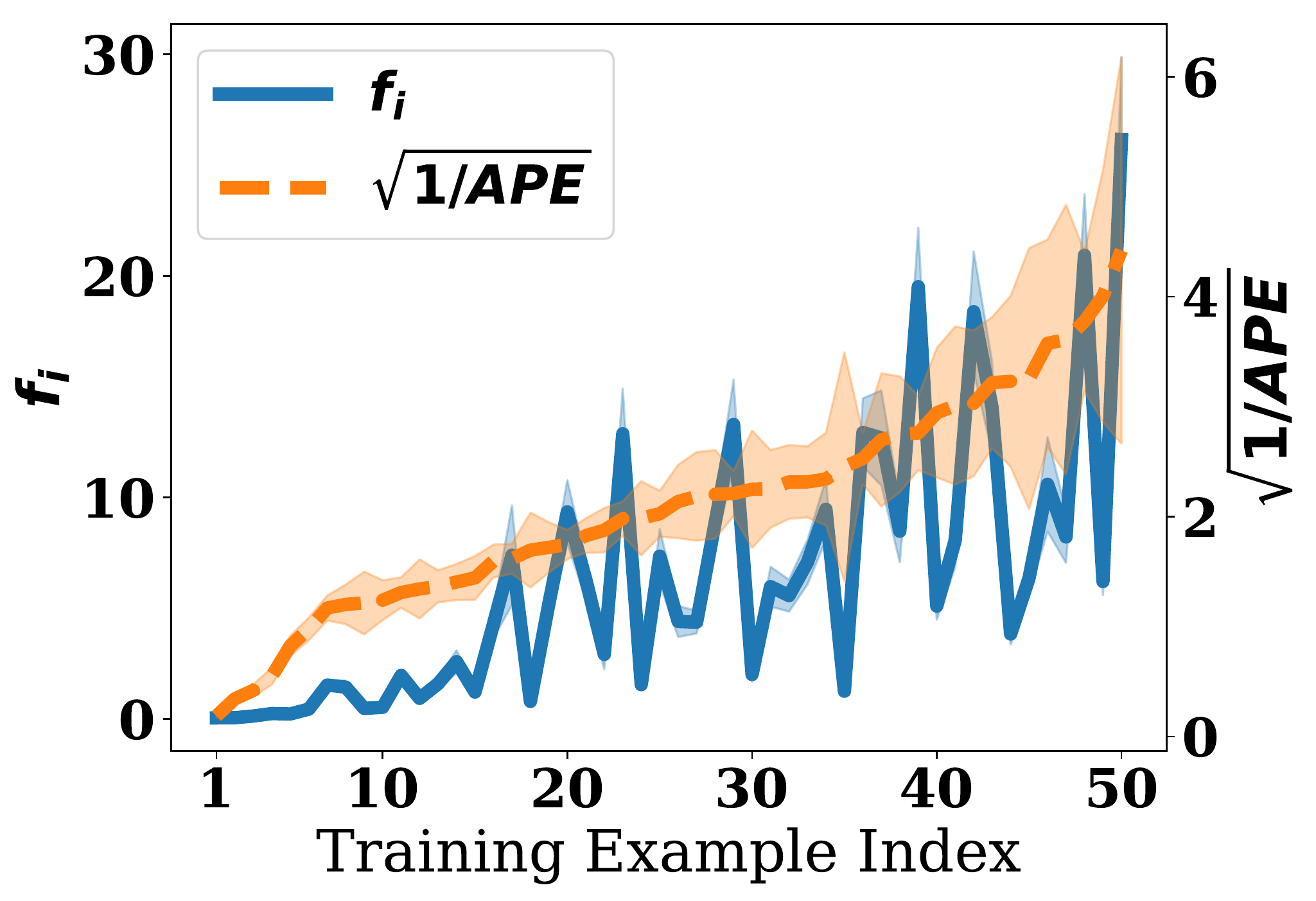}
    \caption{Synthetic Gaussian SVC: $(0.583 \pm 0.025)$}
    \end{subfigure}
\hfill
    \caption{ The plot of $f_i$ vs.~$\sqrt{1/\text{APE}}$ for $50$ training examples over $20$ repeated trials. Caption of each plot shows the dataset and learner, 
    followed by the average and standard error of the Spearman Rank coefficient.
    }
    \label{fig:ri_ape_app}
\end{figure}

\subsection{Additional Experimental Results}
Under the same experimental setting in Sec.~\ref{sec:experiments-P234} and the additional settings described above where applicable, additional experimental results are presented: average (standard errors) over $5$ independent trials for dataset valuation (\cref{tab:vol=used-car-overall}), CML (\cref{tab:cml-uberlyft-overall}), FL (\cref{tab:fl-mnistuni-overall,tab:fl-sst}), and feature attribution (\cref{tab:feature-adult-overall,tab:feature-covertype-overall}).
For all metrics, lower is better.

\begin{table}[!ht]
    \centering
        \caption{Evaluation of $\varphi_i$ within \textbf{P2.} using used car price with $n=8$ data providers who each have a randomly sub-sampled dataset containing $100$ training examples \citep{xu2021vol}. 
        }
        \resizebox{\linewidth}{!}{
        \begin{tabular}{lllrrrr}
        \toprule
        baselines &        MAPE &          MSE &  $N_{\text{inv}}$ &  $\epsilon_{\text{inv}}$ & NL NSW  \\
        \midrule
    MC &  2.23e-02 (2.0e-03) &  9.00e-04 (1.8e-04) &      0.80 (0.49) &     1.66 (0.14) &  2.66e-02 (5.1e-03) \\
    Owen &  1.67e-02 (2.8e-03) &  5.30e-04 (1.6e-04) &      0.80 (0.80) &     1.31 (0.21) &  2.70e-02 (2.6e-03) \\
    Sobol &  5.59e-02 (2.3e-03) &  4.35e-03 (4.1e-04) &      2.80 (0.49) &     4.12 (0.18) &      0.12 (1.1e-03) \\
    stratified &  2.52e-02 (3.3e-03) &  1.14e-03 (2.7e-04) &      1.20 (0.80) &     1.97 (0.25) &  4.20e-02 (6.2e-03) \\
    kernel &      0.11 (1.5e-02) &  1.57e-02 (4.7e-03) &      5.20 (0.49) &     7.22 (0.88) &         3.10 (0.26) \\
    \midrule
    Ours (a=0) &  2.67e-02 (3.9e-03) &  1.08e-03 (2.4e-04) &      1.20 (0.80) &     1.96 (0.28) &      0.23 (4.9e-02) \\
    Ours (a=2) &  1.42e-02 (1.6e-03) &  3.10e-04 (6.0e-05) &      0.80 (0.49) &     1.07 (0.12) &  \textbf{3.78e-03} (1.1e-03) \\
    Ours (a=5) &  \textbf{1.19e-02} (8.5e-04) &  \textbf{2.50e-04} (6.0e-05) &      0.80 (0.49) &  \textbf{0.90} (7.7e-02) &  8.40e-03 (2.3e-03) \\
    Ours (a=100) &  1.55e-02 (2.7e-03) &  4.70e-04 (2.0e-04) &      \textbf{0.40} (0.40) &     1.09 (0.17) &  1.79e-02 (4.3e-03) \\
        \bottomrule
        \end{tabular}
        }
    \label{tab:vol=used-car-overall}
\end{table}

\begin{table}[!ht]
    \centering
    \caption{Evaluation of $\varphi_i$ within \textbf{P3.} CML uber lyft ride price dataset with $n=10$ agents who each have a randomly sub-sampled dataset containing $50$ training examples \citep{xu2021vol}. 
    }
    \resizebox{\linewidth}{!}{
    \begin{tabular}{lllllll}
    \toprule
    baselines &        MAPE &          MSE &  $N_{\text{inv}}$ &  $\epsilon_{\text{inv}}$ & NL NSW  \\
    \midrule
    MC &  4.91e-02 (9.8e-03) &  3.80e-03 (1.3e-03) &  1.88e+01 (3.88) &      6.24 (1.19) &  1.66e-02 (6.6e-03) \\
    Owen &  3.76e-02 (7.8e-03) &  2.32e-03 (7.6e-04) &  1.28e+01 (3.01) &      4.76 (0.98) &  1.37e-02 (4.2e-03) \\
    Sobol &  7.72e-02 (3.6e-03) &  1.02e-02 (6.2e-04) &  3.44e+01 (2.48) &      9.92 (0.55) &  6.96e-02 (3.3e-03) \\
    stratified &  6.28e-02 (1.1e-02) &  5.83e-03 (1.5e-03) &  1.80e+01 (2.28) &      7.76 (1.31) &  3.19e-02 (1.1e-02) \\
    kernel &  7.37e-02 (4.1e-03) &  8.57e-03 (1.1e-03) &  2.84e+01 (2.93) &      9.70 (0.50) &         0.86 (0.17) \\
    \midrule
    Ours (a=0) &      0.15 (2.1e-02) &  3.54e-02 (9.7e-03) &  3.52e+01 (4.59) &  1.83e+01 (2.50) &         2.84 (0.76) \\
    Ours (a=2) &  3.67e-02 (4.4e-03) &  1.98e-03 (4.4e-04) &  1.64e+01 (1.94) &      4.61 (0.54) &  \textbf{8.40e-04} (2.2e-04) \\
    Ours (a=5) &  \textbf{2.51e-02} (1.3e-03) &  9.70e-04 (2.1e-04) &  1.04e+01 (1.72) &      3.18 (0.25) &  8.90e-04 (1.9e-04) \\
    Ours (a=100) &  2.55e-02 (4.0e-03) &  \textbf{9.60e-04} (2.6e-04) &      \textbf{9.60} (2.56) &      \textbf{3.14} (0.48) &  6.01e-03 (1.6e-03) \\
    \bottomrule
    \end{tabular}
        }
    \label{tab:cml-uberlyft-overall}
\end{table}
\begin{table}[!ht]
    \centering
    \caption{Evaluation of $\varphi_i$ within \textbf{P3.} FL using MNIST with $n=10$ agents on \textbf{I.I.D} partition.
    }
    \resizebox{\linewidth}{!}{
    \begin{tabular}{lllllll}
    \toprule
    baselines &        MAPE &          MSE &  $N_{\text{inv}}$ &  $\epsilon_{\text{inv}}$ & NL NSW  \\
    \midrule
    MC &      0.13 (6.3e-03) &  2.51e-02 (2.9e-03) &  4.16e+01 (5.81) &  1.71e+01 (1.11) &      0.52 (1.2e-02) \\
    Owen &      0.16 (1.2e-02) &  4.20e-02 (4.0e-03) &  4.32e+01 (7.61) &  2.23e+01 (1.14) &      0.63 (8.9e-03) \\
    Sobol &      0.22 (1.5e-02) &  9.39e-02 (1.4e-02) &  4.48e+01 (4.84) &  3.07e+01 (2.07) &      0.52 (1.9e-03) \\
    stratified &      0.11 (1.4e-02) &  1.99e-02 (3.9e-03) &  4.80e+01 (6.78) &  1.53e+01 (1.59) &      0.49 (3.7e-03) \\
    kernel &      0.49 (7.3e-02) &         0.39 (0.10) &  4.40e+01 (3.79) &  6.50e+01 (9.87) &  \textbf{2.45e-02} (1.2e-02) \\
    \midrule
    Ours (a=0) &      0.47 (5.9e-02) &      0.33 (8.2e-02) &  \textbf{4.12e+01} (4.50) &  5.87e+01 (7.21) &         2.04 (0.53) \\
    Ours (a=2) &  6.73e-02 (1.4e-02) &  7.59e-03 (3.1e-03) &  4.68e+01 (4.54) &      9.02 (1.81) &      0.62 (1.0e-02) \\
    Ours (a=5) &  \textbf{5.95e-02} (8.5e-03) &  \textbf{5.58e-03} (1.6e-03) &  4.64e+01 (5.74) &      \textbf{7.96} (1.16) &      0.56 (4.6e-03) \\
    Ours (a=100) &  6.51e-02 (8.2e-03) &  5.89e-03 (1.1e-03) &  4.44e+01 (3.12) &      8.22 (0.82) &      0.53 (9.2e-04) \\
    \bottomrule
    \end{tabular}
        }
    \label{tab:fl-mnistuni-overall}
\end{table}
\begin{table}[!ht]
    \centering
    \caption{Evaluation of $\varphi_i$ within \textbf{P3.} FL using MNIST with $n=10$ agents on \textbf{powerlaw} partition: the agents data are from the same distribution (containing images of all $10$ digits) but their sizes variy superlinearly \citep{xu2021gradient}. 
    In this partition, agent $10$ has the highest $\phi_i$. 
    Lowest NL NSW is not bolded due to to many ties.
    }
    \resizebox{\linewidth}{!}{
    \begin{tabular}{lllllll}
    \toprule
    baselines &        MAPE &          MSE &  $N_{\text{inv}}$ &  $\epsilon_{\text{inv}}$ & NL NSW  \\
    \midrule
    MC &      0.13 (6.7e-03) &  2.73e-02 (3.1e-03) &  \textbf{3.84e+01} (1.72) &   1.78e+01 (1.15) &  1.08e+02 (2.0e-02) \\
    Owen &      0.17 (1.3e-02) &  4.67e-02 (4.7e-03) &  4.04e+01 (3.54) &   2.35e+01 (1.29) &  1.08e+02 (1.7e-02) \\
    Sobol &      0.23 (1.5e-02) &      0.10 (1.5e-02) &  4.64e+01 (1.94) &   3.18e+01 (2.16) &  1.08e+02 (1.2e-02) \\
    stratified &      0.12 (1.4e-02) &  2.13e-02 (4.1e-03) &  3.20e+01 (3.85) &   1.58e+01 (1.65) &  1.08e+02 (1.4e-02) \\
    kernel &      0.55 (8.4e-02) &         0.49 (0.13) &  4.56e+01 (4.35) &  7.32e+01 (11.12) &  1.08e+02 (3.1e-02) \\
    \midrule
    Ours (a=0) &      0.84 (5.6e-02) &         1.12 (0.21) &  4.88e+01 (7.17) &   9.96e+01 (6.91) &     1.12e+02 (0.56) \\
    Ours (a=2) &  7.27e-02 (1.4e-02) &  8.22e-03 (3.1e-03) &  4.52e+01 (5.28) &       9.50 (1.63) &  1.08e+02 (1.7e-02) \\
    Ours (a=5) &  \textbf{6.19e-02} (7.8e-03) &  \textbf{5.84e-03} (1.5e-03) &  4.60e+01 (4.77) &       \textbf{8.22} (1.09) &  1.08e+02 (1.3e-02) \\
    Ours (a=100) &  7.06e-02 (9.6e-03) &  6.80e-03 (1.4e-03) &  4.08e+01 (2.33) &       8.89 (0.93) &  1.08e+02 (4.9e-03) \\
    \bottomrule
    \end{tabular}
        }
    \label{tab:fl-mnistpow-overall}
\end{table}
\begin{table}[!ht]
    \centering
    \caption{Evaluation of $\varphi_i$ within \textbf{P3.} FL using MNIST with $n=10$ agents on \textbf{classimbalance} partition: Agent $1$ only has images of digit $1$, agent $2$ has images of digits $1,2$ and so on \citep{xu2021gradient}. In this partition, agent $10$ has the highest $\phi_i$.
    }
    \resizebox{\linewidth}{!}{
    \begin{tabular}{lllllll}
    \toprule
    baselines &        MAPE &          MSE &  $N_{\text{inv}}$ &  $\epsilon_{\text{inv}}$ & NL NSW  \\
    \midrule
    MC &      0.13 (9.7e-03) &  2.45e-02 (2.7e-03) &  1.36e+01 (1.72) &  1.48e+01 (1.14) &  1.02 (4.5e-02) \\
    Owen &      0.18 (7.4e-03) &  4.78e-02 (5.2e-03) &  1.40e+01 (1.10) &  2.06e+01 (0.87) &  1.80 (3.5e-02) \\
    Sobol &      0.20 (1.1e-02) &  9.25e-02 (1.4e-02) &  1.44e+01 (1.17) &  2.46e+01 (1.48) &  1.10 (8.1e-03) \\
    stratified &      0.11 (1.2e-02) &  1.77e-02 (3.6e-03) &  1.24e+01 (1.47) &  1.27e+01 (1.42) &  0.70 (1.8e-02) \\
    kernel &      0.53 (7.3e-02) &      0.37 (9.3e-02) &  3.24e+01 (4.35) &  5.85e+01 (8.35) &     \textbf{0.59} (0.10) \\
    \midrule
    Ours $(\alpha=0)$ &      0.15 (2.0e-02) &  2.76e-02 (6.4e-03) &  1.48e+01 (3.26) &  1.57e+01 (1.95) &  8.12 (9.1e-02) \\
    Ours $(\alpha=2)$ &  6.58e-02 (8.0e-03) &  6.41e-03 (2.3e-03) &      8.00 (0.89) &      7.51 (1.08) &  1.24 (1.4e-02) \\
    Ours $(\alpha=5)$ &  5.92e-02 (7.2e-03) &  5.03e-03 (1.7e-03) &      \textbf{5.20} (0.80) &      6.62 (0.96) &  1.06 (3.4e-03) \\
    Ours $(\alpha=100)$ &  \textbf{5.71e-02} (7.3e-03) &  \textbf{4.17e-03} (8.5e-04) &      6.00 (1.79) &      \textbf{6.17} (0.67) &  1.01 (4.9e-03) \\
    \bottomrule
    \end{tabular}
        }
    \label{tab:fl-mnistcla-overall}
\end{table}
\begin{table}[!ht]
    \centering
        \caption{Evaluation of $\varphi_i$ within \textbf{P3.} via FL with $n=10$ agents on CIFAR-10 dataset with \textbf{I.I.D} data partition.
        }
        \resizebox{\linewidth}{!}{
        \begin{tabular}{lllrrrr}
        \toprule
        baselines &        MAPE &          MSE &  $N_{\text{inv}}$ &  $\epsilon_{\text{inv}}$ & NL NSW\\
        \midrule
        MC &      0.13 (6.6e-03) &  2.70e-02 (3.1e-03) &  3.96e+01 (5.31) &   1.78e+01 (1.15) &  2.31e+02 (2.9e-02) \\
        Owen &      0.17 (1.3e-02) &  4.61e-02 (4.6e-03) &  3.92e+01 (5.24) &   2.34e+01 (1.28) &  2.32e+02 (1.4e-02) \\
        Sobol &      0.23 (1.5e-02) &  9.86e-02 (1.5e-02) &  4.08e+01 (4.22) &   3.18e+01 (2.16) &  2.31e+02 (1.4e-03) \\
        stratified &      0.12 (1.4e-02) &  2.12e-02 (4.0e-03) &  4.80e+01 (6.66) &   1.59e+01 (1.61) &  2.31e+02 (7.4e-03) \\
        kernel &      0.55 (8.4e-02) &         0.49 (0.13) &  4.16e+01 (2.23) &  7.38e+01 (11.22) &  \textbf{2.30e+02} (3.2e-02) \\
        \midrule
        Ours (a=0) &      0.43 (6.8e-02) &         0.32 (0.11) &  4.40e+01 (4.34) &   5.53e+01 (7.86) &     2.52e+02 (5.28) \\
        Ours (a=2) &  7.06e-02 (1.3e-02) &  7.88e-03 (2.8e-03) &  \textbf{3.84e+01} (3.82) &       9.35 (1.59) &  2.31e+02 (1.4e-02) \\
        Ours (a=5) &  \textbf{5.98e-02} (8.2e-03) &  \textbf{5.67e-03} (1.8e-03) &  4.28e+01 (4.36) &       8.07 (1.16) &  2.31e+02 (4.4e-03) \\
        Ours (a=100) &  6.60e-02 (9.1e-03) &  5.85e-03 (1.3e-03) &  4.24e+01 (4.45) &       \textbf{8.22} (0.93) &  2.31e+02 (7.3e-04) \\
        \bottomrule
        \end{tabular}
        }
    \label{tab:fl-cifar10uni-overall}
\end{table}
\begin{table}[!ht]
    \centering
        \caption{Evaluation of $\varphi_i$ within \textbf{P3.} via FL with $n=10$ agents on CIFAR-10 dataset with \textbf{powerlaw} data partition. Lowest NL NSW is not bolded due to to many ties.
        }
        \resizebox{\linewidth}{!}{
        \begin{tabular}{lllrrrr}
        \toprule
        baselines &        MAPE &          MSE &  $N_{\text{inv}}$ &  $\epsilon_{\text{inv}}$ & NL NSW\\
        \midrule
        MC &      0.14 (1.1e-02) &  2.72e-02 (3.1e-03) &  1.44e+01 (0.40) &  1.56e+01 (1.05) &  2.31e+02 (5.4e-02) \\
        Owen &      0.21 (4.6e-03) &  5.57e-02 (4.1e-03) &  1.56e+01 (2.23) &  2.25e+01 (0.67) &  2.32e+02 (1.7e-02) \\
        Sobol &      0.22 (1.3e-02) &  9.91e-02 (1.5e-02) &  1.72e+01 (3.20) &  2.71e+01 (1.69) &  2.31e+02 (6.2e-03) \\
        stratified &      0.12 (1.3e-02) &  1.95e-02 (3.7e-03) &  1.36e+01 (3.12) &  1.32e+01 (1.44) &  2.31e+02 (9.6e-03) \\
        kernel &      0.61 (8.8e-02) &         0.49 (0.13) &  3.52e+01 (6.22) &  6.62e+01 (9.80) &  2.31e+02 (2.9e-02) \\
        \midrule
        Ours (a=0) &      0.11 (1.4e-02) &  1.52e-02 (3.1e-03) &  1.52e+01 (2.06) &  1.11e+01 (1.44) &     2.46e+02 (0.14) \\
        Ours (a=2) &  7.13e-02 (7.6e-03) &  8.11e-03 (2.2e-03) &  1.08e+01 (1.36) &      8.32 (0.89) &  2.31e+02 (1.4e-02) \\
        Ours (a=5) &  6.16e-02 (7.4e-03) &  5.95e-03 (1.4e-03) &      8.80 (1.62) &      7.21 (0.85) &  2.31e+02 (1.4e-02) \\
        Ours (a=100) &  \textbf{5.96e-02} (8.6e-03) &  \textbf{4.65e-03} (1.3e-03) &      \textbf{6.80} (1.20) &      \textbf{6.38} (0.90) &  2.31e+02 (1.7e-02) \\
        \bottomrule
        \end{tabular}
        }
    \label{tab:fl-cifar10pow-overall}
\end{table}
\begin{table}[!ht]
    \centering
        \caption{Evaluation of $\varphi_i$ within \textbf{P3.} via FL with $n=10$ agents on CIFAR-10 dataset with \textbf{classimbalance} data partition.
        Lowest NL NSW is not bolded due to to many ties.
        }
        \resizebox{\linewidth}{!}{
        \begin{tabular}{lllrrrr}
        \toprule
        baselines &        MAPE &          MSE &  $N_{\text{inv}}$ &  $\epsilon_{\text{inv}}$ & NL NSW\\
        \midrule
        MC &      0.14 (1.1e-02) &  2.72e-02 (3.1e-03) &  1.44e+01 (0.40) &  1.56e+01 (1.05) &  2.31e+02 (5.4e-02) \\
        Owen &      0.21 (4.6e-03) &  5.57e-02 (4.1e-03) &  1.56e+01 (2.23) &  2.25e+01 (0.67) &  2.32e+02 (1.7e-02) \\
        Sobol &      0.22 (1.3e-02) &  9.91e-02 (1.5e-02) &  1.72e+01 (3.20) &  2.71e+01 (1.69) &  2.31e+02 (6.2e-03) \\
        stratified &      0.12 (1.3e-02) &  1.95e-02 (3.7e-03) &  1.36e+01 (3.12) &  1.32e+01 (1.44) &  2.31e+02 (9.6e-03) \\
        kernel &      0.61 (8.8e-02) &         0.49 (0.13) &  3.52e+01 (6.22) &  6.62e+01 (9.80) &  2.31e+02 (2.9e-02) \\
        \midrule
        Ours (a=0) &      0.11 (1.4e-02) &  1.52e-02 (3.1e-03) &  1.52e+01 (2.06) &  1.11e+01 (1.44) &     2.46e+02 (0.14) \\
        Ours (a=2) &  7.13e-02 (7.6e-03) &  8.11e-03 (2.2e-03) &  1.08e+01 (1.36) &      8.32 (0.89) &  2.31e+02 (1.4e-02) \\
        Ours (a=5) &  6.16e-02 (7.4e-03) &  5.95e-03 (1.4e-03) &      8.80 (1.62) &      7.21 (0.85) &  2.31e+02 (1.4e-02) \\
        Ours (a=100) &  \textbf{5.96e-02} (8.6e-03) &  \textbf{4.65e-03} (1.3e-03) &      \textbf{6.80} (1.20) &      \textbf{6.38} (0.90) &  2.31e+02 (1.7e-02) \\
        \bottomrule
        \end{tabular}
        }
    \label{tab:fl-cifar10cla-overall}
\end{table}
\begin{table}[!ht]
    \centering
        \caption{Evaluation of $\varphi_i$ within \textbf{P3.} via FL with $n=5$ agents on movie reviews dataset with powerlaw data distribution.
        }
        \resizebox{\linewidth}{!}{
        \begin{tabular}{lllrrrr}
        \toprule
        baselines &        MAPE &          MSE &  $N_{\text{inv}}$ &  $\epsilon_{\text{inv}}$ & NL NSW\\
        \midrule
        MC &  7.51e-02 (1.6e-02) &  1.08e-02 (4.3e-03) &      9.60 (0.98) &      2.43 (0.53) &  1.19 (1.2e-02) \\
        Owen &      0.11 (1.4e-02) &  1.50e-02 (3.8e-03) &      9.20 (1.85) &      3.18 (0.43) &  1.78 (3.2e-02) \\
        Sobol &      0.48 (2.2e-02) &      0.37 (3.2e-02) &      8.40 (0.75) &  1.35e+01 (0.62) &  1.15 (5.3e-03) \\
        stratified &  6.70e-02 (1.1e-02) &  6.17e-03 (1.4e-03) &      9.20 (1.62) &      2.00 (0.28) &  0.98 (1.3e-02) \\
        kernel &      0.25 (6.3e-02) &  8.95e-02 (3.5e-02) &      9.20 (1.36) &      7.26 (1.58) &  \textbf{0.39} (6.5e-02) \\
        \midrule
        Ours (a=0) &  9.44e-02 (1.7e-02) &  1.61e-02 (6.5e-03) &      8.00 (1.10) &      2.99 (0.54) &  1.78 (6.3e-02) \\
        Ours (a=2) &  3.85e-02 (5.0e-03) &  2.31e-03 (5.7e-04) &      \textbf{7.60} (1.72) &      1.21 (0.16) &  1.27 (6.1e-03) \\
        Ours (a=5) &  3.42e-02 (5.1e-03) &  1.73e-03 (4.6e-04) &      8.40 (1.17) &      1.08 (0.14) &  1.19 (5.0e-03) \\
        Ours (a=100) &  \textbf{2.98e-02} (4.9e-03) &  \textbf{1.32e-03} (3.3e-04) &      8.00 (1.41) &      \textbf{0.94} (0.13) &  1.19 (5.3e-03) \\
        \bottomrule
        \end{tabular}
        }
    \label{tab:fl-mr}
\end{table}
\begin{table}[!ht]
    \centering
        \caption{Evaluation of $\varphi_i$ within \textbf{P3.} via FL with $n=5$ agents on SST-5 dataset with powerlaw data distribution.
        Lowest NL NSW is not bolded due to to many ties.
        }
        \resizebox{\linewidth}{!}{
        \begin{tabular}{lllrrrr}
        \toprule
        baselines &        MAPE &          MSE &  $N_{\text{inv}}$ &  $\epsilon_{\text{inv}}$ & NL NSW\\
        \midrule
        MC &  6.40e-02 (2.1e-02) &  9.40e-03 (4.8e-03) &      \textbf{5.60} (1.60) &      2.09 (0.70) &  7.38e+02 (2.5e-02) \\
        Owen &  8.65e-02 (2.5e-02) &  1.22e-02 (4.8e-03) &      6.00 (2.00) &      2.63 (0.75) &  7.38e+02 (1.6e-02) \\
        Sobol &      0.48 (2.2e-02) &      0.37 (3.2e-02) &  1.00e+01 (1.10) &  1.38e+01 (0.60) &  7.38e+02 (6.9e-03) \\
        stratified &  6.38e-02 (1.2e-02) &  5.65e-03 (1.6e-03) &      7.60 (1.94) &      1.94 (0.33) &  7.38e+02 (1.2e-02) \\
        kernel &      0.17 (5.9e-02) &  5.35e-02 (3.0e-02) &  1.16e+01 (1.72) &      5.30 (1.67) &  7.38e+02 (2.8e-02) \\
           \midrule
        Ours (a=0) &      0.11 (2.7e-02) &  2.19e-02 (7.3e-03) &  1.00e+01 (2.61) &      3.59 (0.88) &     8.50e+02 (9.91) \\
        Ours (a=2) &  3.43e-02 (8.5e-03) &  2.19e-03 (6.4e-04) &      7.60 (2.14) &      1.12 (0.28) &  7.38e+02 (1.4e-03) \\
        Ours (a=5) &  2.91e-02 (6.9e-03) &  1.43e-03 (4.0e-04) &      8.00 (2.61) &      0.95 (0.22) &  7.38e+02 (4.4e-04) \\
        Ours (a=100) &  \textbf{2.07e-02} (8.4e-03) &  \textbf{8.40e-04} (5.1e-04) &      8.00 (2.28) &      \textbf{0.62} (0.25) &  7.38e+02 (6.0e-05) \\
        \bottomrule
        \end{tabular}
        }
    \label{tab:fl-sst}
\end{table}

\begin{table}[!ht]
    \centering
    \caption{Evaluation of $\varphi_i$ within \textbf{P4.} using the adult income dataset with $n=7$ principal features and $2000$ randomly drawn data samples trained on a random forest classifier.
    }
    \resizebox{\linewidth}{!}{
\begin{tabular}{lllllll}
\toprule
  baselines &                MAPE &                 MSE & $N_{\text{inv}}$ &  $\epsilon_{\text{inv}}$ &               NL NSW \\
\midrule
         MC &  6.88e-02 (7.2e-03) &  8.70e-03 (1.0e-03) &      4.00 (1.10) &      3.88 (0.26) &  4.97e-02 (1.2e-02)  \\
       Owen &      0.15 (1.1e-02) &  3.47e-02 (4.5e-03) &      7.20 (1.85) &      8.05 (0.56) &      0.12 (2.3e-02)  \\
      Sobol &      0.27 (1.1e-02) &  9.55e-02 (6.4e-03) &  1.40e+01 (1.41) &  1.33e+01 (0.49) &      0.55 (4.5e-02)  \\
 stratified &  5.80e-02 (1.3e-02) &  6.31e-03 (2.2e-03) &      5.60 (1.17) &      3.01 (0.68) &      0.20 (2.2e-02)  \\
     kernel &  7.11e-02 (8.8e-03) &  6.42e-03 (1.2e-03) &      6.80 (1.62) &      3.77 (0.42) &         6.19 (0.20)  \\
     \midrule
   Ours (0) &      0.22 (5.6e-02) &      0.13 (8.4e-02) &      7.60 (1.60) &  1.05e+01 (2.46) &         1.14 (0.56)  \\
   Ours (2) &  6.34e-02 (9.7e-03) &  7.28e-03 (1.9e-03) &      2.80 (1.20) &      3.56 (0.49) &  \textbf{3.70e-03} (1.4e-03)  \\
   Ours (5) &  \textbf{3.02e-02} (3.5e-03) &  \textbf{1.73e-03} (3.9e-04) &      4.40 (0.75) &      \textbf{1.68} (0.18) &  3.83e-03 (1.2e-03)  \\
 Ours (100) &  3.76e-02 (3.2e-03) &  3.12e-03 (8.2e-04) &      \textbf{1.60} (0.75) &      2.18 (0.22) &  2.65e-02 (3.6e-03)  \\
\bottomrule
\end{tabular}
        }
    \label{tab:feature-adult-overall}
\end{table}
\begin{table}[!ht]
    \centering
    \caption{Evaluation of $\varphi_i$ within \textbf{P4.} using the iris dataset trained on a $k$-NN classifier.
    }
    \resizebox{\linewidth}{!}{
\begin{tabular}{lllllll}
\toprule
  baselines &                MAPE &                 MSE &    $N_{\text{inv}}$ &  $\epsilon_{\text{inv}}$&           NL NSW \\
\midrule
         MC &  9.80e-02 (1.7e-02) &  6.44e-03 (1.2e-03) &         0.80 (0.49) &       1.21 (0.21) &     2.29 (0.12) \\
       Owen &      0.33 (2.0e-02) &      0.12 (1.4e-02) &         1.20 (0.49) &       3.28 (0.27) &  \textbf{1.06} (6.0e-02) \\
      Sobol &      1.11 (1.8e-02) &      0.62 (1.9e-02) &  0.00e+00 (0.0e+00) &   1.39e+01 (0.24) &  1.44 (4.6e-02) \\
 stratified &      0.56 (3.4e-02) &      0.38 (3.4e-02) &         0.80 (0.49) &       5.57 (0.47) &     6.78 (0.14)  \\
     kernel &         4.53 (0.88) &     2.08e+01 (8.15) &         3.20 (1.36) &  5.83e+01 (12.91) &     3.16 (1.07)  \\
     \midrule
   Ours (0) &  \textbf{5.24e-02} (1.3e-02) &  \textbf{2.63e-03} (5.2e-04) &         \textbf{0.40} (0.40) &       \textbf{0.75} (0.14) &  2.21 (5.6e-02) \\
   Ours (2) &  6.20e-02 (9.3e-03) &  3.09e-03 (6.5e-04) &         \textbf{0.40} (0.40) &       0.86 (0.12) &  2.37 (2.8e-02) \\
   Ours (5) &  8.29e-02 (9.9e-03) &  4.59e-03 (1.1e-03) &         0.80 (0.49) &       1.04 (0.13) &  2.30 (1.0e-01)  \\
 Ours (100) &      0.10 (9.2e-03) &  5.78e-03 (1.0e-03) &         1.60 (0.40) &       1.31 (0.12) &  2.42 (2.1e-02)  \\
\bottomrule
\end{tabular}
        }
    \label{tab:feature-iris-overall}
\end{table}
\begin{table}[!ht]
    \centering
    \caption{Evaluation of $\varphi_i$ within \textbf{P4.} using the covertype dataset with $n=7$ principal features and $2000$ randomly drawn data samples trained on a MLP.
    }
    \resizebox{\linewidth}{!}{
\begin{tabular}{lllllll}
\toprule
  baselines &                MAPE &                 MSE & $N_{\text{inv}}$ &  $\epsilon_{\text{inv}}$ &               NL NSW \\
\midrule
         MC &  5.05e-02 (8.8e-03) &  4.86e-03 (2.0e-03) &      1.60 (0.40) &      2.67 (0.54) &      0.16 (1.6e-02)  \\
       Owen &  9.98e-02 (1.5e-02) &  1.52e-02 (3.2e-03) &      1.60 (0.40) &      4.76 (0.61) &      0.18 (3.5e-02)  \\
      Sobol &      0.25 (8.3e-03) &      0.16 (1.1e-02) &      8.00 (0.63) &  1.37e+01 (0.64) &      0.95 (5.0e-02)  \\
 stratified &  4.30e-02 (5.8e-03) &  2.64e-03 (7.1e-04) &      0.40 (0.40) &      2.04 (0.28) &      0.50 (4.5e-03)  \\
     kernel &      0.13 (2.6e-02) &  1.25e-02 (4.0e-03) &      2.40 (1.47) &      4.83 (0.84) &         7.27 (0.49)  \\
     \midrule
   Ours (0) &      0.22 (7.2e-02) &      0.12 (7.0e-02) &      4.00 (1.67) &  1.07e+01 (3.52) &         2.03 (1.09)  \\
   Ours (2) &  \textbf{3.56e-02} (1.0e-02) &  2.75e-03 (1.1e-03) &      \textbf{1.20} (0.80) &      1.79 (0.44) &  1.43e-02 (3.1e-03)  \\
   Ours (5) &  3.84e-02 (5.7e-03) &  2.68e-03 (1.4e-03) &      \textbf{1.20} (0.49) &      1.79 (0.41) &  \textbf{4.73e-02} (7.6e-03)  \\
 Ours (100) &  3.19e-02 (6.4e-03) &  \textbf{2.12e-03} (9.6e-04) &      \textbf{1.20} (0.49) &      1.64 (0.40) &      0.13 (4.7e-03)  \\
\bottomrule
\end{tabular}
        }
    \label{tab:feature-covertype-overall}
\end{table}

\subsection{Noisy Label Detection}
Data valuation is a popular use case of Shapley value~\citep{pmlr-v97-ghorbani19c,Ghorbani2020ADF}. One application scenario of data valuation is to detect label noise. Data can be mis-labelled during crowd-sourcing or if the dataset itself is poisoned. While~\cite{pmlr-v97-ghorbani19c,Kwon2022} have demonstrated the effectiveness of data valuation in noisy label detection using, the accuracy is compromised when training examples receive valuations with low FS. A training example which receives a valuation much lower than its theoretical valuation may be mis-classified as noisy and \textit{vice versa}. In this experiment, we show that GAE outperforms other estimation methods on the noisy label detection task. We randomly select $100$ training examples from MNIST, breast cancer, and synthetic Gaussian datasets. We run bootstrapping with $25$ permutations and then set $20000$ total budget using various estimation methods. We compare the performance on two valuation metrics, namely Data Shapley~\citep{pmlr-v97-ghorbani19c} and Beta Shapley~\citep{Kwon2022}. Both metrics are designed for data valuation. Beta Shapley builds on Data Shapley by assigning different weights to marginal contributions of different cardinalities (we choose $\text{Beta}(16,1)$ as the weight distribution as it has the best empirical performance in \citep{Kwon2022}). \cref{fig:f1_score} shows the performance comparison. GAE is able to perform well on various combinations of datasets, valuation metrics, and learning algorithms.
\begin{figure}[h!]
    \centering
    \begin{subfigure}[b]{0.4\textwidth}\centering
    \includegraphics[width=\textwidth]{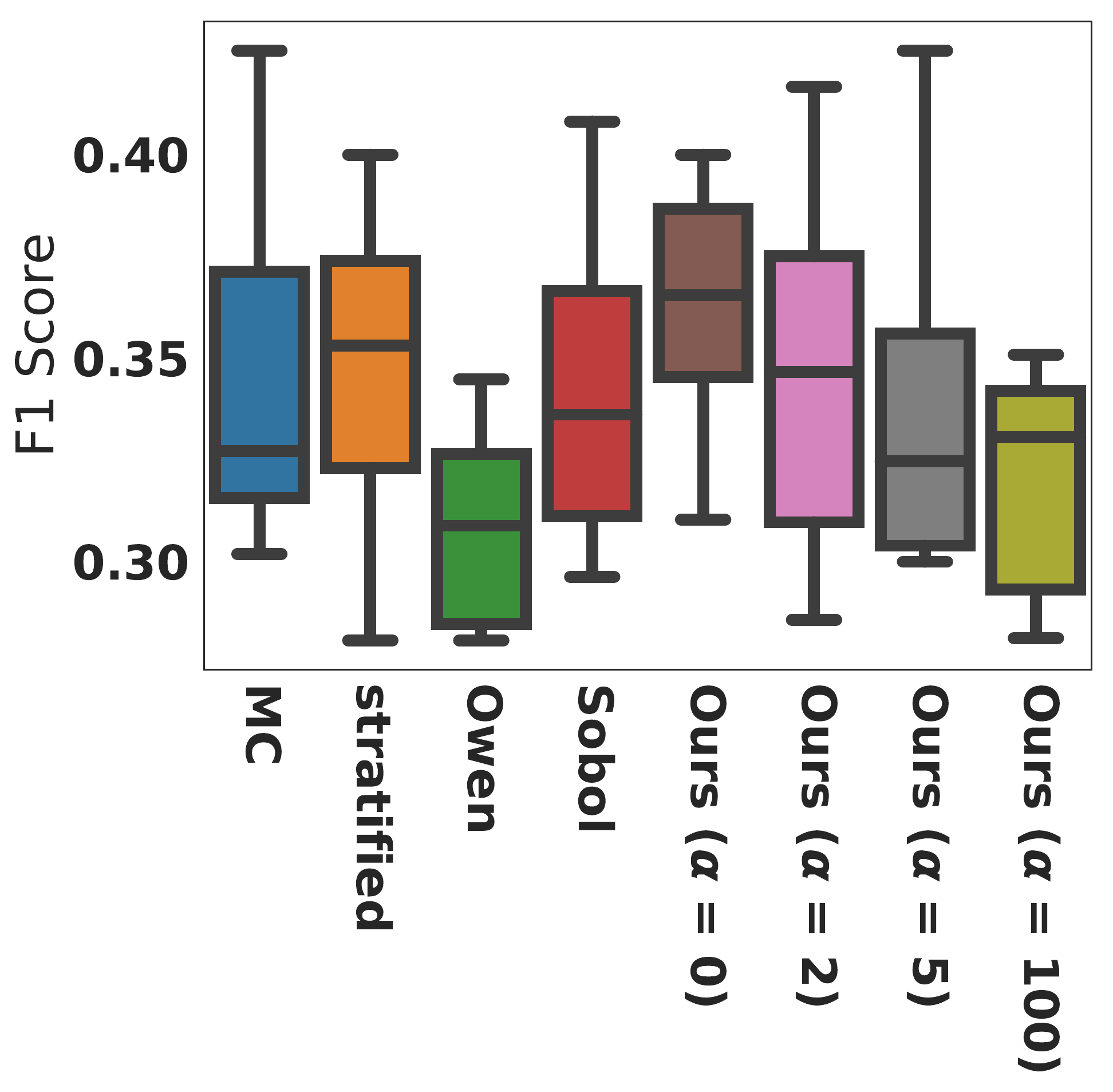}
    \caption{Data Shapley (MNIST SVC)}
    \end{subfigure}
     \begin{subfigure}[b]{0.4\textwidth}\centering
    \includegraphics[width=\textwidth]{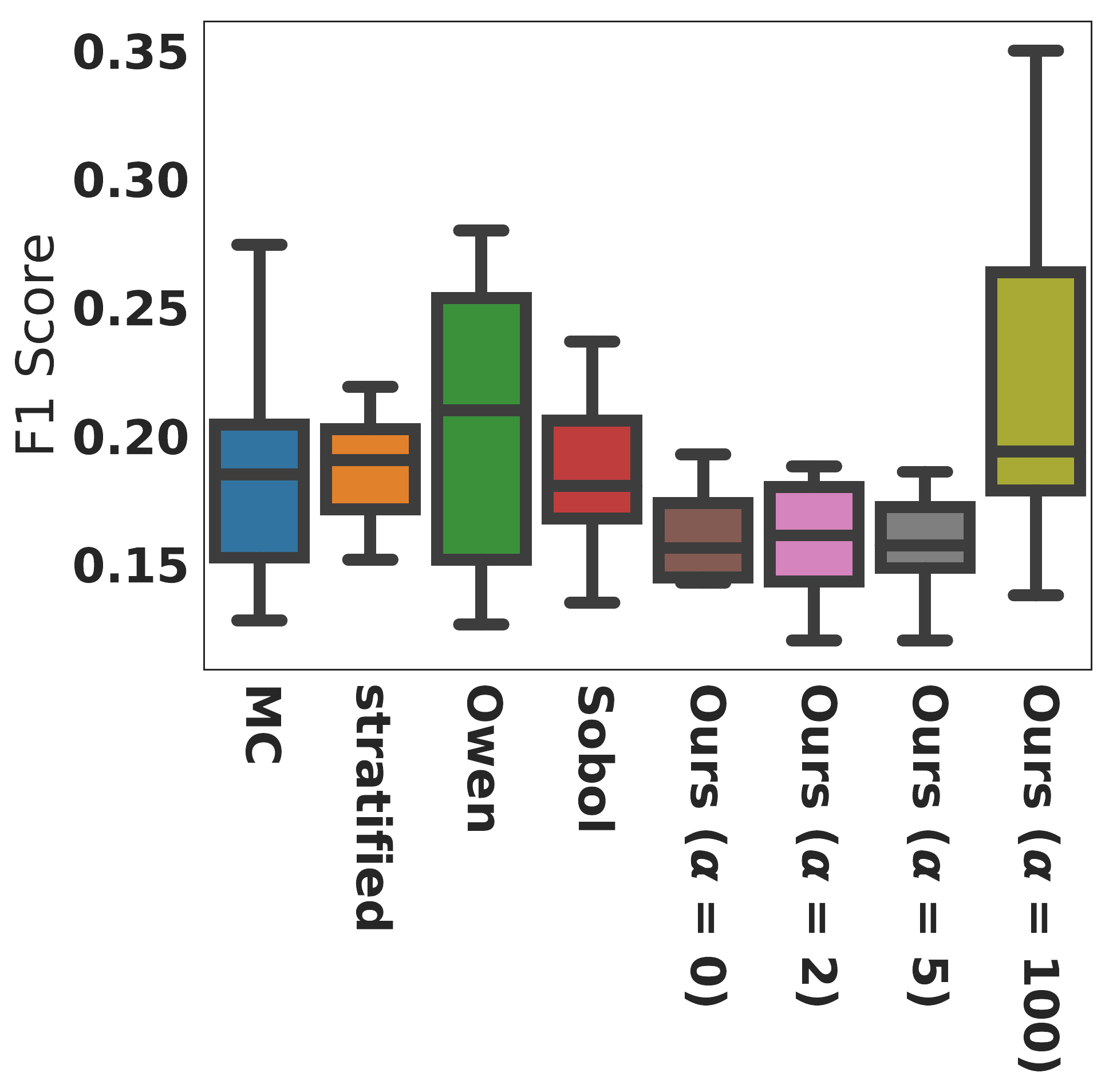}
    \caption{Beta Shapley (breast cancer SVC)}
    \end{subfigure}
    \\
    \begin{subfigure}[b]{0.4\textwidth}\centering
    \includegraphics[width=\textwidth]{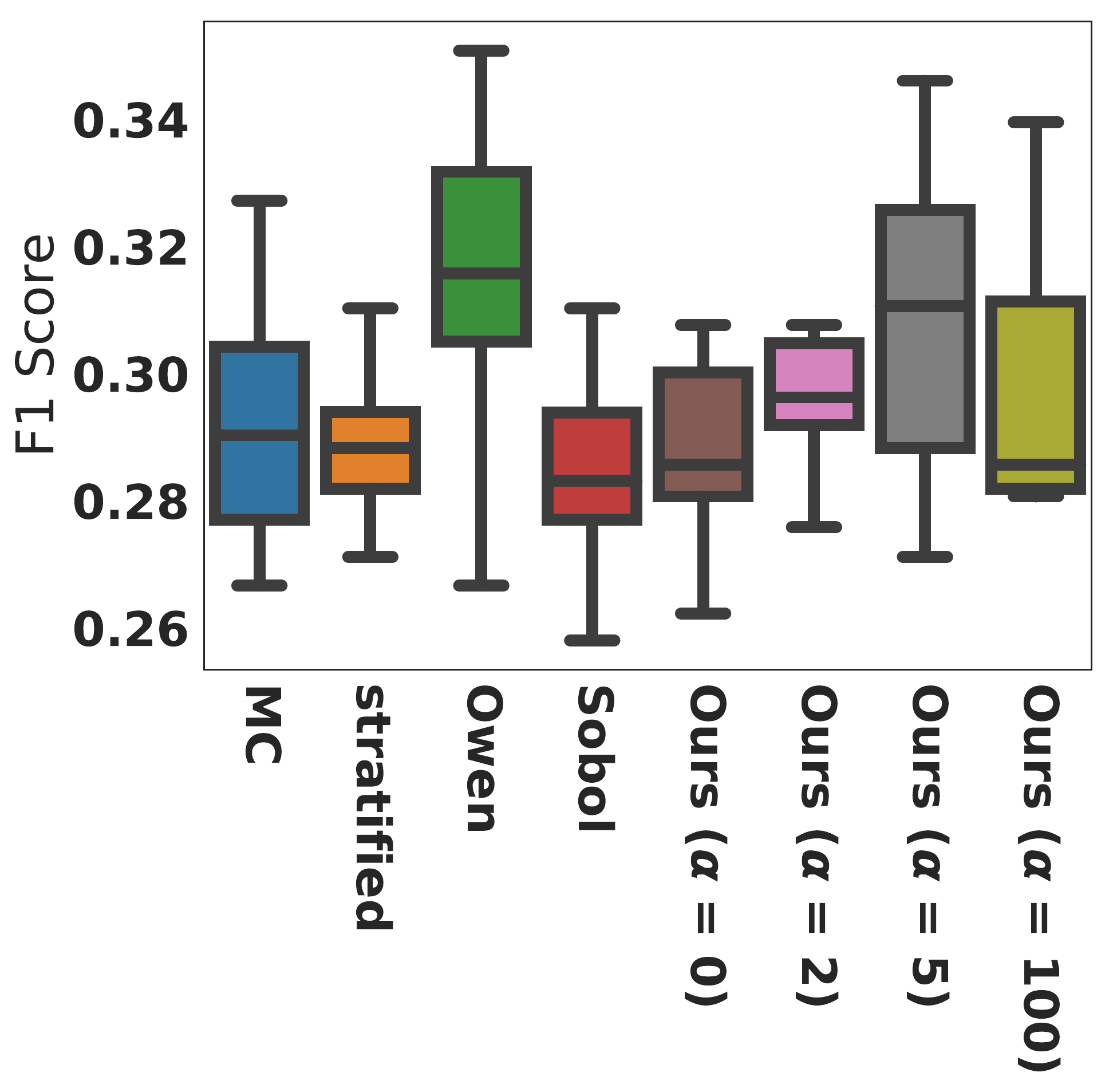}
    \caption{Data Shapley (Gaussian SVC)}
    \end{subfigure}
    \begin{subfigure}[b]{0.4\textwidth}\centering
    \includegraphics[width=\textwidth]{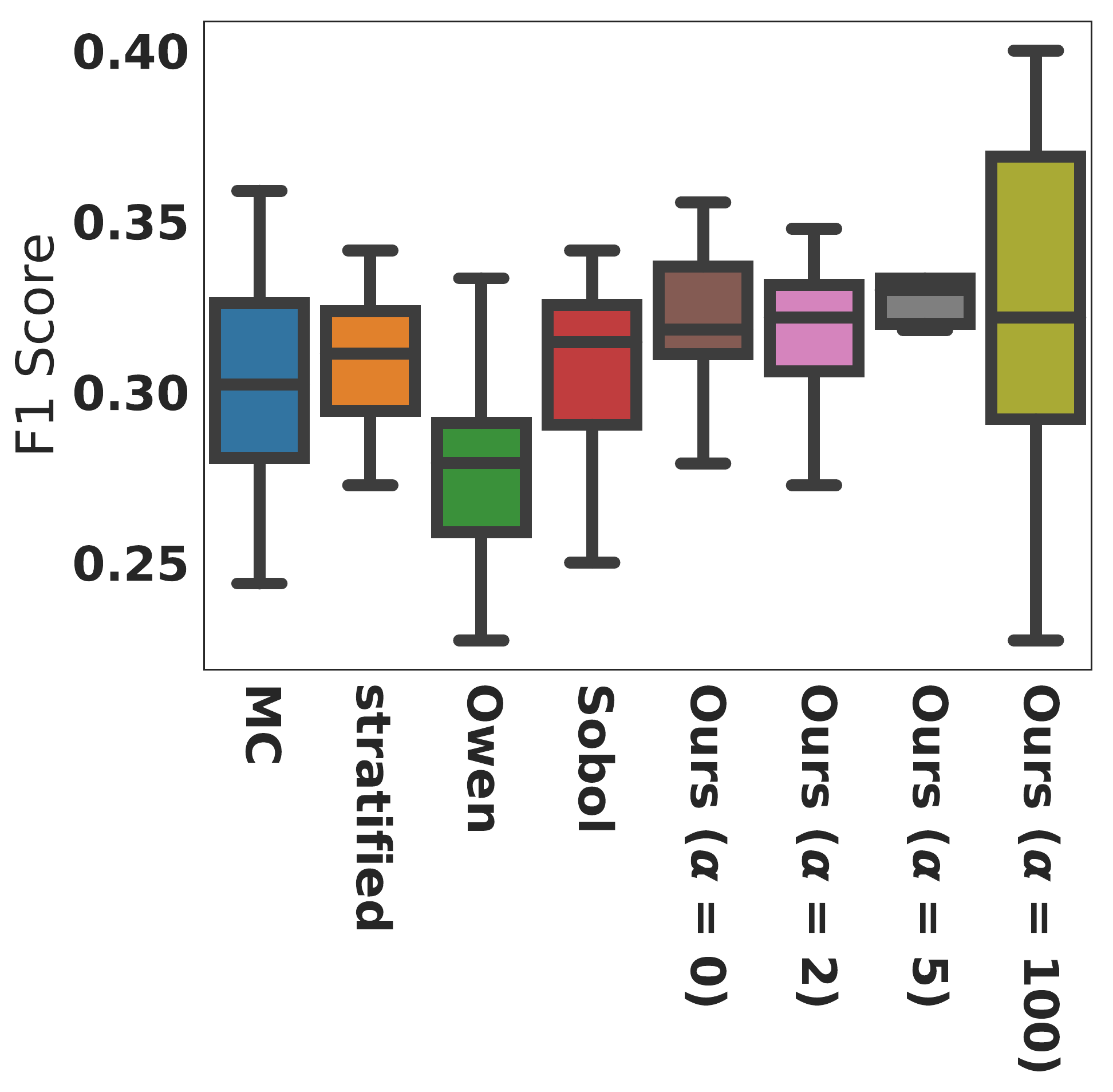}
    \caption{Beta Shapley (Gaussian SVC)}
    \end{subfigure}
    \caption{ Comparison between F1 score of noise detection task by various estimation methods on breast cancer and MNIST datasets. Caption of each plot shows the combination. KernelSHAP is omitted on MNIST dataset because its result is poor and obscures visualisation (medians are 0.194 on (a), 0.127 on (b), and 0.156 on (c) and (d)). }
    \label{fig:f1_score}
\end{figure}

\subsection{Training Example Addition and Removal}
In addition to noisy label detection, Shapley value is also applied to find out the value of a training example in training a ML model, as demonstrated by the point addition and removal experiment. In addition, we perform more experiments on both Data Shapley and Beta Shapley.
Results in \cref{fig:pt_addition} show most methods perform comparable except KernelSHAP.
The performance of KernelSHAP can be attributed to its estimates are less accurate (as shown by its higher MAPEs and MSEs in previous tables), consistent with \cref{fig:add-remove}.

\begin{figure}[ht!]
    \begin{subfigure}[b]{0.24\textwidth}\centering
    \includegraphics[width=\textwidth]{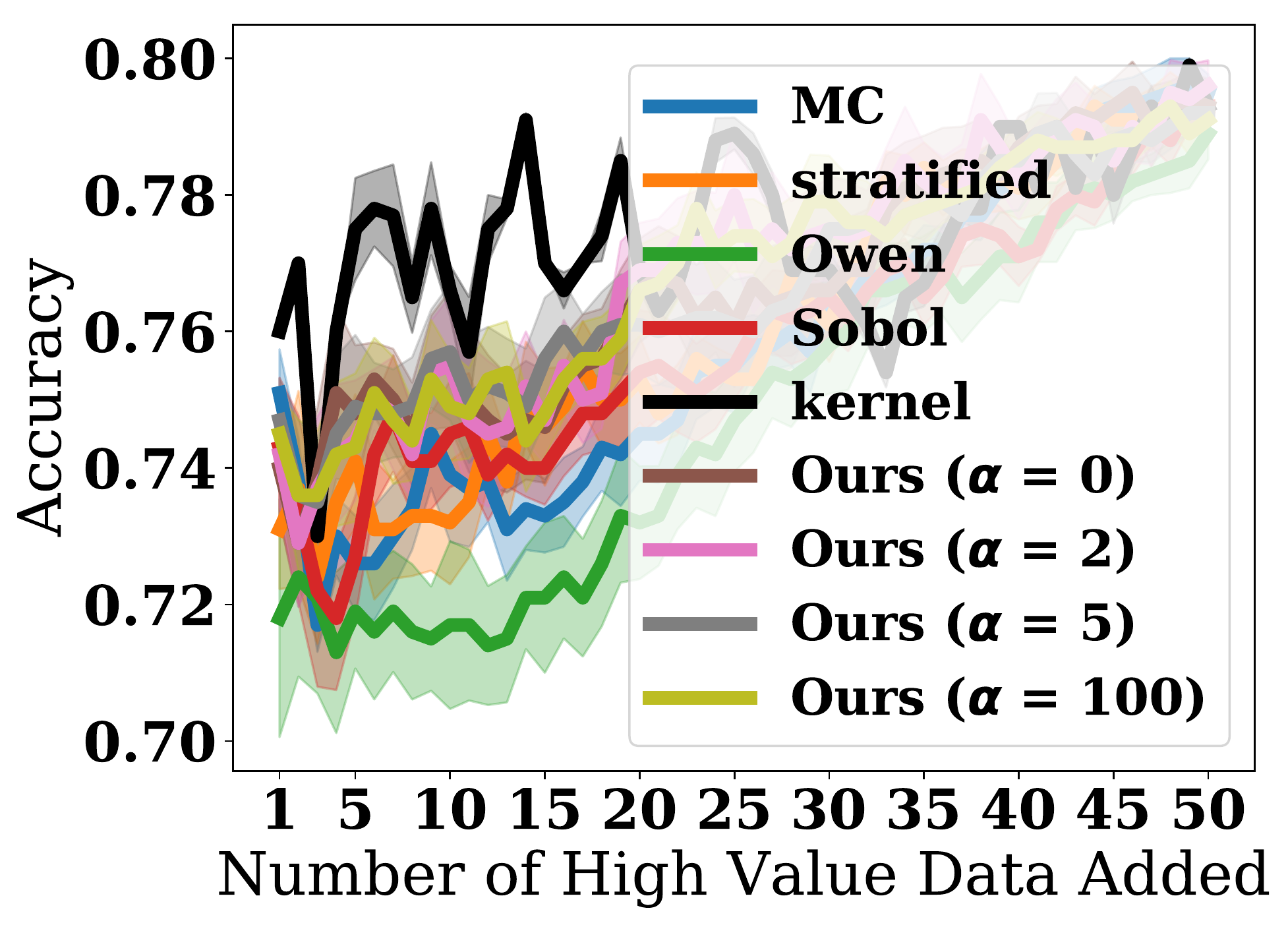}
    \caption{Data Shapley (Add High Value Data)}
    \end{subfigure}
\hfill
     \begin{subfigure}[b]{0.24\textwidth}\centering
    \includegraphics[width=\textwidth]{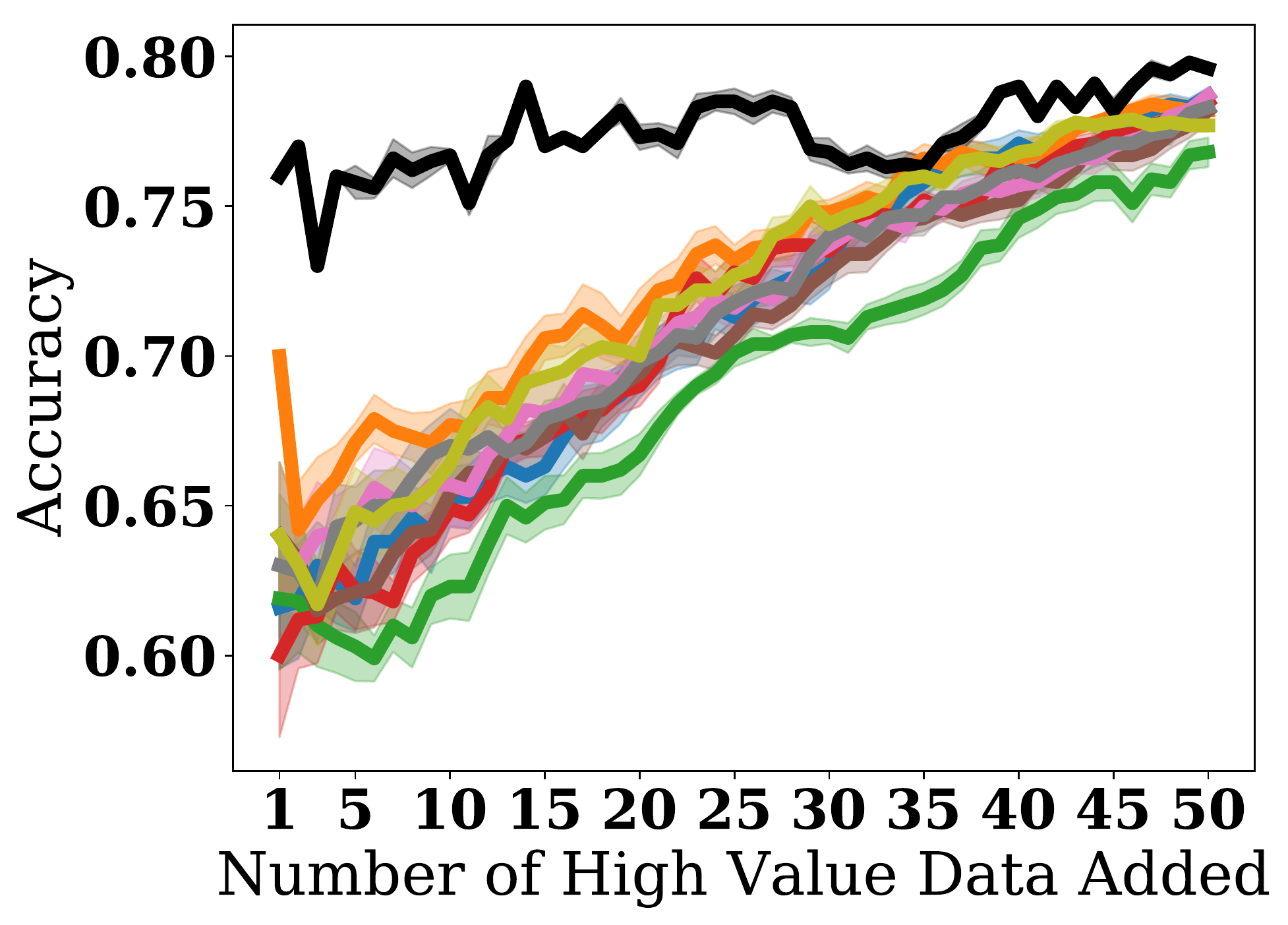}
    \caption{Beta Shapley (Add High Value Data)}
    \end{subfigure}
\hfill
    \begin{subfigure}[b]{0.24\textwidth}\centering
    \includegraphics[width=\textwidth]{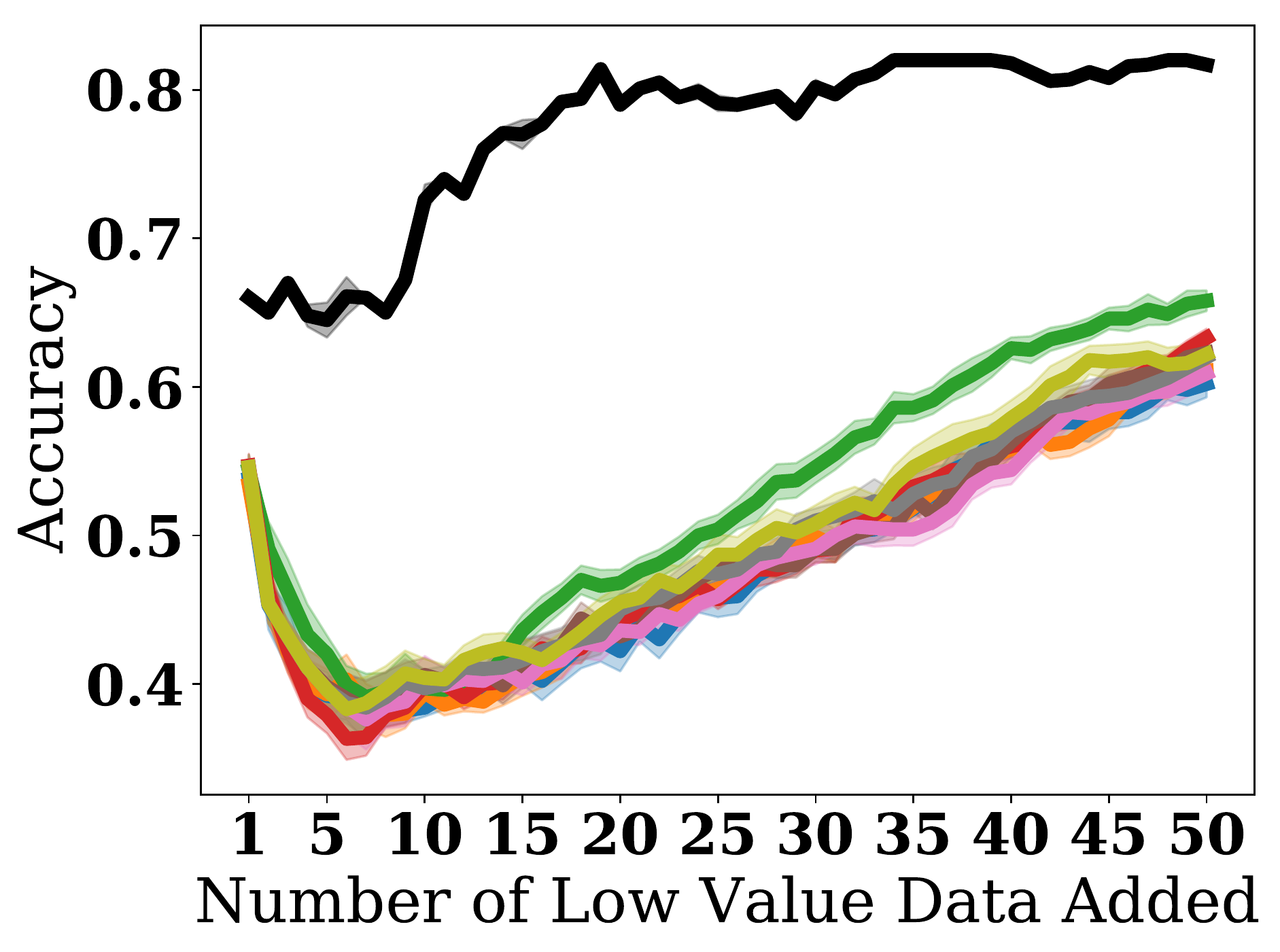}
    \caption{Data Shapley (Add Low Value Data)}
    \end{subfigure}
\hfill
    \begin{subfigure}[b]{0.24\textwidth}\centering
    \includegraphics[width=\textwidth]{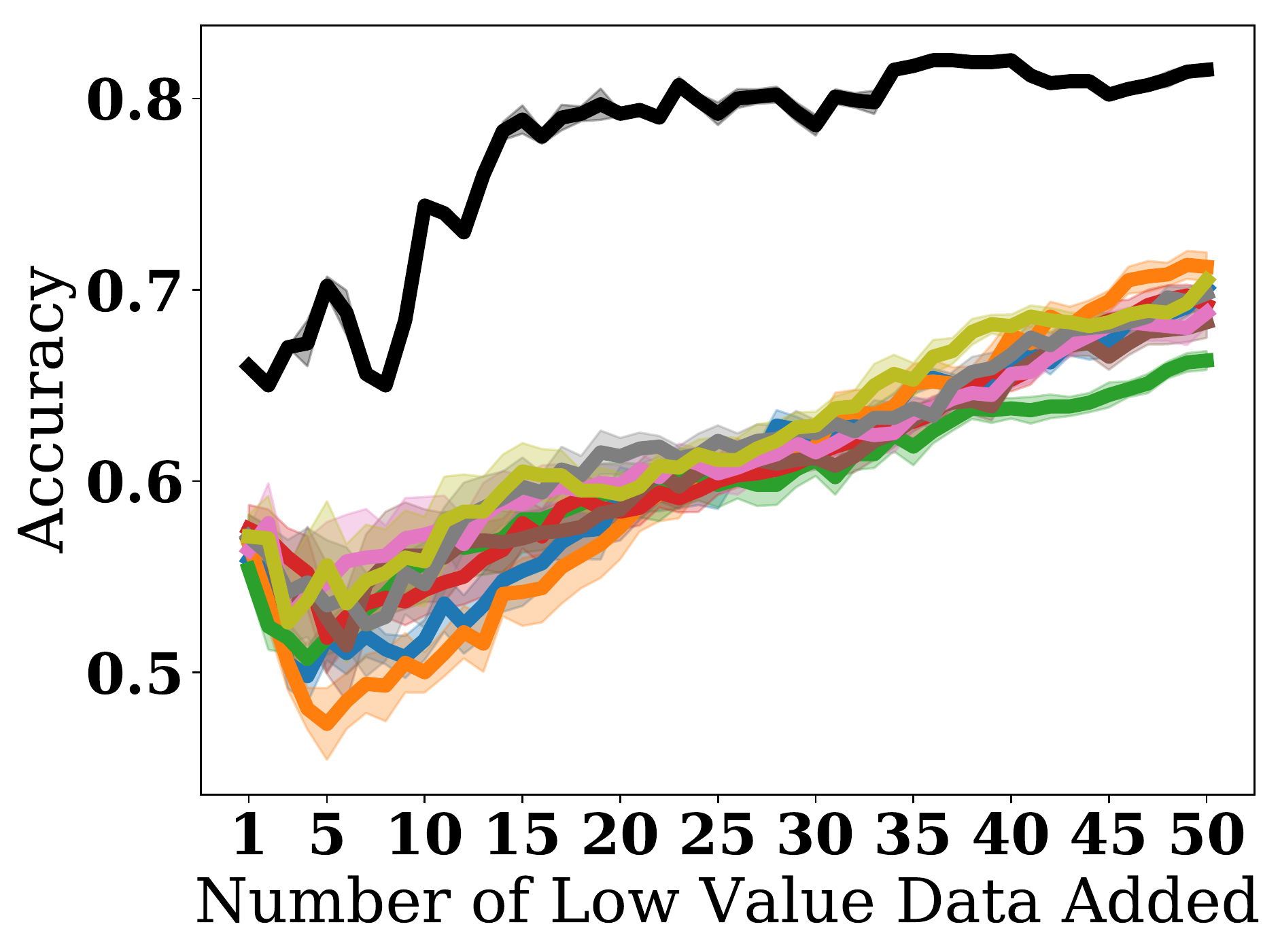}
    \caption{Beta Shapley (Add Low Value Data)}
    \end{subfigure}
\hfill
    \begin{subfigure}[b]{0.24\textwidth}\centering
    \includegraphics[width=\textwidth]{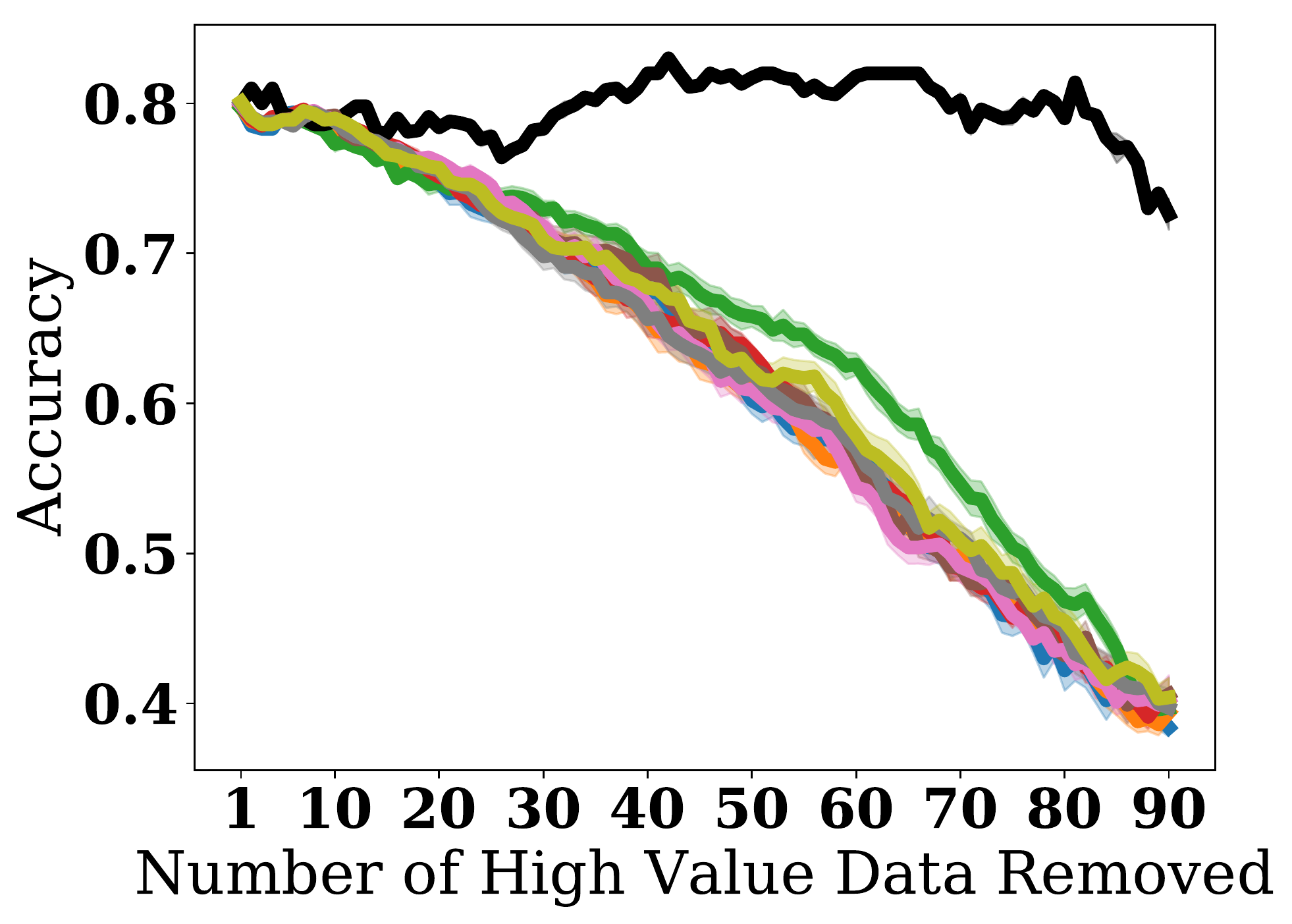}
    \caption{Data Shapley (Remove High Value Data)}
    \end{subfigure}
\hfill
     \begin{subfigure}[b]{0.24\textwidth}\centering
    \includegraphics[width=\textwidth]{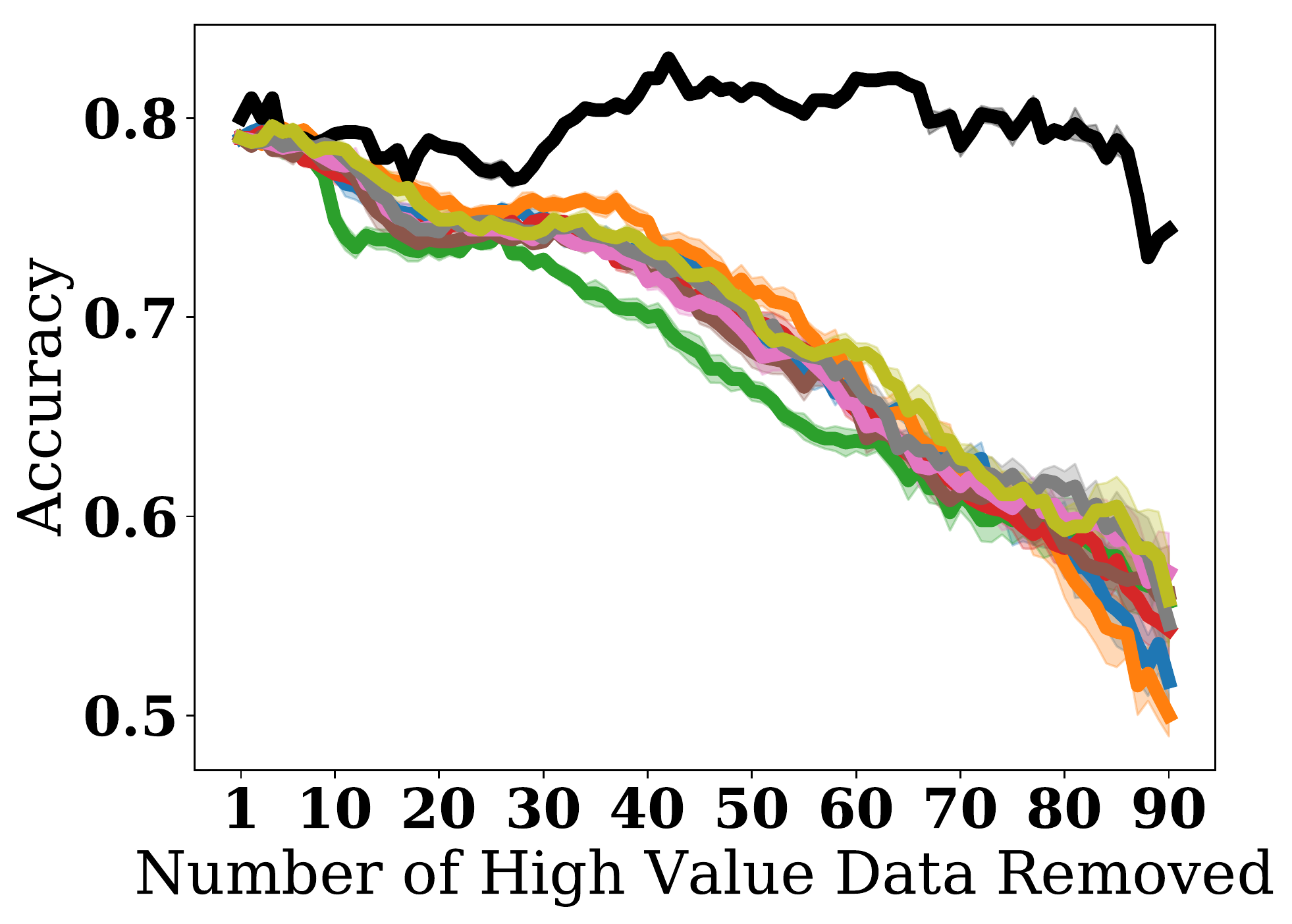}
    \caption{Beta Shapley (Remove High Value Data)}
    \end{subfigure}
\hfill
    \begin{subfigure}[b]{0.24\textwidth}\centering
    \includegraphics[width=\textwidth]{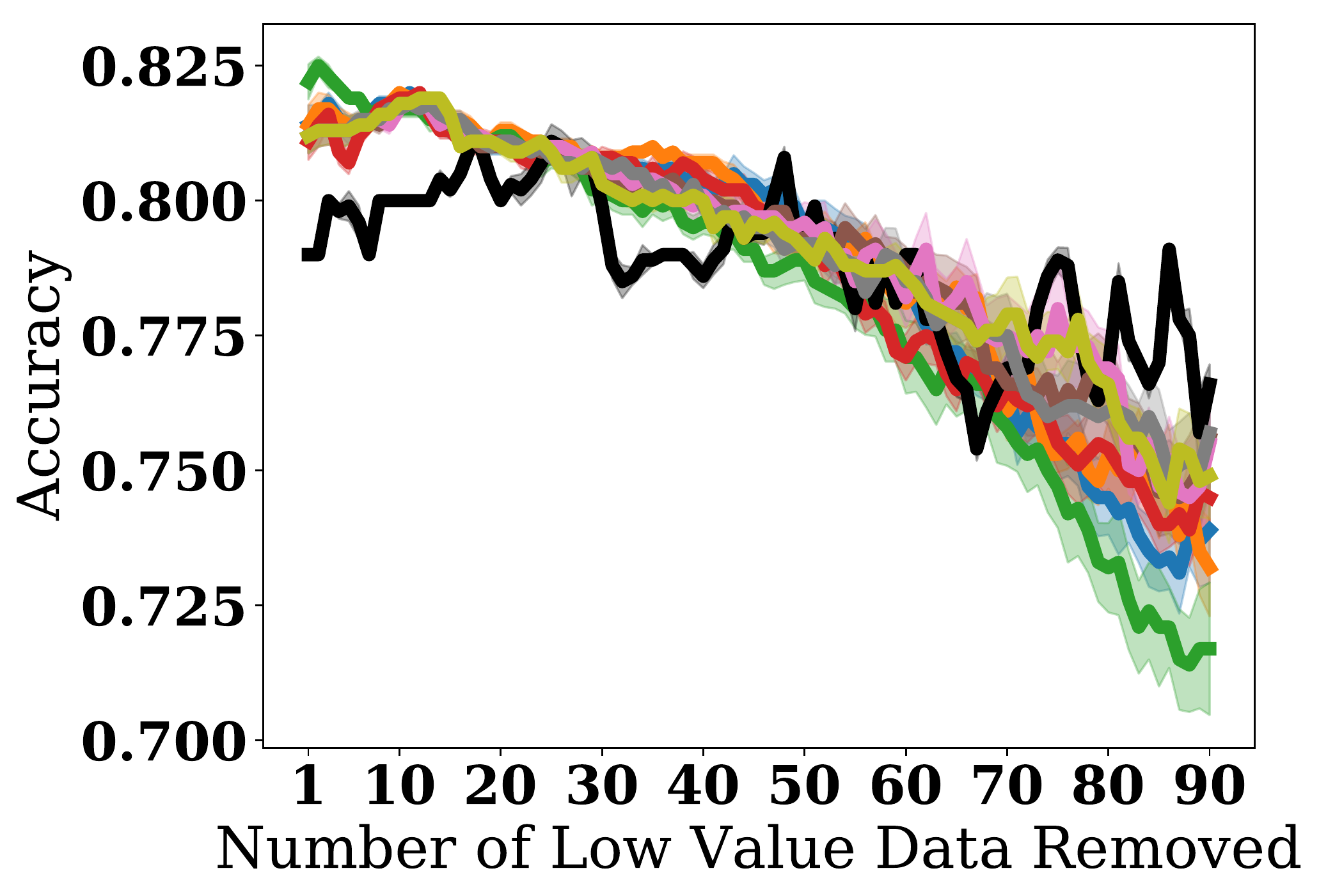}
    \caption{Data Shapley (Remove Low Value Data)}
    \end{subfigure}
\hfill
    \begin{subfigure}[b]{0.24\textwidth}\centering
    \includegraphics[width=\textwidth]{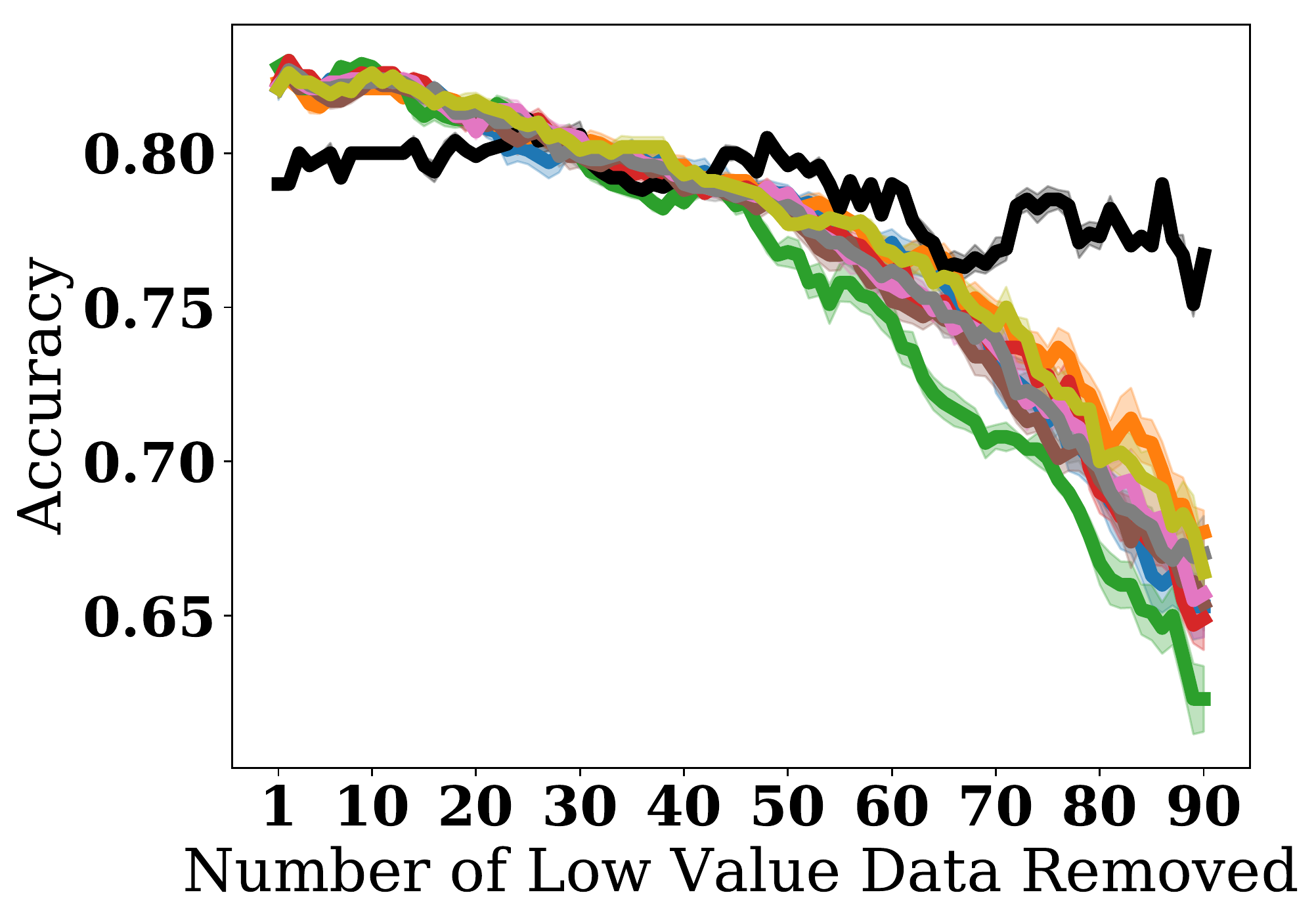}
    \caption{Beta Shapley (Remove Low Value Data)}
    \end{subfigure}
\hfill
    \caption{ Comparison between test accuracy by adding and removing data with large and small estimated Shapley value first (on synthetic Gaussian dataset trained on logistic regression classifier). Black lines represent KernelSHAP (omitted in main content). It can be observed that KernelSHAP does not produce a clear trend but stays horizontal in all plots.
     }
    \label{fig:pt_addition}
\end{figure}

\end{document}